\documentclass{article}

  \PassOptionsToPackage{numbers, compress}{natbib}


\usepackage[preprint]{neurips_2019}
\bibliographystyle{plainnat}
\usepackage{lipsum}
\usepackage{enumitem}
\usepackage[utf8]{inputenc} 
\usepackage[T1]{fontenc}    
\usepackage{hyperref}       
\usepackage{url}            
\usepackage{booktabs}       
\usepackage{amsfonts}       
\usepackage{nicefrac}       
\usepackage{microtype}      
\usepackage{mathtools}
\usepackage{multirow}  
\usepackage{xcolor}
\usepackage{dsfont}
\usepackage{multicol}
\usepackage{amsmath,amssymb,amsfonts,amsthm,bbm,graphicx}
\usepackage{algorithm}
\usepackage{algorithmic}
\usepackage{amsthm}
\usepackage{xspace}
\usepackage{color}
\usepackage{hyperref}
\hypersetup{colorlinks=true}
\definecolor{darkgreen}{RGB}{0,96,0}
\definecolor{darkred}{RGB}{255,0,0}
\hypersetup{linkcolor=darkgreen,citecolor=blue,urlcolor=black,filecolor=black}  
\tolerance=1000  
\newcommand{\indic}{\mathds{1}}
\newcommand{\EE}{\mathbb{E}}
\newcommand{\reals}{\mathbb{R}}
\newcommand{\Lip}{L}
\newcommand{\Lipstar}{L^*}
\newcommand{\Dev}{D}
\newcommand{\Devstar}{D^*}
\newcommand{\Sensitivity}{S^*}
\newcommand{\iid}{\text{i.i.d.}}
\newcommand{\Var}{\mathrm{Var}}
\newcommand{\Paren}[1]{\left(#1\right)}
\newcommand{\Abs}[1]{\left|#1\right|}
\newcommand{\Br}[1]{\left[#1\right]}
\newcommand{\Brace}[1]{\left\{#1\right\}}
\newcommand{\upto}{,\ldots,}
\newcommand{\union}{\mathop{\cup}}

\newcommand{\degree}{{d_{n}}}
\newcommand{\PP}{\mathbb{P}}
\newcommand{\Poi}{\mathrm{Poi}}
\newtheorem{Theorem}{Theorem}

\newtheorem{theorem}{Theorem}
\newtheorem{lemma}[theorem]{Lemma}

\DeclareMathOperator*{\Exp}{\EE}

\title{Unified Sample-Optimal Property Estimation\\in Near-Linear Time}

\author{
  Yi~Hao\\
  Dept. of Electrical and Computer Engineering \\
 University of California, San Diego\\
  \texttt{yih179@ucsd.edu}
 \And
Alon~Orlitsky \\
 Dept. of  Electrical and Computer Engineering\\
 University of California, San Diego\\
  \texttt{alon@ucsd.edu}
}

\begin{document}

\maketitle

\begin{abstract}
We consider the fundamental learning problem of estimating properties
of distributions over large domains. Using a novel
piecewise-polynomial approximation technique, we derive the first
unified methodology for constructing sample- and time-efficient estimators for
all sufficiently smooth, symmetric and non-symmetric, additive
properties. This technique yields near-linear-time computable
estimators whose approximation values are asymptotically optimal and
highly-concentrated, resulting in the first: 1) estimators achieving the $\mathcal{O}(k/(\varepsilon^2\log k))$ 
min-max $\varepsilon$-error sample complexity for all $k$-symbol Lipschitz properties; 
2) unified near-optimal differentially private estimators for a variety of properties; 
3) unified estimator achieving optimal bias and near-optimal variance for five important properties; 
4) near-optimal sample-complexity estimators for several important symmetric properties over both domain sizes and confidence levels.
In addition, we establish a McDiarmid's inequality under Poisson sampling, which is of independent interest. 

\end{abstract}

\section{Introduction}
Let $\Delta_k$ be the collection of distributions over the alphabet
$[k] := \{1, 2, \ldots, k\}$, and $[k]^*$ be the set of finite
sequences over $[k]$. 
In many learning applications, we are given \iid\ samples
$X^n:= X_1, X_2, \ldots, X_n$ from an unknown distribution 
$\vec{p}:=(p_1, p_2, \ldots, p_k)\in \Delta_k$, and using these samples we would 
like to infer certain distribution properties.

A \emph{distribution property} is a mapping $f:\Delta_k\to \reals$.
Often, these properties are \emph{symmetric} and
\emph{additive}, namely, $f(\vec{p})=\sum_{i\in [k]}f(p_i)$. For example,
Shannon entropy, support size, and three more properties in Table~\ref{table1}. 
Many other important properties are 
additive but not necessarily symmetric,
namely, $f(\vec{p})=\sum_{i\in [k]}f_i(p_i)$.
For example, Kullback-Leibler (KL) divergence or $\ell_1$ distance \vspace{-0.1em}
to a fixed distribution $q$, and distances weighted by the observations $x_i$,\vspace{-0.05em}
e.g., $\sum_{i\in [k]}x_i\cdot|p_i-q_i|$. 
\vspace{-0.2em} 
A \emph{property estimator} 
is a mapping $\hat{f}: [k]^* \to
\mathbb{R}$, where $\hat{f}(X^n)$ approximates $f(\vec{p})$.

Property estimation has attracted significant attention due to its many applications in various disciplines:
Shannon entropy is the principal information measure in numerous machine-learning~\citep{chowliu} and neurosicence~\citep{snm} algorithms;
support size is essential in population~\citep{ca84} and vocabulary size~\citep{te87} estimation;
support coverage arises in ecological~\citep{ecological}, genomic~\citep{klr99}, and database~\citep{database} studies; 
$\ell_1$ distance is useful in hypothesis testing~\citep{hyptest} and classification~\citep{classification};  
KL divergence reflects the performance of
investment~\citep{investment}, compression~\citep{info}, and on-line
learning~\citep{onlinelearning}. 

For data containing sensitive information, we may need to design 
special property estimators that preserve individual privacy.
Perhaps the most notable notion of privacy is
\emph{differential privacy (DP)}. In the context of property
estimation~\citep{private}, we say that an estimator $\hat{f}$ is
\emph{$\alpha$-differentially private} if for any $X^n$ and $Y^n$ that
differ by at most one symbol, $\Pr(\hat{f}(X^n)\in
S)/\Pr(\hat{f}(Y^n)\in S)\leq e^\alpha$ for any measurable set
$S\subset{\mathbb{R}}$. We consider designing property
estimators that achieve small estimation error~$\varepsilon$, with
probability at least $2/3$, while maintaining
$\alpha$-privacy.

The next section formalizes our discussion and presents some of
the major results in the area. 

\subsection{Problem formulation and prior results}
\paragraph{Property estimation} Let $f$ be a property over $\Delta_k$. 
The $(\varepsilon,\delta)$-\emph{sample complexity of an estimator
$\hat{f}$} for $f$
is the smallest number of samples required to estimate $f(\vec{p})$
with accuracy $\varepsilon$ and confidence $1-\delta$, for all
distributions in $\Delta_k$,
\[
C_f(\hat{f},\varepsilon, \delta):=\min\{n: \Pr_{X^n\sim p}(|\hat{f}(X^n)-f(\vec{p})|> \varepsilon)
\leq \delta,\ \forall \vec p\in \Delta_k\}.
\]

The $(\varepsilon, \delta)$-\emph{sample complexity} of estimating $f$
is the lowest $(\varepsilon,\delta)$-sample complexity of any estimator,
\[
C_f(\varepsilon,\delta):=\min{}_{\!\hat{f}}\ C_f(\hat{f},\varepsilon,\delta).
\]
Ignoring constant factors and assuming $k$ is large, Table~\ref{table1} summarizes some of 
the previous results~\cite{ ICML,  jiaol1, alonunseen, VV11, improveentro,yihongsupp, yihongentro} 
for $\delta={1}/{3}$. Following the formulation in~\cite{ ICML, VV11, yihongsupp}, for support size, 
we normalize it by $k$ and replace $\Delta_k$ by the collection of distributions $\vec{p}\in \Delta_k$ 
satisfying $p_i\geq\frac1k, \forall i\in[k]$. For support coverage~\cite{ICML, alonunseen}, 
the expected number of distinct symbols in $m$ samples, we normalize it by the 
given parameter $m$ and assume that $m$ is sufficiently large.
\begin{table}[h]
  \caption{$C_f(\varepsilon, 1/3)$ for some properties}
  \label{table1}
  \centering
  \begin{tabular}{llll}
    \toprule
    Property & $f_i(p_i)$ & $C_f(\varepsilon, {1}/{3})$\\ 
    \midrule
    Shannon entropy & $p_i\log \frac{1}{p_i}$& ${\frac{k}{\log k}\frac{1}{\varepsilon}}$ \\ 
    Power sum of order $a$ & $p_i^a, a<1$ &  ${\frac{k^{{1}/{a}}}{\varepsilon^{{1}/{a}} \log k}}$\\ 
    Distance to uniformity &  $\left|p_i-\frac{1}{k}\right|$ & ${\frac{k}{\log k}\frac{1}{\varepsilon^2}}$ \\ 
    Normalized support size  & $\frac{\indic_{p_i>0}}{k}$ & ${\frac{k}{\log k}{\log^2{\frac{1}{\varepsilon}}}}$ \\ 
    Normalized support coverage & $\frac{1-(1-p_i)^m}{m}$ & ${\frac{m}{\log m}\log\frac{1}{\varepsilon}}$ \\ 
    \bottomrule
  \end{tabular}
\end{table}

\paragraph{Min-max MSE} A closely related characterization of an estimator's performance is
the \emph{min-max mean squared error (MSE)}.
For any unknown distribution $\vec{p}\in\Delta_k$, the MSE of a property
estimator $\hat{f}$ in estimating $f(\vec{p})$, using $n$ samples
from $\vec{p}$, is 
\[
R_n(\hat{f},f, \vec{p}):=\Exp_{X^n\sim \vec{p}}(\hat{f}(X^n)-f(\vec{p}))^2.
\]
Since $\vec{p}$ is unknown, 
we consider the minimal possible worst-case MSE, or the \emph{min-max MSE}, for any property estimator in estimating property $f$,
\[
R_n(f, \Delta_k):=\min_{\hat{f}}\max_{\!\vec p\in \Delta_k} R_n(\hat{f},f, \vec{p}).
\]\par\vspace{-0.8em}
The property estimator $\hat{f}^{\text{m}}$ achieving the min-max MSE is
the \emph{min-max estimator}~\cite{ jiaoentro, jiaol1, yihongsupp, yihongentro}. 

Letting $\vec{p}_{\text{max}}:=\arg\max_{\vec{p}\in\Delta_k}R_n(\hat{f}^{\text{m}},f, \vec{p})$
be the worst-case distribution for $\hat{f}^{\text{m}}$,
we can express the min-max MSE as the sum
of two quantities: the \emph{min-max squared bias},
\[
\text{Bias}_n^2(\hat{f}^{\text{m}}):=(\EE_{X^n\sim \vec{p}_{\text{max}}}[\hat{f}^{\text{m}}(X^n)]-f(\vec{p}_{\text{max}}))^2,
\] 
and the \emph{min-max variance},
\[
\Var_n(\hat{f}^{\text{m}}):=\Var_{X^n\sim \vec{p}_{\text{max}}}(\hat{f}^{\text{m}}(X^n)).
\]

\paragraph{Private property estimation} Analogous to the non-private setting above, 
for an estimator $\hat{f}$ of $f$, let its
$(\varepsilon,\delta,\alpha)$-\emph{private sample complexity}
$C(\hat f,\varepsilon,\delta,\alpha)$ 
be the smallest number of samples that $\hat f$ requires to estimate $f(\vec{p})$
with accuracy $\varepsilon$ and confidence $1-\delta$, 
while maintaining $\alpha$-privacy for all
distributions $\vec p\in\Delta_k$.
The $(\varepsilon, \delta,\alpha)$-\emph{private sample complexity} of estimating $f$
is then
\[
C_f(\varepsilon,\delta,\alpha):=\min{}_{\!\hat{f}}\ C_f(\hat{f},\varepsilon,\delta,\alpha).
\]
For Shannon entropy, normalized support size,
 and normalized support coverage, the work of~\cite{priv_prop}
derived tight lower and upper bounds on $C_f(\varepsilon,{1}/{3}, \alpha)$.

\subsection{Existing methods}

There are mainly two types of methods introduced to estimate distribution properties:
plug-in, and approximation-empirical, which we briefly discuss below. 

\paragraph{Plug-in} Major existing plug-in estimators work for only symmetric properties,
and in general do not achieve the min-max MSEs' nor the optimal
$(\varepsilon,\delta)$-sample complexities. 
More specifically, 
the linear-programming based
methods initiated by~\cite{efron}, and
analyzed and extended in~\cite{VV11, powerlinear,  improveentro} achieve
the optimal sample complexities only for distance to uniformity and
entropy, for relatively large $\varepsilon$. The method
 basically learns the moments of the underlying distribution from its samples, and finds
a distribution whose (low-order) moments are consistent with these estimates.
A locally refined version of the linear-programming estimator~\cite{JiaoLLM} 
achieves optimal sample complexities for entropy, power sum, and 
normalized support size, but requires
polynomial time to be computed. This version yields a bias guarantee 
similar to ours over symmetric properties, yet its variance 
guarantee is often worse. 

Recently, the work of~\cite{ICML} showed that the profile maximum likelihood (PML)
estimator~\cite{OSVZ04}, an estimator that finds a distribution maximizing
 the probability of observing the multiset of empirical frequencies,
is sample-optimal for estimating entropy, distance to uniformity, 
and normalized support size and coverage. After the initial submission of the
current work, paper~\cite{HPML} showed that the PML approach and
its near-linear-time computable variant~\cite{C19} are
sample-optimal for any property that is symmetric, 
additive, and appropriately Lipschitz, including the four properties just mentioned.
This~establishes the PML estimator as the first universally sample-optimal plug-in approach 
for estimating symmetric properties. In comparison, the current work provides a unified 
property-dependent approach that is sample-optimal for several symmetric and 
non-symmetric properties. 

\paragraph{Approximation-empirical} The approximation-empirical method~\cite{jiaoentro,
  jiaol1, Paninski03,  yihongsupp, yihongentro} identifies a non-smooth part of the
underlying function $f_i$, replaces it 
by a polynomial $\tilde{f}_i$, and estimates the value of $p_i$
by its empirical frequency $\hat{p}_i$.  Depending on whether $\hat
p_i$ belongs to the non-smooth part or not, the method estimates
$f_i(p_i)$ by either the unbiased estimator of $\tilde{f}_i(p_i)$, or
the empirical plug-in estimator $f_i(\hat{p}_i)$. However, 
due to its strong dependency on both the function's
structure and the empirical estimator's performance, the method requires
significantly different case-by-case modification and analysis, and
may not work optimally for general additive properties. 

Specifically, 1) The efficacy of this method relies on the accuracy of the
empirical plug-in estimator over the smooth segments, which needs to
be verified individually for each property; 
2) Different functions often have non-smooth segments of different number, locations, and sizes; 
3) Combining the non-smooth and smooth segments estimators requires additional care: 
sometimes needs the knowledge of $k$, sometimes even needs a third estimator to ensure smooth transitioning. 

In addition, the method has also not been shown to achieve optimal results for general
Lipschitz properties, or many of the other properties covered by the 
new method in this paper.

\section{New methodology}
The preceding discussion showed that no existing generic
method efficiently estimates general additive properties. 
Motivated by recent advances in the field~\cite{ICML, JiaoLLM, H19, H18}, 
we derive the first generic method to construct sample-efficient
estimators for all sufficiently smooth additive properties. 

We start by approximating functions
of an unknown Bernoulli success probability from its \iid\ samples.
For a wide class of real functions, we propose a
piecewise-polynomial approximation technique,
and show that it yields small-bias estimators that are 
exponentially concentrated around their expected estimates. 
This provides a different view of property estimation that allows us 
to simplify the proofs and broaden the range of the results. 
For details please see Section~\ref{sec:bern_functions}. 

\paragraph{High-level idea} 
The idea behind this methodology is natural. By the Chernoff bound for binomial random variables, 
the empirical count of a symbol in a given sample sequence will not differ from its  
mean value by too much. Hence, based on the empirical frequency, we can roughly infer 
which ``tiny piece'' of $[0,1]$ the corresponding probability lies in. However, due to randomness, a symbol's empirical frequency may often differ from 
 the true probability value by a small quantity, and plugging it into the function will cause certain amount of bias. 
 
 To correct this bias,
 we first replace the function by its low-degree polynomial approximation \emph{over that ``tiny piece''}, and then compute an unbiased estimator of this polynomial. 
 In other words, we use this polynomial as a proxy for the estimation task. We want the degree of the polynomial to be small since this will generally reduce the unbiased estimator's variance; we focus on approximating only a tiny piece of the function because this will reduce the polynomial's approximation error (bias). Given any additive property $f(\vec{p})=\sum_{i\in[k]} f_i(p_i)$, we apply\vspace{-0.15em}
this technique to each \vspace{-0.1em}real function $f_i$ and use the corresponding sum to estimate
$f(\vec{p})$.  Henceforth we use $\hat{f}^*$ to denote this explicit estimator.  \vspace{-0.4em}

\section{Implications and new results}\label{sec:result}\vspace{-0.5em}
Because of its conceptual simplicity, the methodology described in the last section has strong
implications for estimating all sufficiently smooth additive properties, which we present as theorems. 

{\bf Theorem~\ref{cor:sample_guarantees}} in Section~\ref{sec:prop} is the root of all the following results,
while it is more abstract and illustrating it requires much more effort. Hence for clarity, we begin by presenting several more concrete results. \vspace{-0.5em}

\paragraph{Correct asymptotic} For most of the properties considered in the paper, even the naive empirical-frequency estimator is sample-optimal in the large-sample regime (termed "simple regime" in~\cite{yihongsupp}) where the number of samples $n$ far exceeds the alphabet size $k$. The interesting regime, addressed in numerous recent publications~\cite{JiaoLLM, H19, H18,  HPML, jiaoentro, VV11, improveentro, yihongentro}, is where $n$ and $k$ are comparable, e.g., differing by at most a logarithmic factor. In this range, $n$ is sufficiently small that sophisticated techniques can help, yet not too small that nothing can be estimated. Since $n$ and $k$ are given, one can decide whether the naive estimator suffices, or sophisticated estimators are needed. For most of the results presented here, the technical significance stems in their nontriviality in this large-alphabet regime. 
For this reason, we will also assume that $\log k\asymp\log n$ throughout the paper. 

\subsection*{Implication 1: Lipschitz property estimation}\vspace{-0.3em}

 An additive property $f(\vec{p})=\sum_{i} f_i(p_i)$ is \emph{$L$-Lipschitz}\vspace{-0.05em} if all functions $f_i$ have
Lipschitz constants uniformly bounded by $L$. Many important properties are 
Lipschitz, but except for a few isolated examples, it was not known until very 
recently~\cite{H19, H18} that general 
Lipschitz properties can be estimated with sub-linearly many
samples. 
In particular, the result in~\cite{H19} implies a sample-complexity upper bound of 
$\mathcal{O}(L^3k/(\varepsilon^3\log k))$. We improve this bound to 
$C_f(\varepsilon,1/3)\lesssim {L^2k}/{(\varepsilon^2 \log k)}$: 
\begin{Theorem}\label{cor:Lipschitz}
If $f$ is an $L$-Lipschitz property, then for any $p\in\Delta_k$ and $X^n\sim p$,\vspace{-0.3em}
\[
\Abs{\EE\Br{\hat{f}^*(X^{n})}-f(\vec{p})}\lesssim\sum_{i\in[k]}L\sqrt{\frac{p_i}{n\log n}}\leq L\sqrt{\frac{k}{n\log n}},\vspace{-0.3em}
\]
and
\[
\Var(\hat{f}^*(X^{n}))\leq \mathcal{O}\Paren{\frac{L^2}{n^{0.99}}}.\vspace{-0.4em}
\]
\end{Theorem}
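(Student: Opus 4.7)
The plan is to apply the general piecewise-polynomial machinery of Section~\ref{sec:bern_functions} to each one-dimensional function $f_i$, then aggregate the per-coordinate bias and variance estimates. The reason the argument is uniform across all Lipschitz $f_i$ is the classical Jackson-type bound: on any subinterval $I\subset[0,1]$ of length $w$, the best polynomial approximant of degree $d$ to an $L$-Lipschitz function satisfies $\|f_i-\tilde f_i\|_{\infty,I}\lesssim Lw/d$.

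For the bias, I would tile $[0,1]$ by pieces of width $w(p)\asymp\sqrt{p\log n/n}$ matching the typical Chernoff fluctuation of the (Poissonized) empirical count $\Poi(np_i)$. The methodology assigns symbol $i$ to the piece flagged by its empirical frequency, replaces $f_i$ there by its best polynomial approximant of degree $d\asymp\log n$, and outputs an unbiased estimator of that polynomial. With probability $1-n^{-\Omega(1)}$ the piece assignment is correct, contributing approximation error $\lesssim L w(p_i)/d\asymp L\sqrt{p_i/(n\log n)}$ to the $i$th coordinate's bias; the unlikely mis-assignment event contributes only an $n^{-\Omega(1)}$ correction, negligible against the main term since $|f_i|\leq L$ on $[0,1]$. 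Summing over $i$ and applying Cauchy--Schwarz with $\sum_i p_i=1$ gives
\[
\Abs{\EE\Br{\hat f^*(X^n)}-f(\vec p)}\lesssim\sum_{i\in[k]}L\sqrt{\frac{p_i}{n\log n}}\leq L\sqrt{\frac{k}{n\log n}}.
\]

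For the variance, I would invoke the exponential concentration of $\hat f^*$ around its expectation (Section~\ref{sec:bern_functions}), together with the McDiarmid-type inequality under Poisson sampling highlighted in the abstract. Each per-symbol contribution has bounded sensitivity of order $Lw(p_i)/d$ times the coefficient norm of the polynomial approximant on its piece, the latter being $n^{o(1)}$ for $d\asymp\log n$. Since $\log k\asymp\log n$, any factor polynomial in $k$ or $\log n$ is absorbed into a sub-polynomial $n^{o(1)}$ slack. Plugging these sensitivities into the bounded-differences inequality yields sub-Gaussian concentration at scale $L/n^{1/2-o(1)}$, which squared produces $\Var(\hat f^*(X^n))\le\mathcal{O}(L^2/n^{0.99})$; the exponent $0.99$ is a convenient constant slightly below $1$ that swallows the $n^{o(1)}$ overhead.

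The main obstacle I anticipate is controlling the variance contribution of the rare mis-piece event, where the polynomial proxy can disagree with $f_i$ by as much as $L$; a naive union bound over $k$ symbols would multiply the variance by a factor of $k$ and destroy the bound. Avoiding this requires the sharper-than-polynomial tail decay guaranteed by the exponential concentration theorem of Section~\ref{sec:bern_functions}, so that the mis-piece contribution is dominated by its tail probability rather than its worst-case magnitude, leaving only the McDiarmid term as the effective source of variance.
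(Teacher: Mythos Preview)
There are two genuine gaps.

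\textbf{Bias near the origin.} The uniform Jackson bound $\|f_i-\tilde f_i\|_{\infty,I}\lesssim L|I|/d$ works only when the piece containing $p_i$ really has width $\asymp\sqrt{p_i\log n/n}$. In the paper's partition the first few intervals have width $\asymp\log n/n$ no matter how small $p_i$ is, so for $p_i\ll\log n/n$ your bound gives only $L/n$, which is \emph{not} $\lesssim L\sqrt{p_i/(n\log n)}$ as $p_i\to0$; the per-symbol inequality in the statement then fails, and the summed bound $L\sqrt{k/(n\log n)}$ fails as well whenever $k\gtrsim n/\log n$ (take $p_i\equiv 1/k$: your argument gives $kL/n$, the target is $L/\sqrt{\log n}$ when $k=n$). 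The paper repairs this with a \emph{pointwise}, endpoint-improving Jackson estimate (Lemma~\ref{lipb}): on $[0,w]$ one can build a degree-$d$ polynomial with $|f_i(x)-\tilde f_i(x)|\lesssim L\sqrt{wx}/d$, and it uses this polynomial in place of the min-max one on the small-index pieces. Plugging $w\asymp\log n/n$, $x=p_i$, $d\asymp\log n$ recovers $L\sqrt{p_i/(n\log n)}$ uniformly.

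\textbf{Variance via McDiarmid does not close.} The worst-case sensitivity of $\hat f^*$ is driven not only by the within-piece count change (type-1 difference, of order $L\,n^{\lambda-1}$) but also by a piece-boundary crossing of the selection sample (type-2 difference), which is $\asymp n^{\lambda}\max_j D_{f_i}(d_n,I^{**}_j)$. For a generic $L$-Lipschitz $f_i$ the largest piece has width $\asymp\sqrt{\log n/n}$, so $\max_j D_{f_i}\asymp L/\sqrt{n\log n}$ and the sensitivity is $\asymp L\,n^{\lambda-1/2}$. McDiarmid then yields a sub-Gaussian scale $\sqrt{n}\cdot L\,n^{\lambda-1/2}=L\,n^{\lambda}$, i.e.\ a variance proxy $L^2 n^{2\lambda}$ that \emph{grows} with $n$; squaring your claimed scale $L/n^{1/2-o(1)}$ omits the $\sqrt n$ from summing $n$ bounded differences. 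The paper bypasses bounded differences entirely: it applies the direct variance bound of Theorem~\ref{cor:sample_guarantees}, $\Var(\hat f^*)\lesssim n^{-(1-4\lambda)}\sum_i(\Lipstar_{f_i}(n,p_i))^2 p_i$, and observes that the \emph{local} effective derivative satisfies $\Lipstar_{f_i}\le L$ for any $L$-Lipschitz $f_i$, giving $\Var\lesssim L^2/n^{1-4\lambda}$ immediately. The obstacle you anticipated (the rare mis-piece event) is by contrast harmless: the estimator is truncated and the event has probability $n^{-\Theta(1)}$, so its variance contribution is $\mathcal O(n^{-3})$.
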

This theorem is optimal as even for relatively
simple Lipschitz properties, e.g., distance to uniformity (see Table~\ref{table1} and~\cite{jiaol1}),  the bias bound is optimal up to constant factors, and the variance bound is near-optimal and can not be smaller than $\Omega(L^2/n)$.  

\subsection*{Implication 2: High-confidence property estimation}

Surprisingly, the $(\varepsilon,\delta)$-sample complexity has not been
fully characterized even for some important properties. A commonly-used
approach to constructing an estimator with $(\varepsilon,\delta)$-guarantee
is to choose an $(\varepsilon,{1}/{3})$-estimator, and boost the learning confidence
by taking the median of its $\mathcal{O}(\log\frac{1}{\delta})$
independent estimates. This well-known \emph{median trick} yields the following upper bound
\[
C_f(\varepsilon,\delta)\lesssim \log\frac{1}{\delta}\cdot C_f(\varepsilon,{1}/{3}). \vspace{-0.25em}
\]
For example, for Shannon entropy, 
\[
C_f(\varepsilon,\delta)\lesssim \log\frac{1}{\delta}\cdot{\frac{k}{\varepsilon \log k}+\log\frac{1}{\delta}\cdot\frac{\log^2 k}{\varepsilon^2}}. \vspace{-0.1em}
\]
By contrast, we show that our estimator satisfies
\[
 C_f(\hat{f}^*,\varepsilon,\delta)\lesssim \frac{k}{\varepsilon \log k} +
 \Paren{\log \frac{1}{\delta}\cdot\frac{1}{\varepsilon^2}}^{1.01}.
\]
To see optimality, Theorem~\ref{cor:highprob} below shows that this upper bound is nearly tight. 

In the high-probability regime, namely when $\delta$ is small, the new
upper bound obtained using our method could be significantly smaller than the one obtained from the 
median trick. 
Theorem~\ref{cor:highprob} shows that this phenomenon also holds for other
properties like normalized support size and power sum.

\begin{Theorem}\label{cor:highprob}
Ignoring constant factors, Table~\ref{table3} summarizes relatively tight lower
and upper bounds on $C_f(\varepsilon,\delta, k)$ for different
properties. In addition, all the upper bounds can be achieved by 
estimator~$\hat{f}^*$. 
\end{Theorem}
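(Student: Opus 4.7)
The plan is to prove each entry of Table~\ref{table3} by combining two ingredients supplied by the methodology of Section~\ref{sec:bern_functions}: a property-dependent bias bound on $\hat{f}^*$, and an exponential concentration inequality for $\hat{f}^*$ around its mean. Because the piecewise-polynomial approximants underlying $\hat{f}^*$ have degree $\mathcal{O}(\log n)$ with carefully controlled per-sample contributions, I expect $\hat{f}^*$ to be sub-Gaussian with variance proxy $\sigma_n^2 = \mathcal{O}(1/n^{0.99})$, consistent with the variance estimate in Theorem~\ref{cor:Lipschitz}. This yields, with probability at least $1-\delta$,
\[
\Abs{\hat{f}^*(X^n) - f(\vec{p})} \leq B_f(n) + \mathcal{O}\!\Paren{\sqrt{\log(1/\delta)/n^{0.99}}},
\]
where $B_f(n)$ is the bias supplied by Theorem~\ref{cor:sample_guarantees}.

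For each row of Table~\ref{table3}, I would impose $B_f(n) \leq \varepsilon/2$ and $\mathcal{O}(\sqrt{\log(1/\delta)/n^{0.99}}) \leq \varepsilon/2$ separately. The first constraint reproduces the $(\varepsilon,1/3)$-complexity of Table~\ref{table1} (e.g., $k/(\varepsilon\log k)$ for entropy, $k\log^2(1/\varepsilon)/\log k$ for normalized support size); the second contributes an additive $(\log(1/\delta)/\varepsilon^2)^{1/0.99} = (\log(1/\delta)/\varepsilon^2)^{1.01+o(1)}$ term. Their sum matches the displayed upper bound and certifies that $\hat{f}^*$ achieves it. For properties that are not globally Lipschitz, such as power sum with $a<1$ and normalized support size, I would apply the same template while tracking the local bias and variance of $\hat{f}^*$ piece-by-piece through the polynomial construction of Section~\ref{sec:bern_functions}.

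For the matching lower bounds, I would write $C_f(\varepsilon,\delta,k) \gtrsim \max\{C_f(\varepsilon,1/3,k),\ \log(1/\delta)/\varepsilon^2\}$. The first term is the existing $(\varepsilon,1/3)$-lower bound cited in Table~\ref{table1}. For the second, I would invoke Le Cam's two-point method: for each property construct $\vec{p},\vec{q}\in\Delta_k$ with $|f(\vec{p})-f(\vec{q})|>2\varepsilon$ and $\mathrm{KL}(\vec{p}\,\|\,\vec{q}) = \mathcal{O}(\varepsilon^2)$ via an infinitesimal perturbation on a single coordinate; any estimator with error $\leq\varepsilon$ and confidence $\geq 1-\delta$ must distinguish them, so the standard likelihood-ratio bound forces $n\cdot\mathrm{KL}(\vec{p}\,\|\,\vec{q})\gtrsim\log(1/\delta)$, yielding $n\gtrsim\log(1/\delta)/\varepsilon^2$. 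Taking the larger of the two lower bounds produces the claim.

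The hard part is establishing the sub-Gaussian concentration with the near-linear exponent $0.99$ rather than a weaker polynomial rate; this is exactly what separates our high-probability bound from the one delivered by the median trick. Here the Poisson-sampling McDiarmid inequality advertised in the abstract does the essential work: after Poissonization, $\hat{f}^*(X^n)$ decomposes into independent per-symbol contributions whose individual Lipschitz constants in the data are controlled through the polynomial-degree bound, and the Poisson McDiarmid inequality then delivers the desired tail with a nearly optimal variance proxy. The small residual gap between the upper exponent $1.01$ and the lower exponent $1$ is what forces the theorem statement to read \emph{relatively tight} rather than exactly matching.
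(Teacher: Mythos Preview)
Your upper-bound strategy is essentially the paper's: invoke the bias bound and the sub-Gaussian tail from Theorem~\ref{cor:sample_guarantees} (which in turn rests on the Poisson-sampling McDiarmid inequality and the sensitivity bound $S(\hat f^*)\lesssim \max_i S^*_{f_i}(n)/n^{1-\lambda}$), then solve the two resulting inequalities for $n$. One caveat: your blanket variance proxy $\sigma_n^2=\mathcal{O}(1/n^{0.99})$ is correct only for Lipschitz properties. For power sum with $a\in(1/2,1)$ the function $x\mapsto x^a$ has $S^*_{f_i}(n)\asymp n^{1-a}$, so the tail exponent is $\varepsilon^2 n^{2a-1-2\lambda}$ rather than $\varepsilon^2 n^{1-2\lambda}$; this is exactly what produces the $(\log(1/\delta)/\varepsilon^2)^{1/(2a-1)}$ term in Table~\ref{table3}. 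You gesture at ``tracking local bias and variance piece-by-piece,'' but the actual computation is a one-line substitution of $S^*_{f_i}(n)$ into the concentration bound, not a piecewise argument.

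The lower-bound plan, however, has a genuine gap. Your two-point construction with $\mathrm{KL}(\vec p\,\|\,\vec q)=\mathcal{O}(\varepsilon^2)$ via an ``infinitesimal perturbation on a single coordinate'' yields only $n\gtrsim \log(1/\delta)/\varepsilon^2$. But Table~\ref{table3} claims the stronger lower bounds $\log(1/\delta)\cdot\log^2 k/\varepsilon^2$ for entropy and $\log(1/\delta)\cdot k^{2-2a}/\varepsilon^2$ for power sum; your generic construction cannot produce the extra $\log^2 k$ or $k^{2-2a}$ factors. The paper obtains them by a property-specific multi-coordinate perturbation: for entropy it compares
\[
\vec p_1=\Bigl(\tfrac{1-\varepsilon'}{3(k-1)},\ldots,\tfrac{1-\varepsilon'}{3(k-1)},\tfrac{2+\varepsilon'}{3}\Bigr)
\quad\text{and}\quad
\vec p_2=\Bigl(\tfrac{1}{3(k-1)},\ldots,\tfrac{1}{3(k-1)},\tfrac{2}{3}\Bigr),
\]
whose $\ell_1$ distance is $\Theta(\varepsilon')$ but whose entropy gap is $\Theta(\varepsilon'\log k)$, because the near-uniform mass sits where $-x\log x$ has slope $\Theta(\log k)$. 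Choosing $\varepsilon'\asymp\varepsilon/\log k$ makes the entropy gap $\Theta(\varepsilon)$ while the testing lower bound becomes $\Omega((1/\varepsilon'^2)\log(1/\delta))=\Omega((\log^2 k/\varepsilon^2)\log(1/\delta))$. The power-sum case is analogous, exploiting that $x^a$ has slope $\Theta(k^{1-a})$ near $x\asymp 1/k$. The essential idea you are missing is that the two points must be placed where the property function is steepest, and the perturbation must be spread over many coordinates to keep the statistical distance small; a single-coordinate perturbation forfeits precisely the $k$-dependent factor that makes the lower bounds in Table~\ref{table3} nontrivial.
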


\begin{table}[ht]
  \caption{Bounds on $C_f(\varepsilon,\delta, k)$ for several properties}
  \label{table3}
  \centering
  \begin{tabular}{llll}
    \toprule
    Property  & Lower bound & Upper bound \\ 
    \midrule
    Shannon entropy & ${\frac{k}{\varepsilon \log k}+\log \frac{1}{\delta}\cdot\frac{\log^2 k}{\varepsilon^2}}$&$\frac{k}{\varepsilon \log k}+\Paren{\log \frac{1}{\delta}\cdot\frac{1}{\varepsilon^2}}^{1+\beta}$\\ 
    Power sum of order $a$ &  $\frac{k^{\frac{1}{a}}}{\varepsilon^{\frac{1}{a}} \log k}+\log \frac{1}{\delta}\cdot\frac{k^{2-2a}}{\varepsilon^2}$ & $\frac{k^{\frac{1}{a}}}{\varepsilon^{\frac{1}{a}} \log k}+\Br{\Paren{\log \frac{1}{\delta}\cdot\frac{1}{\varepsilon^2}}^{\frac{1}{2a-1}}}^{1+\beta}$\\
    Normalized support size   & $\frac{k}{\log k}\log^2\frac{1}{\varepsilon}$&$\frac{k}{\log k}\log^2\frac{1}{\varepsilon}$\\ 
    \bottomrule
  \end{tabular}
\end{table}

\vspace{0.3em}
\emph{Remarks on Table~\ref{table3}}: Parameter $\beta$ can be any fixed absolute constant in $(0,1)$. The lower and upper bounds for power sum hold for $a\in({1}/{2},1)$. For normalized support size,
we require $\delta>\exp(-k^{{1}/{3}})$ and $\varepsilon\in(k^{-0.33},
k^{-0.01})$. Note that the restriction on $\varepsilon$ for support-size estimation is imposed only to 
yield a simple sample-complexity expression. 
This is not required by our algorithm, which is also sample optimal for $\varepsilon\ge k^{-0.01}$.
It is possible that other algorithms can achieve similar upper bounds, while our main point is to demonstrate that our single, unified method has many desired attributes.

\subsection*{Implication 3: Optimal bias and near-optimal variance}

The min-max MSEs of several important properties have been determined\vspace{-0.1em}
up to constant factors, yet there is no explicit and executable 
scheme for designing estimators achieving them. 
We show that $\hat{f}^*$ achieves optimal squared bias 
and near-optimal variance in estimating a variety of properties. 
\begin{Theorem}\label{cor:mse_optimal}
Up to constant factors, the estimator $\hat{f}^*$ achieves the optimal (min-max) 
squared bias and near-optimal variance  
for estimating Shannon entropy, normalized
support size, distance to uniformity, and power sum, and $\ell_1$ distance 
to a fixed~distribution.
\end{Theorem}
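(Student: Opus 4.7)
The plan is to deduce Theorem~\ref{cor:mse_optimal} as a corollary of the abstract bias/variance guarantee of Theorem~\ref{cor:sample_guarantees}, by instantiating the piecewise-polynomial framework of Section~\ref{sec:bern_functions} on each of the five properties and then comparing the resulting bounds with the min-max lower bounds already available in the literature (e.g.\ \cite{jiaoentro, jiaol1, yihongsupp, yihongentro, improveentro}). The structure of the argument is the same in all five cases: first identify the per-symbol function $f_i(p_i)$ and its regularity profile on each ``tiny piece'' of $[0,1]$; second, bound the best polynomial approximation error of $f_i$ on each piece, which becomes the per-symbol bias; third, bound the variance of the unbiased estimator of that proxy polynomial on each piece, using Chernoff-type concentration of the empirical count to rule out the ``wrong piece'' event; finally, sum over $i\in[k]$ and compare against the known min-max MSE.

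Concretely, for Shannon entropy and the power sum of order $a<1$, I would use the known sharp Jackson-type polynomial approximation rates for $x\log(1/x)$ and $x^a$ on subintervals of the form $[j/n,(j+c\log n)/n]$; plugging the resulting per-interval error into Theorem~\ref{cor:sample_guarantees} produces squared bias matching the $k^2/(n\log n)^2$ and $k^{2/a}/(n\log n)^{2/a}$ lower bounds of \cite{jiaoentro, yihongentro, improveentro}. For distance to uniformity and $\ell_1$ distance to a fixed distribution $q$, the per-symbol function is $1$-Lipschitz, so Theorem~\ref{cor:Lipschitz} directly delivers the bias bound $L\sqrt{k/(n\log n)}$ which, after squaring, matches the min-max squared bias of \cite{jiaol1}; the variance reduces to the generic $L^2/n^{0.99}$ variance bound already established there. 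For normalized support size, the relevant function is $\indic_{p_i>0}/k$ restricted to $p_i\ge 1/k$, and I would apply the framework on the two regimes $p_i\in[1/k,(c\log n)/n]$ (using the Chebyshev-type polynomial approximation of a step from \cite{yihongsupp}) and $p_i\ge(c\log n)/n$ (where the plug-in indicator is essentially exact) to recover the optimal $(\log(1/\varepsilon))^2$ bias exponent.

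The key technical point, and the part that requires the most care, is verifying that Theorem~\ref{cor:sample_guarantees} can actually be invoked with the piecewise polynomial proxies dictated by the property-specific approximation theory. This means checking two things simultaneously: (i) the polynomials used on each tiny piece have degree $O(\log n)$ so that the Chernoff union bound over the ``wrong piece'' event contributes only a $n^{-\Omega(1)}$ term, which is absorbed in the advertised $n^{-0.99}$ variance; and (ii) the piecewise construction is globally consistent in the sense required by Section~\ref{sec:bern_functions}, so that the bias decomposition $\bigl|\EE[\hat f^*(X^n)] - f(\vec p)\bigr| \le \sum_{i\in[k]} \bigl|\EE[\hat f_i^*] - f_i(p_i)\bigr|$ holds without additional cross-terms.

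The main obstacle I anticipate is matching the min-max \emph{variance} exactly rather than losing the $n^{0.01}$ factor inherent in the framework; since Theorem~\ref{cor:mse_optimal} only claims near-optimal variance, I would accept the $O(n^{-0.99})$ slack and merely argue it is optimal up to sub-polynomial factors by pairing with the classical $\Omega(1/n)$ lower bound (via, e.g., Le Cam's two-point method on perturbations of the uniform distribution). The bias, in contrast, must be shown to be tight up to constants, and this is where the property-specific polynomial approximation rates do the real work; each one is already present in the cited literature, so the contribution of this proof is to assemble them into a single unified application of the abstract theorem rather than to prove new approximation inequalities.
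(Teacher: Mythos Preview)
Your proposal is correct and matches the paper's approach: the paper itself offers no detailed proof of Theorem~\ref{cor:mse_optimal}, only the remark that it ``naturally follows from Theorem~\ref{cor:sample_guarantees},'' together with the worked example for Shannon entropy in the Discussions paragraph after Theorem~\ref{cor:sample_guarantees} (bounding $\Devstar_{f_i}$ and $\Lipstar_{f_i}$ and plugging in). Your plan to instantiate Theorem~\ref{cor:sample_guarantees} property-by-property, invoking the known Jackson/Chebyshev approximation rates and the existing min-max lower bounds from \cite{jiaoentro, jiaol1, yihongsupp, yihongentro}, is exactly this program spelled out in full; the only minor mismatch is that the paper's partition uses quadratically-growing intervals $I_j=c_n[(j-1)^2,j^2]$ rather than the equal-width ones you sketch, but this does not affect the argument.
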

\emph{Remarks on Theorem~\ref{cor:mse_optimal}}: 
For power sum, we consider the case where the order is less than $1$. 
For normalized support size, we again make the assumption that the 
minimum nonzero probability of the underlying distribution is at least $1/k$.
As noted previously, we consider the parameter regime where 
$n$ and $k$ are comparable and $k$ is large. 
In particular, besides the general assumption $\log k\asymp \log n$, we assume that $n\gtrsim k^{1/\alpha}/\log k$ for power sum; $n\gtrsim k/\log k$ for entropy; 
and $k\log k\gtrsim n\gtrsim k/\log k$ for normalized support size. 
The proof of the theorem naturally follows from Theorem~\ref{cor:sample_guarantees}.

\subsection*{Implication 4: Private property estimation}

 Privacy is of increasing concern in modern data science.
We show that our estimates are exponentially concentrated around the
underlying value. Using this attribute we derive a near-optimal
differentially-private estimator $\hat f^*_{\textit{DP}}$ for several important properties
by adding Laplacian noises to $\hat f$.

As an example, for Shannon entropy and properly chosen algorithm hyper-parameters, 
\[
C_f(\hat f^*_{\textit{DP}},\varepsilon,1/3,\alpha)
\lesssim \frac{k}{\varepsilon\log k}+\frac{1}{\varepsilon^{2.01}}
+\frac{1}{{(\alpha\varepsilon)}^{1.01}}.
\]
This essentially recovers the sample-complexity upper bound in~\cite{priv_prop}, which is nearly tight~\cite{priv_prop} for all parameters. 
Hence for large domains, one can achieve strong differential
privacy guarantees with only a marginal increase in the sample size\vspace{-0.15em}
 compared to the $k/(\varepsilon\log k)$ required for non-private estimation. An analogous argument shows that $\hat f^*_{\textit{DP}}$ is also near-optimal for the private estimation of support size and many others. 
 Appendix~\ref{sec:dp} provides more detail and unified bounds on the differentially-private sample complexities of general additive properties.

\paragraph{Outline} 
The rest of the paper is organized as follows. 
In Section~\ref{sec:bern_functions} we construct an estimator that approximates the function value of an unknown Bernoulli success probability, 
and characterize its performance by Theorem~\ref{thm:single_function_estimator}. 
In Section~\ref{sec:prop} we apply this function 
estimator to estimating properties of distributions
and provide analogous guarantees. 
Section~\ref{conclusion} concludes the paper and also presents possible future directions.
We postpone all the proof details to the appendices.

\section{Estimating functions of Bernoulli probabilities}\label{sec:bern_functions}\vspace{-0.1em}
\subsection{Problem formulation}\label{adddefinitions}
We begin with a simple problem that involves just a single unknown parameter. 

Let $g$ be a continuous real function over the unit interval whose absolute value
 is uniformly bounded by an absolute constant. Given \iid\ samples
$X^{n}:=X_1,\ldots,X_{n}$ from a Bernoulli distribution with unknown
success probability $p$, we would like to 
estimate the function value $g(p)$.
A \emph{function estimator} is a mapping $\hat{g}:\{0,1\}^*\to\mathbb{R}$. We characterize the performance of the estimator $\hat{g}{(X^n)}$ in estimating $g(p)$ by its \emph{absolute bias}
\[
\text{Bias}(\hat g):=|\EE[\hat g(X^n)]-g(p)|,
\]
and \emph{deviation probability}
\[
P(\varepsilon):=\Pr(|\hat g(X^n)-\EE[\hat g(X^n)]|>\varepsilon),
\]
which implies the variance, and provides additional
information useful for property estimation. 
Our objective is to find an estimator that has near-optimal small bias and
Gaussian-type deviation probability $\exp(-n^{\Theta(1)}\varepsilon^2)$
for all possible values of $p\in[0,1]$. 

As could be expected, our results are closely related
to the smoothness of the function $g$. 

\subsection{Smoothness of real functions}\label{smoothness}
\paragraph{Effective derivative} Given an interval $I$ and step size 
$h\in(0, |I|)$, where $|I|$ denotes the interval's length. 
The \emph{effective derivative} of $g$ over $I$
is the Lipschitz-type ratio 
\[
\Lip_{g}(h, I)
:=
\sup_{x,y\in I, |y-x|\ge h} \frac{\Abs{g\Paren{y}-g\Paren{x}}}{|y-x|}.
\]
This simple smoothness measure does not fully capture the smoothness 
of $g$. For example, $g$ could be a zigzag function that has a high
effective derivative locally, but over-all fluctuates in only a very
small range,
and hence is close to a smooth function in maximum deviation. 
We therefore define a second smoothness  measure as the maximum
deviation between $g$ and a fixed-degree polynomial.
Besides being smooth and having derivatives of all orders,
by the Weierstrass approximation theorem, 
polynomials can also uniformly approximate any continuous $g$.

\paragraph{Min-max deviation} Let $P_d$ be the collection of polynomials of degree at most $d$.
The \emph{min-max deviation} in approximating $g$ over 
an interval $I$ by a polynomial in $P_d$ is 
\[
\Dev_g(d, I)
:=
\min_{q\in P_d}\max_{x\in I}|g(x)-q(x)|.
\]
The minimizing polynomial is the degree-$d$ min-max polynomial
approximation of $g$ over $I$.

For simplicity we abbreviate $\Lip_{g}(h):=\Lip_{g}(h, [0,1])$ and
$\Dev_g(d):=\Dev_g(d, [0,1])$. 

\subsection{Estimator construction}\label{sec:single_function_estimator}
For simplicity, assume that the sampling parameter is an even number
$2n$. Given \iid\ samples $X^{2n}\sim \text{Bern}(p)$, we 
let $N_i$ denote the number of times symbol $i\in\{0,1\}$ appears in $X^{2n}$.

We first describe a simplified version of our estimator and 
provide a non-rigorous analysis relating its performance to
the smoothness quantities just defined. The actual more
involved estimator and a rigorous performance analysis are presented
in Appendix~\ref{sec:prop} and~\ref{cor2}.

\paragraph{High-level description} On a high level, the empirical estimator estimates $g(p)$ by $g(N_1/(2n))$, and often incurs a large bias. To address this, we first partition the unit interval into roughly $\sqrt{n}$ sub-intervals. Then, we split $X^{2n}$ into two halves of equal length $n$ and use the empirical probability of symbol $1$ in the first half to identify a sub-interval $I$ and its two neighbors in the partition so that $p$ is contained in one of them, with high confidence. Finally, we replace $g$ by a low-degree min-max polynomial $\tilde{g}$ over $I$ and its four neighbors and estimate $g(p)$ from the second half of the sample sequence by applying a near-unbiased estimator of $\tilde{g}(p)$.

\subsection*{Step 1: Partitioning the unit interval}
Let $\alpha[a,b]$ denote the interval $[\alpha a,\alpha b]$. 
For an absolute positive constant $c$,
define $c_{n}:= c\frac{\log n}{n}$ and a sequence of increasing-length intervals
\begin{align*}
I_j&:=c_{n} \left[(j-1)^2,j^2\right],\quad j\ge1.
\end{align*}
Observe that the first $M_{n}:=1/\sqrt{c_{n}}$ intervals
partition the unit interval $[0,1]$.
For any $x\ge 0$, we let $j_{x}$ denote the index $j$ such that $x\in I_j$.
This unit-interval partition is directly motivated by the Chernoff bounds. A very similar construction appears in~\cite{Acharya13}, and the exact one appears in~\cite{JiaoLLM, HAICML}.
\subsection*{Step 2: Splitting the sample sequence and locating the probability value} 
Split the sample sequence $X^{2n}$ into two equal halves, and
let $\hat{p}_{1}$ and $\hat{p}_{2}$ denote the empirical
probabilities of symbol $1$ in the first and second half,
respectively. 
By the Chernoff bound of binomial random variables,
for sufficiently large $c$, the intervals $I_1\upto I_{M_{n}}$
form essentially the finest partition of $[0,1]$ such that 
if we let $I^{*}_j:=\union_{j'=j-1}^{j+1} I_{j'}$ and
$I^{**}_j:=\union_{j'=j-2}^{j+2} I_{j'}$, then for all underlying $p\not\in I^{*}_j$,
\[
\Pr(\hat{p}_{1}\in I_j)\le n^{-3},
\]
and for all underlying $p$ and all $j$,
\[
\Pr(\hat{p}_{1}\in I_j\text{ and } \hat{p}_{2}\not\in I^{**}_j)\le n^{-3}.
\]
It follows that if $\hat{p}_{1}\in I_j$, then with high confidence we can
assume that $p\in I^{*}_j$. 

\subsection*{Step 3: Min-max polynomial estimation}\label{sec:min-max_polynomial}
Let $\lambda$ be a universal constant in $(0,1/4)$ that balances
the bias and variance of our estimator.
Given the sampling parameter $n$, define
\[
\degree:=\max\Brace{d\in{\mathbb{N}}:\ d\cdot2^{4.5d+2}\leq {n}^{\lambda}}.
\]
For each $j$, let the \emph{min-max polynomial} of $g$ 
be the degree-$\degree$ polynomial $\tilde{g}_j$ minimizing the largest
absolute deviation with $g$ over $I^{**}_j$. 

For each interval $I_j$ we create a piecewise polynomial 
$\tilde g^*_j$ that approximates $g$ over the entire unit interval. 
It consists of $\tilde{g}_j$ over $I^{**}_j$, and  of $\tilde{g}_{j'}$
over $I_{j'}$ for $j'\not\in[j-2,j+2]$.

Finally, to estimate $g(p)$, let $j$ be the index such that
$\hat p_{1}\in I_j$, and approximate $\tilde g^*_j(p)$ by 
plugging in unbiased estimators of $p^t$ constructed from $\hat{p}_{2}$ for all powers $t\le\degree$. 
Note that a standard unbiased estimator for $p^t$ is $\prod_{i=0}^{t-1}[(\hat{p}_{2}-i/n)/(1-i/n)]$, and the rest follows from the linearity of expectation. 

\paragraph{Computational complexity}
A well-known approximation theory result states that 
the degree-$d$ truncated Chebyshev series (or polynomial)
of a function $g$, often closely approximate the degree-$d$ min-max polynomial of $g$.
The Remez algorithm~\citep{chebfun, chebfunbook} is a popular method
for finding such Chebyshev-type approximations of degree $d$, and is often 
very efficient in practice. 
Under certain conditions on the function to approximate, 
running the algorithm for $\log t$ iterations will lead to an error of $\mathcal{O}(\exp(-\Theta(t)))$.  
Indeed, many state-of-the-art \emph{property} estimators,
e.g.,~\citep{ H19, jiaoentro,jiaol1, yihongsupp,yihongentro}, use the Remez
algorithm to approximate the min-max polynomials, 
and have implementations that are near-linear-time computable.\vspace{-0.5em}
\subsection{Final estimator and its characterization}\label{sec:final_estimator}\vspace{-0.3em}
\paragraph{The estimator} Consolidating above results, we estimate $g(p)$ by the estimator 
\[
\hat{g}(\hat{p}_{1},\hat{p}_{2}):=\sum{}_{j}\ \hat{g}_{j}(\hat{p}_{2})\cdot \indic_{\hat{p}_{1}\in I_j}.\vspace{0.1em}
\]
The exact form and construction of this estimator appear in Appendix~\ref{sec:estconst}. 
\paragraph{Characterization} 
The theorem below characterizes the bias, variance, 
and mean-deviation probability of the estimator. 
We sketch its proof here and leave the details to the appendices.

According to the reasoning in the last section, for all $p\in I^*_j$, the absolute bias of the resulting estimator $\hat{g}_j(\hat{p}_{2})$ in estimating $g(p)$ is essentially upper
bounded by $\Dev_g(\degree, I^{**}_j)$. Normalizing it by the input's
precision $1/n$, we define the (normalized) \emph{ local min-max deviation} and the \emph{global min-max deviation} over
$I^{**}_j$, respectively, as 
\[
\Devstar_{g}(2n,x)
:=
n\cdot\!\!\! \max_{j'\in[j_x-1,j_x+1]}\Dev_g(\degree, I^{**}_{j'}).
\]
and \vspace{-0.5em}
\[
\Devstar_{g}(2n)
:=
\max_{x'\in[0,1]} \Devstar_{g}(2n,x').
\]\par \vspace{-0.5em}

Hence the bias of $\hat{g}(\hat{p}_{1},\hat{p}_{2})$ in estimating $g(p)$ is essentially upper bounded by
${\Devstar_{g}(2n,{p})}/{n}\leq {\Devstar_{g}(2n)}/{n}$.
A similar argument yields the following variance bound on $\hat{g}(\hat{p}_{1},\hat{p}_{2})$, where $\Devstar_{g}(2n,{p})$ is replaced by the \emph{local effective derivative},
\[
\Lipstar_{g}(2n, {p}):=\max_{j'\in[j_{p}-1,j_{p}+1]}\Lip_{g}(1/n,I^{**}_{j'}).
\]
 Analogously, define $\Lipstar_{g}(2n):=\max_{x\in[0,1]}\Lipstar_{g}(2n, x)$ as the \emph{global effective derivative}. The mean-deviation probability of this estimator is characterized by 
\[
\Sensitivity_g(2n):= \Lipstar_{g}(2n)+\Devstar_{g}(2n).
\]
Specifically, changing one sample in $X^{2n}$ changes the value of $\hat{g}(\hat{p}_{1},\hat{p}_{2})$ by at most $\Theta{(\Sensitivity_g(n)n^{\lambda-1})}$.
Therefore, by McDiarmid's inequality, for any error parameter $\varepsilon$,
\[
\Pr(|\hat{g}(\hat{p}_{1},\hat{p}_{2})-\EE[\hat{g}(\hat{p}_{1},\hat{p}_{2})]|>\varepsilon)
\lesssim \exp\Paren{-\Theta\Paren{\frac{\varepsilon^2 n^{1-2\lambda}}{\Sensitivity_g(2n)^2}}}. 
\]

\begin{Theorem}\label{thm:single_function_estimator}
For any bounded function $g$ over $[0,1]$, $X^{n}\sim \text{Bern}(p)$, and error parameter $\varepsilon>0$, 
\[
\Abs{\EE[\hat{g}(\hat{p}_{1},\hat{p}_{2})]-g(p)}\lesssim\frac{p}{n^3}+\frac{\Devstar_{g}(n,{p})}{n},
\]
\[
\Var(\hat{g}(\hat{p}_{1},\hat{p}_{2}))\lesssim \frac{p}{n^5} + \frac{\Paren{ \Lipstar_{g}(n, {p})}^2\cdot p}{n^{1-4\lambda}},\vspace{-0.5em} 
\]
and
\[
\Pr(|\hat{g}(\hat{p}_{1},\hat{p}_{2})-\EE[\hat{g}(\hat{p}_{1},\hat{p}_{2})]|>\varepsilon)
\lesssim \exp\Paren{-\Theta\Paren{\frac{\varepsilon^2 n^{1-2\lambda}}{\Sensitivity_g(n)^2}}}.
\]
\end{Theorem}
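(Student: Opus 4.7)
The plan is to prove the bias, variance, and deviation bounds in parallel by conditioning on the good Chernoff event $\mathcal{E}$ that, whenever $\hat{p}_{1}\in I_{J}$, the unknown $p$ indeed lies in $I^{*}_{J}$ (and hence in $I^{**}_{J}$). By the Chernoff bounds already invoked in Step~2 of the construction together with a union bound over the $\mathcal{O}(\sqrt{n/\log n})$ partition cells, $\Pr(\mathcal{E}^{c})=\mathcal{O}(n^{-3})$; a mild refinement that also uses concentration of $\hat{p}_{1}$ near $p$ upgrades this to $\mathcal{O}(p/n^{3})$. Since $\Abs{g}=\mathcal{O}(1)$, the event $\mathcal{E}^{c}$ contributes at most $p/n^{3}$ to the bias and, after a further concentration step on the squared deviation, at most $p/n^{5}$ to the variance, accounting for the first summand in each displayed inequality.

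For the bias on $\mathcal{E}$, the key algebraic fact is that $\hat{g}_{j}$ is built so that $\EE[\hat{g}_{j}(\hat{p}_{2})\mid\hat{p}_{1}]=\tilde{g}_{j}(p)$ exactly, because each monomial $p^{t}$ in $\tilde{g}_{j}$ is estimated by its unbiased falling-factorial estimator $\prod_{i=0}^{t-1}(\hat{p}_{2}-i/n)/(1-i/n)$ and the map $\tilde{g}_{j}\mapsto\hat{g}_{j}$ is linear; the two halves of the sample are independent, so the inner expectation factors cleanly. On $\mathcal{E}$, $p\in I^{**}_{J}$ and $\tilde{g}_{J}$ is the degree-$\degree$ min-max polynomial of $g$ on $I^{**}_{J}$, hence $\Abs{\tilde{g}_{J}(p)-g(p)}\le\Dev_{g}(\degree,I^{**}_{J})\le\Devstar_{g}(n,p)/n$. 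Taking the outer expectation over $\hat{p}_{1}$ and combining with the $\mathcal{E}^{c}$ residual gives the bias bound.

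For the variance I would use the total-variance identity $\Var(\hat{g})=\Var(\EE[\hat{g}\mid\hat{p}_{1}])+\EE[\Var(\hat{g}\mid\hat{p}_{1})]$. The first term equals $\Var(\tilde{g}_{J}(p))$, which on $\mathcal{E}$ is $\mathcal{O}((\Devstar_{g}(n,p)/n)^{2})$ since all values taken lie within that range of $g(p)$. On $\mathcal{E}$ the second term is $\Var(\hat{g}_{J}(\hat{p}_{2})\mid\hat{p}_{1})$: expanding $\tilde{g}_{J}$ in the Chebyshev basis rescaled to $I^{**}_{J}$ and using the classical fact that these coefficients grow like $2^{\mathcal{O}(\degree)}$ times the local Lipschitz modulus $\Lip_{g}(1/n,I^{**}_{J})$, together with the bound that the variance of each unbiased $p^{t}$-estimator is $\lesssim p/n$, the sum collapses into $\lesssim(\Lipstar_{g}(n,p))^{2}\cdot p/n^{1-4\lambda}$. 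The degree cap $\degree\cdot 2^{4.5\degree+2}\le n^{\lambda}$ is calibrated precisely so that the combined $2^{\mathcal{O}(\degree)}$ inflation is absorbed into the $n^{4\lambda}$ factor.

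For the deviation bound I would verify a bounded-differences hypothesis and apply McDiarmid. Swapping one sample in $X^{2n}$ changes exactly one of $\hat{p}_{1},\hat{p}_{2}$ by $\mathcal{O}(1/n)$. By the same Chebyshev-coefficient estimate the Lipschitz constant of any $\hat{g}_{j}$ in its argument is $\mathcal{O}(\Lipstar_{g}(n)\cdot n^{\lambda})$, contributing $\mathcal{O}(\Lipstar_{g}(n)\cdot n^{\lambda-1})$; a swap affecting $\hat{p}_{1}$ may additionally shift the selected index by one, but the two-cell buffer in $I^{**}_{J}$ lets $\tilde{g}_{J}$ and $\tilde{g}_{J\pm 1}$ agree on the overlap to within $\mathcal{O}(\Devstar_{g}(n)/n)$, adding an $\mathcal{O}(\Devstar_{g}(n)\cdot n^{\lambda-1})$ term. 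Together, each single-sample swap moves $\hat{g}$ by $\mathcal{O}(\Sensitivity_{g}(n)\cdot n^{\lambda-1})$, and McDiarmid over $2n$ coordinates yields the stated exponential tail. The main obstacle I anticipate is the rescaled Chebyshev-coefficient bound itself, which underpins both the variance denominator and the bounded-differences constant: one must simultaneously track the degree $\degree$, the interval length $|I^{**}_{j}|$, and the local smoothness $\Lip_{g}(1/n,I^{**}_{j})$, and verify that the cryptic threshold $\degree\cdot 2^{4.5\degree+2}\le n^{\lambda}$ is exactly where all three factors synchronize into the claimed $n^{\lambda}$-type exponents; a secondary subtlety is the boundary handling in McDiarmid, where the two-cell buffer is what makes the single proxy $\Sensitivity_{g}=\Lipstar_{g}+\Devstar_{g}$ the right quantity rather than either term alone.
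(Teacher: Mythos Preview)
Your overall architecture matches the paper's, but the bounded-differences step has a genuine gap that the paper's extra construction is specifically designed to close.

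Your claim that ``the Lipschitz constant of any $\hat{g}_j$ in its argument is $\mathcal{O}(\Lipstar_g(n)\cdot n^{\lambda})$'' holds only when that argument lies in or near $I^{**}_j$. Written in powers of $(x-x_j)$, the coefficients of $\tilde{g}_j$ carry factors $|I^{**}_j|^{-v}$, which for small $j$ are of order $(n/\log n)^v$; the falling-factorial unbiased estimators do not cancel this when $\hat{p}_2$ is far from $I^{**}_j$. Since McDiarmid requires the bounded-difference constant to hold for \emph{every} pair of inputs, a configuration with $\hat{p}_1$ near $0$ and $\hat{p}_2$ near $1$ (however unlikely) breaks your sensitivity bound. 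This is precisely why the paper's actual estimator is not simply $\hat{g}_{J(\hat{p}_1)}(\hat{p}_2)$ but the piecewise object $\tilde{g}^*_j$ of Step~3 (made explicit in Appendix~B.2): whenever the estimation count lands in some $I_{j'}$ with $j'$ far from $J$, one switches to the local polynomial $\tilde{g}_{j'}$, and a final truncation to $[-T,T]$ is applied. The sensitivity lemmas (the ``type-1'' and ``type-2'' differences in Appendix~B.5) then invoke the pointwise bound $|h_{v,x}(m)|\le (2\delta)^v$ of Lemma~6, which is applicable only because the piecewise structure guarantees that $m/n$ is always near the relevant centre $x$.

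Two smaller divergences are worth flagging. First, the paper carries everything out under Poisson sampling and proves a dedicated Poisson McDiarmid inequality (Lemma~9 in Appendix~B.5.1); your fixed-$n$ McDiarmid is a legitimate alternative route but not the paper's. Second, once the piecewise/indicator structure is in place, the estimation variable appears both in $E_{\tilde{g}_j}(N_1)$ and in the indicator $\indic_{N_1/n\in I^{**}_j}$, so your clean conditional unbiasedness $\EE[\hat{g}_j(\hat{p}_2)\mid\hat{p}_1]=\tilde{g}_j(p)$ no longer holds; the paper's bias bound instead passes through the auxiliary quantities $\textit{IN}_{v,n}$ and $H_{v,n}$ (Lemmas~7--8) to control the resulting boundary terms. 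In short: your bias argument is cleaner because you omit the piecewise structure, but that same omission is what makes your McDiarmid step fail.
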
\vspace{-0.2em}

Next we use this theorem to derive tight bounds for
estimating general additive properties.

\section{A unified piecewise-polynomial approach to property estimation}\label{sec:prop}\vspace{-0.1em}

Let $f$ be an arbitrary additive property over $\Delta_k$
such that $|f_i(x)|$ is uniformly bounded by some absolute constant 
for all $i\in[k]$, and $L^*_{ \boldsymbol\cdot}(\cdot)$, $\Devstar_{
  \boldsymbol\cdot}(\cdot)$, and $S^*_{ \boldsymbol\cdot}(\cdot)$ be
the smoothness quantities defined in Section~\ref{sec:min-max_polynomial} and~\ref{sec:final_estimator}.
Let $X^n$ be an \iid\ sample sequence from an unknown distribution 
$\vec{p}\in \Delta_k$. 
Splitting $X^n$ into two sub-sample sequences of equal length, we denote by $\hat{p}_{i,1}$ and $\hat{p}_{i,2}$ the empirical probability of symbol $i\in[k]$ in the first and second sub-sample sequences, respectively.

Applying the technique presented\vspace{-0.15em} in Section~\ref{sec:bern_functions}, we can estimate the additive property
$f(\vec{p})=\sum_{i\in [k]} f_i(p_i)$
by the estimator\vspace{-0.25em}
$
\hat{f}^*(X^n):=\sum_{i\in[k]} \hat{f}_i(\hat{p}_{i,1},\hat{p}_{i,2}).
$
Theorem~\ref{thm:single_function_estimator} can then be used to show
that $\hat{f}^*$ performs well for all sufficiently-smooth additive properties:

\begin{Theorem}\label{cor:sample_guarantees}
For any $\varepsilon>0$, $\vec{p}\in \Delta_k$, and $X^{n}\sim \vec{p}$,
\[
\Abs{\EE\Br{\hat{f}^*(X^{n})}-f(\vec{p})}\lesssim\frac{1}{n^3}+\frac{1}{n}\sum_{i\in[k]}\Devstar_{f_i}(n,{p_i}),
\]\par\vspace{-0.5em}
\[
\Var(\hat{f}^*(X^{n}))\lesssim \frac{1}{n^5} + \frac{1}{n^{1-4\lambda}}\sum_{i\in[k]}\Paren{ \Lipstar_{f_i}(n,{p_i})}^2\cdot p_i,
\]\par\vspace{-0.75em}
and
\[
\Pr\Paren{\Abs{\hat{f}^*(X^{n})-\EE\Br{\hat{f}^*(X^{n})}}>\varepsilon}
\lesssim \exp\Paren{-\Theta\Paren{\frac{\varepsilon^2 n^{1-2\lambda}}{{\max}_{i\in[k]}(\Sensitivity_{f_i}(n))^2}}}.
\]
\end{Theorem}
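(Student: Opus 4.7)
Because $\hat f^*(X^n)=\sum_{i\in[k]}\hat f_i(\hat p_{i,1},\hat p_{i,2})$ is additive over the symbols, and Theorem~\ref{thm:single_function_estimator} already controls the per-coordinate bias, variance, and mean-deviation probability of each $\hat f_i$, the plan is to reduce each of the three claims to a per-symbol statement and then sum, using Poissonization wherever across-symbol independence is needed.

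The bias step is essentially immediate. By linearity of expectation,
\[
\EE[\hat f^*(X^n)]-f(\vec p)=\sum_{i\in[k]}\bigl(\EE[\hat f_i(\hat p_{i,1},\hat p_{i,2})]-f_i(p_i)\bigr),
\]
and the marginal law of $(\hat p_{i,1},\hat p_{i,2})$ is exactly the one treated in Theorem~\ref{thm:single_function_estimator} with Bernoulli parameter $p=p_i$. Applying the single-symbol bias bound coordinate-wise and using $\sum_{i}p_i=1$ collapses $\sum_i p_i/n^3$ to $1/n^3$, yielding the claimed aggregate bias.

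For the variance, the obstacle is that under fixed-$n$ multinomial sampling the empirical counts are negatively correlated across $i$, so $\Var(\hat f^*)$ is not literally $\sum_i\Var(\hat f_i)$. I would handle this by Poissonizing: replace the deterministic sample size by $N\sim\Poi(n)$, in which case the per-symbol counts become independent $\Poi(np_i)$ variables and the variance of the sum really equals the sum of the variances. Invoking the per-symbol variance bound of Theorem~\ref{thm:single_function_estimator} and summing produces the stated bound, with $\sum_i p_i=1$ again flattening the $1/n^5$ term. A short de-Poissonization step, using the concentration of $N$ around $n$, transfers the bound back to the fixed-$n$ multinomial model at the cost of additive errors that are dominated by those already in the statement.

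For concentration I would use bounded differences. Section~\ref{sec:final_estimator} already records that flipping one sample perturbs any single $\hat f_i$ by $O(S^*_{f_i}(n)\,n^{\lambda-1})$; under multinomial sampling, one such flip modifies exactly two coordinates of the count vector, so the total change in $\hat f^*$ is $O(\max_i S^*_{f_i}(n)\,n^{\lambda-1})$. Classical McDiarmid applied to the i.i.d.\ sequence $X^n$ then gives the advertised Gaussian-type tail. If one prefers to remain in the Poissonized model for a uniform treatment with the variance step, one instead invokes the paper's own McDiarmid inequality under Poisson sampling (the independent-interest result announced in the abstract) in place of the classical version. The only real technical friction throughout is in the variance step: toggling between Poisson and fixed-$n$ multinomial sampling while checking that the $O(1/n^c)$ correction terms introduced by de-Poissonization stay subdominant to the bounds already claimed.
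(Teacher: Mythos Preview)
Your proposal is correct and matches the paper's own proof in Appendix~\ref{cor2}: the paper also sums the per-symbol bias bounds via linearity, Poissonizes so that the counts $N_i,N_i'$ become independent and the variance of the sum equals the sum of variances, and bounds the sensitivity (a single sample change affects at most two symbols) before invoking its Poisson-sampling McDiarmid inequality (Lemma~\ref{trueMcD}). The only cosmetic difference is that the paper simply works in the Poisson model throughout and defers the transfer to fixed $n$ (citing~\cite{yihongentro}), whereas you phrase the variance step as Poissonize-then-de-Poissonize and note that classical McDiarmid on the fixed-$n$ sequence would equally well give the tail bound.
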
 
\paragraph{Discussions} While the significance of the theorem may not be immediately apparent,
note that the three equations characterize the estimator's bias,
variance, and higher-order moments in terms of the
local min-max deviation $\Devstar_{f_i}(n,{p_i})$, 
the local effective deviation $\Lipstar_{f_i}(n,{p_i})$,
and the sum of the maximum possible values of the two, $\Sensitivity_{f_i}(n)$, respectively.
The smoother function $f_i$ is, the smaller $\Devstar_{f_i}(\cdot)$ and $\Lipstar_{f_i}(\cdot)$ will be.
In particular, for simple smooth functions, the values of $\Devstar$, $\Lipstar$, and $\Sensitivity$ can be easily shown to be
small, implying that the $f^*$ is nearly optimal under all three criteria.

For example, considering Shannon entropy where $f_i(p_i)=-p_i\log p_i$ for all $i$, we can show that $\Devstar_{f_i}(n,{p_i})$ and $\Lipstar_{f_i}(n,{p_i})$ are \vspace{-0.1em} at most $\mathcal{O}(1/\log n)$ and $\mathcal{O}(\log n)$, respectively. Hence, the bias and variance bounds in Theorem~\ref{cor:sample_guarantees} become $k/(n\log n)$ and $(\log n)/n^{1-4\lambda}$, and the tail bound simplifies to $\exp(-\Theta(\varepsilon^2 n^{1-2\lambda}/\log^2 n))$, where $\lambda$ is an arbitrary absolute constant in $(0,1/4)$, e.g., $\lambda=0.01$. According to Theorem~\ref{cor:highprob} and results in~\citep{jiaoentro,yihongentro}, all these bounds are optimal.

\paragraph{Computational complexity}  We briefly illustrate how our
estimator can be computed efficiently in near-linear time in the
sample size~$n$. As stated in Section~\ref{sec:min-max_polynomial},
over each of the $O(\sqrt{n/\log n})$-intervals we constructed, we
will find the min-max polynomial of the underlying function over that
particular interval, and for many properties, an approximation suffices and 
the computation takes only $\text{poly} (\log n)$ time utilizing the 
Remez algorithm as previously noted.

Though our construction uses $O(\sqrt{n/\log n})$ such polynomials, for each symbol $i$ \emph{appearing} in the sample sequence $X^n$, we need to compute just one such polynomial to estimate $f_i(p_i)$. The number of symbols appearing in $X^n$ is trivially at most $n$, hence the total time complexity is $O(n\cdot \text{poly} (\log n))$, which is near-linear in $n$. 
In fact, the computation of all the 
$O(k\sqrt{n/\log n})$ possible
polynomials can be even performed off-line (without samples), 
and will not affect our estimator's time complexity.

\section{Conclusion and future directions}\label{conclusion}\vspace{-0.5em}
We introduced a piecewise min-max polynomial methodology for
approximating additive distribution properties. This method yields
the first generic approach to constructing sample- and time-efficient
estimators for all sufficiently smooth properties.
This approach provides the first:
1) sublinear-sample estimators for general Lipschitz properties;
2) general near-optimal private estimators;
3) unified min-max-MSE-achieving estimators for six important
properties;
4) near-optimal high-confidence estimators.
Unlike previous works, our method covers both symmetric and
non-symmetric, differentiable and non-differentiable, properties,
under both private and non-private settings. 

In addition, in Appendix~\ref{conc}, we establish a McDiarmid's inequality under Poisson sampling, which is of independent interest.

Two natural extensions are of interest: 1) generalizing the results to
properties involving multiple unknown distributions such as 
distributional divergences; 2) extending the techniques to
derive a similarly unified approach for the closely related
field of distribution property testing.

Besides the results we established for piecewise polynomial estimators 
under the min-max estimation framework,
the works of~\cite{H19,H18} recently proposed and studied a different formulation
of competitive property estimation that aims to emulate the instance-by-instance performance of
the widely used empirical plug-in estimator, using a logarithmic smaller sample size. 
It is also quite meaningful to investigate the performance of our technique through this new formulation.
\subsubsection*{Acknowledgments}

We are grateful to the National Science Foundation (NSF) for supporting this work through grants CIF-1564355 and CIF-1619448.

\appendix
\section{Outline for appendices}
The appendices are organized as follows.

Appendix~\ref{sec:prop}: We address function estimation and proves Theorem~\ref{thm:single_function_estimator}. Our objective is to design a small-bias estimator whose approximation value is highly concentrated around its mean.

Appendix~\ref{sec:tail}: We present several ancillary results that will be used in subsequent proofs. 

Appendix~\ref{sec:estconst}: We construct the function estimator $\hat{g}^*$ using piecewise min-max polynomials.

Appendix~\ref{sec:bias},~\ref{sec:variance}, and~\ref{sec:conc}: We derive the bias, variance, and tail probability bounds presented in Theorem~\ref{thm:single_function_estimator}, respectively, showing that the estimator $\hat{g}^*$ admits strong theoretical guarantees for a broad class of functions. 

In particular, in Appendix~\ref{conc}, we establish a McDiarmid's inequality under Poisson sampling, which is of independent interest. 

Appendix~\ref{corproof}: We apply the function estimation technique derived in Section~\ref{sec:prop} to derive our generic method for learning additive properties, and prove other theorems. 

Appendix~\ref{cor2}: \vspace{-0.07em}We establish the results in Theorem~\ref{cor:sample_guarantees} and show that for all sufficiently smooth properties, our property estimator $\hat{f}^*$ achieves the state-of-the-art performance.

Appendix~\ref{cor5}: We consider the problem of estimating Lipschitz properties. By proving Theorem~\ref{cor:Lipschitz}, we show for the first time that all Lipschitz properties can be estimated
up to a small error $\varepsilon$ using $\mathcal{O}(k/(\varepsilon^2\log k))$ samples,
 with probability at least $2/3$.
 
Appendix~\ref{sec:dp}: We establish a general result on private property estimation, which trivially implies those stated in Section~\ref{sec:result} (Implication 4). 

Appendix~\ref{sec:highprob}: We utilize Theorem~\ref{cor:sample_guarantees} and some specific constructions to prove the upper and lower bounds in Theorem~\ref{cor:highprob}, respectively. 

\section{Proof of Theorem 4: Estimating functions of Bernoulli probabilities}\label{sec:prop}

\subsection{Ancillary results}\label{sec:tail}
\paragraph{Useful tools}\

The following two lemmas provide tight bounds on the tail probability of a Poisson or binomial random variable. We use these inequalities throughout the proofs.
\begin{lemma}[Chernoff Bound~\cite{concen}]\label{tailprob}
Let $X$ be a Poisson or binomial random variable with mean $\mu$, then for any $\delta>0$, 
\[
\PP(X\geq{(1+\delta)\mu})\leq{{\left(\frac{e^\delta}{(1+\delta)^{(1+\delta)}}\right)}^{\mu}}\leq{e^{-(\delta^2\land\delta)\mu/3}}
\]
and for any $\delta\in{(0, 1)}$, 
\[
\PP(X\leq{(1-\delta)\mu})\leq{{\left(\frac{e^{-\delta}}{(1-\delta)^{(1-\delta)}}\right)}^{\mu}}\leq{e^{-\delta^2\mu/2}}.
\]
\end{lemma}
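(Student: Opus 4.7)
The plan is to use the standard Chernoff--Cram\'er approach: apply Markov's inequality to $e^{tX}$ and optimize over $t$. First I would unify the two cases by noting that both a Poisson random variable with mean $\mu$ and a binomial random variable with mean $\mu$ satisfy the moment generating function bound $\EE[e^{tX}] \leq e^{\mu(e^t - 1)}$ for all real $t$. For Poisson this is an equality by direct power series computation; for binomial $X = \sum_{i=1}^n X_i$ with $\PP(X_i=1)=p_i$, independence together with the elementary inequality $1 + x \leq e^x$ give $\EE[e^{tX}] = \prod_i(1 + p_i(e^t-1)) \leq e^{\mu(e^t-1)}$, where $\mu=\sum_i p_i$. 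After this unification, the two tails can be treated in parallel.

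For the upper tail with $\delta>0$, I would take $t>0$ and apply Markov to obtain
\[
\PP(X \geq (1+\delta)\mu) \leq e^{-t(1+\delta)\mu}\,\EE[e^{tX}] \leq \exp\!\Paren{\mu(e^t - 1 - t(1+\delta))}.
\]
Differentiating the exponent in $t$ yields the optimizer $t^* = \ln(1+\delta)$; substituting and simplifying recovers exactly the first claimed bound ${(e^\delta/(1+\delta)^{1+\delta})}^\mu$. The lower tail proceeds identically but with $t<0$ chosen optimally as $t^* = \ln(1-\delta)$, which is negative precisely when $\delta \in (0,1)$, yielding ${(e^{-\delta}/(1-\delta)^{1-\delta})}^\mu$.

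The second, more usable exponential bounds follow by establishing two elementary calculus inequalities: (i) $\delta - (1+\delta)\ln(1+\delta) \leq -(\delta \wedge \delta^2)/3$ for $\delta>0$, handled by splitting into $\delta \in (0,1]$ (a Taylor expansion of $(1+\delta)\ln(1+\delta)$ produces the $\delta^2/3$ contribution) and $\delta > 1$ (a monotonicity argument on $h(\delta) := \delta - (1+\delta)\ln(1+\delta) + \delta/3$ produces the $\delta/3$ contribution); and (ii) $-\delta - (1-\delta)\ln(1-\delta) \leq -\delta^2/2$ for $\delta \in (0,1)$, which follows immediately from the series $\ln(1-\delta) = -\sum_{k\geq 1}\delta^k/k$ after cancellation of the first-order term and grouping.

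The main obstacle is concentrated in these auxiliary real-analytic inequalities, particularly the case analysis that produces the $\delta \wedge \delta^2$ structure in the upper tail; the MGF computation, Markov step, and single-variable optimization are all routine. Since the lemma is cited from~\cite{concen}, I would present these derivations tersely and defer further detail to that reference.
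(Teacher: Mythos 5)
Your proposal is correct: the MGF bound $\EE[e^{tX}]\le e^{\mu(e^t-1)}$ (exact for Poisson, via $1+x\le e^x$ for binomial), the Markov step with the optimizers $t^*=\ln(1+\delta)$ and $t^*=\ln(1-\delta)$, and the two elementary inequalities $\delta-(1+\delta)\ln(1+\delta)\le-(\delta\wedge\delta^2)/3$ and $-\delta-(1-\delta)\ln(1-\delta)\le-\delta^2/2$ all check out and together give exactly the stated bounds. The paper itself offers no proof of this lemma — it is quoted directly from the cited concentration-inequalities reference — and your argument is precisely the standard Chernoff--Cram\'er derivation given there, so nothing further is needed.
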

By setting $\delta$ to be $1/2$ and $1$ in Lemma~\ref{tailprob}, we have the following corollary.
\begin{lemma}\label{cortail}
Let $X$ be a Poisson or binomial random variable with mean $\mu$, then
\[
\PP(X\leq{\frac{1}{2}\mu})\leq{e^{-0.15\mu}}
\]
and
\[
\PP(X\geq{2\mu})\leq{e^{-0.38\mu}}. 
\]
\end{lemma}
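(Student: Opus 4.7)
The statement to prove is \textbf{Lemma \ref{cortail}}, which specializes the Chernoff bound of Lemma \ref{tailprob} to the concrete choices $\delta=1/2$ (lower tail) and $\delta=1$ (upper tail). My plan is simply to carry out these two substitutions, verify that the resulting base of the exponential is strictly smaller than $e^{-0.15}$ and $e^{-0.38}$ respectively, and report the bounds.

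For the lower-tail claim, I would instantiate the tight form
\[
\PP(X\le(1-\delta)\mu)\le\Paren{\frac{e^{-\delta}}{(1-\delta)^{1-\delta}}}^{\!\mu}
\]
from Lemma \ref{tailprob} at $\delta=1/2$, giving a base equal to $\sqrt{2}\,e^{-1/2}$. The key numerical check is $\ln(\sqrt{2}\,e^{-1/2})=\tfrac12\ln 2-\tfrac12\approx-0.1534<-0.15$, which yields $\PP(X\le \mu/2)\le e^{-0.15\mu}$. I note that the looser bound $e^{-\delta^2\mu/2}$ from Lemma \ref{tailprob} would only give $e^{-\mu/8}=e^{-0.125\mu}$, so using the sharper form is essential; this is the only subtlety in the step.

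For the upper-tail claim I would analogously use the tight form
\[
\PP(X\ge(1+\delta)\mu)\le\Paren{\frac{e^{\delta}}{(1+\delta)^{1+\delta}}}^{\!\mu}
\]
at $\delta=1$, producing the base $e/4$. Then $\ln(e/4)=1-2\ln 2\approx-0.3863<-0.38$, which gives $\PP(X\ge 2\mu)\le e^{-0.38\mu}$. Again the looser bound $e^{-\delta\mu/3}=e^{-\mu/3}\approx e^{-0.333\mu}$ would be too weak, confirming that the multiplicative form is what we need.

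The only real obstacle is the numerical verification of $\tfrac12\ln 2-\tfrac12<-0.15$ and $1-2\ln 2<-0.38$, which are immediate from standard decimal expansions of $\ln 2\approx0.6931$. Both bounds are stated to two decimal digits in the lemma, so the small slack in each inequality (roughly $0.003$ and $0.006$ respectively) is ample. Once these two scalar inequalities are recorded, the lemma follows as a direct specialization of Lemma \ref{tailprob}, uniformly in whether $X$ is Poisson or binomial since the same multiplicative Chernoff bound covers both cases.
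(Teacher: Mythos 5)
Your proposal is correct and matches the paper's approach exactly: the paper also obtains Lemma~\ref{cortail} by substituting $\delta=1/2$ and $\delta=1$ into the multiplicative (tight) form of the Chernoff bound in Lemma~\ref{tailprob}, and your numerical checks $\tfrac12\ln 2-\tfrac12\approx-0.153$ and $1-2\ln 2\approx-0.386$ are the right ones. Your additional observation that the simplified exponents $e^{-\delta^2\mu/2}$ and $e^{-(\delta^2\wedge\delta)\mu/3}$ would be too weak is a useful clarification the paper leaves implicit.
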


The \emph{$n$-sensitivity} of an estimator $\hat{f}$
is the maximum possible change in its value when a sample sequence of 
 size-$n$ input sequence is modified at exactly one location,
\[
S(\hat{f},n)
:=
\max\{|\hat{f}\Paren{x^n}-\hat{f}\Paren{y^n}|: x^n\text{ and
  }y^n\text{ differ in one location}\}.
\]
McDiarmid's inequality relates $S(\hat{f},n)$ to the tail probability of $\hat{f}(X^n)$.
\begin{lemma}[McDiarmid's inequality~\cite{McD}]\label{McD}
Let $\hat{f}$ be an estimator. For any constant $\varepsilon>0$, distribution $\vec p\in\Delta_k$, and \iid\ sample sequence $X^n\sim \vec p$,
\[
\Pr\Paren{\Abs{\hat{f}(X^n)-\EE[\hat{f}(X^n)]}>\varepsilon}\le 2\exp\Paren{-\frac{2\varepsilon^2}{nS^2(\hat{f},n)}}.
\]
\end{lemma}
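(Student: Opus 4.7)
The plan is to prove McDiarmid's inequality by the classical Doob martingale route combined with the Azuma--Hoeffding inequality, specialized to the \iid\ setting in the statement. First I would build the Doob martingale adapted to the sample sequence: set $Z_0 := \EE[\hat f(X^n)]$ and, for $i \in [n]$, $Z_i := \EE[\hat f(X^n) \mid X_1,\ldots,X_i]$, so that $Z_n = \hat f(X^n)$ and, by the tower property, $(Z_i)_{i=0}^n$ is a martingale with respect to $\mathcal F_i := \sigma(X_1,\ldots,X_i)$. The target quantity $|\hat f(X^n)-\EE[\hat f(X^n)]|$ is then exactly $|Z_n - Z_0|$.

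Second, I would establish a uniform bound on the martingale differences $D_i := Z_i - Z_{i-1}$. Let $X_i'$ denote an independent copy of $X_i$, also drawn from $\vec p$. Using the \iid\ structure, I would rewrite
\[
D_i = \EE\bigl[\hat f(X_1,\ldots,X_i,X_{i+1},\ldots,X_n)-\hat f(X_1,\ldots,X_{i-1},X_i',X_{i+1},\ldots,X_n) \,\bigm|\, X_1,\ldots,X_i\bigr],
\]
where the inner expectation averages over $X_i'$ and $X_{i+1},\ldots,X_n$. Because the two sample sequences inside the bracket differ in exactly one location (coordinate $i$), the definition of the $n$-sensitivity gives $|\hat f(\cdot) - \hat f(\cdot)| \leq S(\hat f,n)$ pointwise, and the conditional Jensen/triangle inequality preserves this: $|D_i| \leq S(\hat f,n)$ almost surely.

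Third, I would apply Azuma--Hoeffding to $(Z_i)$. With $c_i := S(\hat f,n)$ for every $i$, the standard exponential supermartingale argument (Hoeffding's lemma applied to each conditioned difference, followed by iterated conditioning and Markov's inequality optimized in the free parameter) yields
\[
\Pr\bigl(|Z_n - Z_0| > \varepsilon\bigr) \leq 2\exp\!\Paren{-\frac{2\varepsilon^2}{\sum_{i=1}^n c_i^2}} = 2\exp\!\Paren{-\frac{2\varepsilon^2}{n\,S^2(\hat f,n)}},
\]
which is exactly the inequality in the lemma.

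The main obstacle is the martingale-difference bound: one must argue carefully that the sensitivity constraint, which is a pointwise property of $\hat f$ on pairs of sequences differing in a single coordinate, survives being embedded inside nested conditional expectations. The coupling via the independent copy $X_i'$ is what makes this transparent, and the \iid\ assumption in the statement ensures $X_i'$ has the same law as $X_i$ so that $\EE[Z_i\mid \mathcal F_{i-1}] = Z_{i-1}$. The Azuma--Hoeffding step itself is routine once the $|D_i|$ bound is in hand, so no new probabilistic ingredient beyond Hoeffding's lemma is required.
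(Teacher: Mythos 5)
Your overall route is the classical one (the paper itself does not prove this lemma; it simply cites it), and steps one and two are fine: the Doob martingale is set up correctly, and the coupling with an independent copy $X_i'$ does give $|Z_i-Z_{i-1}|\le S(\hat f,n)$ almost surely. The gap is in the final step: feeding the two-sided bound $|D_i|\le c_i$ into Azuma--Hoeffding does \emph{not} yield $2\exp\Paren{-2\varepsilon^2/\sum_i c_i^2}$; it yields only $2\exp\Paren{-\varepsilon^2/(2\sum_i c_i^2)}$, because Hoeffding's lemma applied to a variable confined to $[-c_i,c_i]$ (an interval of length $2c_i$) gives $\EE[e^{\lambda D_i}\mid\mathcal F_{i-1}]\le e^{\lambda^2 c_i^2/2}$. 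As written, your argument proves the tail bound with exponent $\varepsilon^2/(2nS^2(\hat f,n))$, which is a factor of $4$ weaker in the exponent than the stated inequality.

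To recover the stated constant you need the sharper conditional-range observation that distinguishes McDiarmid from plain Azuma: conditionally on $\mathcal F_{i-1}$, write $Z_i=h_i(X_i)$ with
\[
h_i(x):=\EE\Br{\hat f(X_1,\ldots,X_{i-1},x,X_{i+1},\ldots,X_n)\mid X_1,\ldots,X_{i-1}},
\]
which is well defined because $X_{i+1},\ldots,X_n$ are independent of $\mathcal F_i$. The sensitivity bound gives $\sup_x h_i(x)-\inf_x h_i(x)\le S(\hat f,n)$, so $D_i=h_i(X_i)-\EE[h_i(X_i)\mid\mathcal F_{i-1}]$ lies, conditionally, in an interval of length at most $S(\hat f,n)$ (not $2S(\hat f,n)$). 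Hoeffding's lemma then gives $\EE[e^{\lambda D_i}\mid\mathcal F_{i-1}]\le e^{\lambda^2 S^2(\hat f,n)/8}$, and iterating plus optimizing the Chernoff parameter yields exactly $2\exp\Paren{-2\varepsilon^2/(nS^2(\hat f,n))}$. For the way the paper uses the lemma (inside $\Theta(\cdot)$ exponents) your weaker constant would in fact suffice, but as a proof of the lemma as stated, the one-sided range refinement is the missing ingredient.
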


As illustrated in the main paper, our construction relies on a variety of polynomials. To analyze these polynomials and relate them to other quantities, we often need to bound the polynomials' coefficients based on their ranges.
For a real polynomial, the next lemma provides tight upper bounds on the magnitude of its non-constant coefficients.
\begin{lemma}\label{coeffbound}
Let $p(x)=\sum_{j=0}^d a_j x^j$ be a degree-$d$ real polynomial and
\[
A:= \sup_{x_1,x_2\in[0,1]} |p(x_1)-p(x_2)|,
\]
then for $j\geq 1$,
\[
|a_j|\leq A \cdot 2^{3.5d}.
\]
\end{lemma}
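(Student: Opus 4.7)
The plan is to reduce to bounding the monomial coefficients of a polynomial whose sup-norm on $[0,1]$ is at most $A$, then expand in the shifted Chebyshev basis and bound each coefficient individually. First, set $q(x) := p(x) - p(0) = \sum_{j=1}^d a_j x^j$. For every $x \in [0,1]$ we have $|q(x)| = |p(x) - p(0)| \le A$, so $\|q\|_{L^\infty[0,1]} \le A$, while the coefficients of $q$ in degrees $\ge 1$ coincide with those of $p$. It therefore suffices to bound $|[x^k]q|$ for $1 \le k \le d$.

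The next step is to expand $q$ in the shifted Chebyshev basis $T_j^*(x) := T_j(2x-1)$, which is orthogonal on $[0,1]$ with respect to $1/\sqrt{x(1-x)}$. Writing $q = \sum_{j=0}^d c_j T_j^*$, the substitution $x = (1+\cos\theta)/2$ yields $c_j = (2/\pi)\int_0^\pi q((1+\cos\theta)/2)\cos(j\theta)\,d\theta$ for $j \ge 1$, and combined with $\|q\|_\infty \le A$ this gives $|c_j| \le 2A$. Now writing $T_j^*(x) = \sum_k t_{j,k} x^k$, I would bound each $|t_{j,k}|$ by Cauchy's integral formula on the unit circle $|z| = 1$: $|t_{j,k}| \le \max_{|z|=1} |T_j(2z-1)|$. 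Since the map $z \mapsto 2z-1$ sends $|z|=1$ into the disk $|w| \le 3$, and the identity $T_j(w) = \tfrac{1}{2}[(w+\sqrt{w^2-1})^j + (w-\sqrt{w^2-1})^j]$ combined with $(w+\sqrt{w^2-1})(w-\sqrt{w^2-1})=1$ yields $|T_j(w)| \le |w+\sqrt{w^2-1}|^j \le (3+\sqrt{10})^j$ on that disk, this gives $|t_{j,k}| \le (3+\sqrt{10})^j$.

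Combining the two bounds via $a_k = \sum_{j \ge k} c_j t_{j,k}$ gives $|a_k| \le 2A\sum_{j=0}^d (3+\sqrt{10})^j \le 4A(3+\sqrt{10})^d$, and since $3+\sqrt{10} < 2^{2.63}$, this is at most $A \cdot 2^{3.5d}$ whenever $d \ge 3$. The two residual cases $d \in \{1,2\}$ follow by elementary computation: for $d=1$ the relation $A \ge |p(1)-p(0)| = |a_1|$ is immediate, and for $d=2$, evaluating $p$ at $\{0, 1/2, 1\}$ gives $|a_1 + a_2| \le A$ and $|2a_1 + a_2| \le 4A$, from which $|a_1|, |a_2| \le 6A \le A \cdot 2^7$.

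The main obstacle is producing a sufficiently tight exponential-in-$j$ bound of the form $|t_{j,k}| \le c^j$ with $c$ small: the factor-of-two losses from the Chebyshev coefficient bound and the geometric sum consume some slack, and any base $c \ge 2^{3.5}$ would break the claim. The cleanest route is the Cauchy-plus-Joukowski estimate described above, which yields $c = 3+\sqrt{10} \approx 6.16 < 2^{2.63}$ and comfortably leaves room for these constant factors once $d \ge 3$. Direct inspection of the coefficient lists of $T_j$ would give a slightly tighter base (on the order of $1+\sqrt 2$), but the above is more than enough for the stated bound.
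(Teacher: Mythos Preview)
The paper states this lemma as a tool in its ``Ancillary results'' section without giving a proof or a citation, so there is no in-paper argument to compare against. Your proof is correct and self-contained: the reduction to $q=p-p(0)$ with $\|q\|_{\infty}\le A$, the shifted-Chebyshev expansion with $|c_j|\le 2A$, and the Cauchy/Joukowski bound $|t_{j,k}|\le(3+\sqrt{10})^j$ all check out, and the resulting $4A(3+\sqrt{10})^d\le A\cdot 2^{3.5d}$ holds for $d\ge 3$ with the small cases handled directly.

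One minor presentational point: the intermediate inequality $|T_j(w)|\le |w+\sqrt{w^2-1}|^j$ is only literally true for the branch with $|w+\sqrt{w^2-1}|\ge 1$; what you actually use (and what is true) is $|T_j(w)|\le \max(|u|,|v|)^j$ with $u,v=w\pm\sqrt{w^2-1}$, and both $|u|,|v|\le 3+\sqrt{10}$ on $|w|\le 3$. This does not affect the conclusion.
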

We will utilize the above lemma to bound the variance of polynomial-based estimators.

\paragraph{Unbiased estimator of $\boldsymbol{(p-x)^v}$ and its characterization}\

The following polynomial is related to the unbiased estimator of $(p-x)^v$ under \emph{Poisson sampling}, where we make the sample size an independent Poisson random variable. 
Note that both $x\in\mathbb{R}$ and $v\in \mathbb{N}$ are given constant parameters. 
\[
h_{v,x}\Paren{y}:=\sum_{l=0}^{v}\binom{v}{l}(-x)^{v-l}\prod_{l'=0}^{l-1}\Paren{\frac{y}{n}-\frac{l'}{n}}.
\]
This polynomial will play an important role in our consecutive constructions and corresponding proofs. First, we establish and present several useful attributes of $h_{v,x}\Paren{y}$ below.
\begin{lemma}\label{boundg0}
For a Poisson random variable $Y\sim \Poi(np)$, 
\[
\EE[h_{v,x}(Y)]=(p-x)^v.
\]\par \vspace{-1em}
\end{lemma}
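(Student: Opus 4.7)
The plan is to exploit the well-known factorial moment identity for Poisson random variables, namely that for $Y \sim \Poi(\mu)$ and any nonnegative integer $l$, the $l$-th factorial moment satisfies $\EE[Y(Y-1)\cdots(Y-l+1)] = \mu^l$. This is a textbook fact, provable either by direct computation against the Poisson pmf (pull out $\mu^l e^{-\mu}$ and recognize a shifted Poisson sum) or, more slickly, by differentiating the probability-generating function $\EE[z^Y] = e^{\mu(z-1)}$ exactly $l$ times at $z = 1$.

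First, I would rewrite the inner product by factoring out $1/n^l$:
\[
\prod_{l'=0}^{l-1}\Paren{\frac{Y}{n}-\frac{l'}{n}} = \frac{1}{n^l}\prod_{l'=0}^{l-1}(Y-l') = \frac{Y(Y-1)\cdots(Y-l+1)}{n^l}.
\]
Applying the factorial moment identity with $\mu = np$ then yields $\EE\bigl[\prod_{l'=0}^{l-1}(Y/n-l'/n)\bigr] = (np)^l/n^l = p^l$. In particular, this shows that the product is the standard unbiased Poisson-sampling estimator of $p^l$, which is precisely the role it plays inside $h_{v,x}$.

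Next, I would simply use linearity of expectation term by term in the finite sum defining $h_{v,x}$:
\[
\EE[h_{v,x}(Y)] = \sum_{l=0}^{v}\binom{v}{l}(-x)^{v-l}\cdot\EE\Br{\prod_{l'=0}^{l-1}\Paren{\frac{Y}{n}-\frac{l'}{n}}} = \sum_{l=0}^{v}\binom{v}{l}p^l(-x)^{v-l},
\]
and conclude by the binomial theorem that this last sum equals $(p-x)^v$, which is exactly the claim.

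No step here is a real obstacle; the only subtlety worth flagging is that the identity crucially uses \emph{Poisson} sampling rather than fixed-$n$ binomial sampling --- under binomial sampling one would instead need the falling factorial of $n$ in the denominator (i.e., $n(n-1)\cdots(n-l+1)$) to get an unbiased estimator of $p^l$. This is consistent with the paper's stated setup that $h_{v,x}$ is designed for the Poissonized regime.
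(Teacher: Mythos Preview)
Your proposal is correct and follows essentially the same route as the paper: both use linearity of expectation, the Poisson factorial-moment identity $\EE\bigl[\prod_{l'=0}^{l-1}(Y-l')\bigr]=(np)^l$, and then the binomial theorem. The only cosmetic difference is that you invoke the factorial-moment identity as a known fact (with a pgf justification), whereas the paper computes it inline by expanding against the Poisson pmf and shifting the summation index.
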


\begin{proof}
By the linearity of expectation and definition of Poisson random variables,
\begin{align*}
\EE[h_{v,x}(Y)]
& = \sum_{l=0}^{v}\binom{v}{l}(-x)^{v-l}\EE\Br{\prod_{l'=0}^{l-1}\Paren{\frac{Y}{n}-\frac{l'}{n}}}\\
& = \sum_{l=0}^{v}\binom{v}{l}(-x)^{v-l}\frac{1}{n^l}\EE\Br{\prod_{l'=0}^{l-1}\Paren{Y-l'}}\\
& = \sum_{l=0}^{v}\binom{v}{l}(-x)^{v-l}\frac{e^{-np}}{n^l}\sum_{j=0}^{\infty}{\frac{(np)^j}{j!}\prod_{l'=0}^{l-1}\Paren{j-l'}}\\
& = \sum_{l=0}^{v}\binom{v}{l}(-x)^{v-l}\frac{e^{-np}}{n^l}\sum_{j=l}^{\infty}{\frac{(np)^j}{(j-l)!}}\\
& = \sum_{l=0}^{v}\binom{v}{l}(-x)^{v-l}\frac{(np)^l}{n^l}\Paren{e^{-np}\sum_{j=l}^{\infty}{\frac{(np)^{j-l}}{(j-l)!}}}\\
& = \sum_{l=0}^{v}\binom{v}{l}(-x)^{v-l}p^l\\
& = (x-p)^v.\qedhere
\end{align*}
\end{proof}

Lemma~\ref{boundg0} implies that polynomial $h_{v,x}(Y)$ is the unbiased estimator of $(\EE[Y]/n-x)^v$ for $Y\sim \Poi(\cdot)$. 
The next three lemmas bound the polynomial's value when the input variable is close to its expectation.

\begin{lemma}\label{boundg1}
For a Poisson random variable $Y\sim \Poi(np)$, 
\[
\EE[{h_{v,0}^2(Y)}]\leq \frac{\EE[Y^{2v}]}{n^{2v}}.
\]
Furthermore, if for some positive constant $c'$, both $np$ and $2v$ are at most $c' \log n$, 
and $v\ge 1$,
\[
\EE[{h_{v,0}^2(Y)}]\leq 2p\Paren{\frac{2c'\log n}{n}}^{2v-1}.
\]
\end{lemma}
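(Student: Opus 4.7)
The plan is to reduce both bounds to properties of the falling factorial $Y^{\underline{v}} := Y(Y-1)\cdots(Y-v+1)$. First I would observe that setting $x=0$ in the definition of $h_{v,x}$ collapses the binomial sum to its single $l=v$ term, giving
\[
h_{v,0}(y) \;=\; \prod_{l=0}^{v-1}\Paren{\frac{y}{n}-\frac{l}{n}} \;=\; \frac{y^{\underline{v}}}{n^v}.
\]
Since $y^{\underline{v}} = 0$ for integers $0 \le y < v$ and $0 \le y^{\underline{v}} \le y^v$ for integers $y \ge v$, squaring yields $(y^{\underline{v}})^2 \le y^{2v}$ for every non-negative integer $y$. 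Taking expectations over $Y \sim \Poi(np)$ immediately gives the first inequality $\EE[h_{v,0}^2(Y)] \le \EE[Y^{2v}]/n^{2v}$.

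For the refined bound I would invoke the classical Vandermonde-type expansion of a product of falling factorials,
\[
(y^{\underline{v}})^2 \;=\; \sum_{m=0}^{v}\binom{v}{m}^{2}\,m!\;y^{\underline{2v-m}},
\]
provable by induction on $v$ (using $y^{\underline{v+1}} = y^{\underline{v}}(y-v)$) or by a direct bijective argument counting ordered pairs of injective maps $[v]\to[y]$ according to the matched positions. Combining with the standard Poisson factorial-moment identity $\EE[Y^{\underline{j}}]=\mu^j$ (where $\mu := np$) then gives
\[
\EE[(Y^{\underline{v}})^2] \;=\; \sum_{m=0}^{v}\binom{v}{m}^{2}\,m!\;\mu^{2v-m}.
\]

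The remainder is elementary algebra. Using $\binom{v}{m}^{2}m! = \binom{v}{m}\cdot v^{\underline{m}}\le\binom{v}{m}\,v^{m}$ and the binomial theorem,
\[
\EE[(Y^{\underline{v}})^2] \;\le\; \sum_{m=0}^{v}\binom{v}{m}v^{m}\mu^{2v-m} \;=\; \mu^{v}(\mu+v)^{v}.
\]
Writing $M := 2c'\log n$, the hypotheses $\mu\le c'\log n$ and $2v\le c'\log n$ give $\mu\le M/2$ and $v\le M/4$, so $\mu+v\le 3M/4\le M$ and hence $\mu^{v}(\mu+v)^{v}\le \mu^{v}M^{v}$. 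The desired inequality $\mu^{v}M^{v}\le 2\mu M^{2v-1}$ reduces to $\mu^{v-1}\le 2 M^{v-1}$, which is immediate from $\mu\le M/2$ (trivially for $v=1$, and for $v\ge 2$ since $(M/2)^{v-1}\le M^{v-1}$). Dividing through by $n^{2v}$ converts $2\mu M^{2v-1}/n^{2v}$ into the claimed $2p(M/n)^{2v-1}$.

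The main obstacle I anticipate is writing out the falling-factorial expansion identity cleanly: although standard, a rigorous derivation (by induction, or by a bijective/inclusion–exclusion argument on the image overlap) takes a few lines and requires care at the boundary $y<v$ where both sides vanish. Once that identity is in place, every remaining step is a one-line algebraic manipulation, and in particular no Chernoff-style tail decomposition is needed.
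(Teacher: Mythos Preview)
Your proof is correct. For the first inequality you and the paper do the same thing: identify $h_{v,0}(y)=y^{\underline{v}}/n^{v}$ and use $(y^{\underline{v}})^{2}\le y^{2v}$ for nonnegative integers $y$.

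For the second inequality you take a genuinely different route. The paper \emph{reuses} the first inequality and then bounds the raw moment $\EE[Y^{2v}]$ via the Stirling-number expansion $\EE[Y^{2v}]=\sum_{t}S(2v,t)\mu^{t}$ together with the crude bound $S(2v,t)\le\binom{2v}{t}t^{2v-t}\le\binom{2v}{t}(2v)^{2v-t}$, collapsing to $(2v+c'\log n)^{2v}\cdot\frac{np}{c'\log n}$ by the binomial theorem. You instead compute $\EE[(Y^{\underline{v}})^{2}]$ \emph{exactly} via the falling-factorial product identity $(y^{\underline{v}})^{2}=\sum_{m}\binom{v}{m}^{2}m!\,y^{\underline{2v-m}}$ and the Poisson factorial moments $\EE[Y^{\underline{j}}]=\mu^{j}$, arriving at the clean intermediate bound $\mu^{v}(\mu+v)^{v}$. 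Your approach is sharper and more structural (it never passes through the loose $Y^{2v}$ bound), at the cost of having to justify the product identity; the paper's approach is more elementary in that it only needs the standard Stirling-number bound, but it takes a longer chain of inequalities. Both routes land on the same final expression after the algebra.
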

\begin{proof}
We consider the first inequality. Note that for all $y\in \mathbb{Z}^+$, 
\[
 0 \leq \prod_{l'=0}^{v-1}\Paren{y-l'}= \indic_{y\geq v} \cdot \prod_{l'=0}^{v-1}\Paren{y-l'}\leq y^v.
\]
This inequality trivially implies that
\[
\EE[{h_{v,0}^2(Y)}]=\frac{1}{n^{2v}} \EE\Paren{\prod_{l'=0}^{v-1}\Paren{Y-l'}}^2\leq \frac{\EE[Y^{2v}]}{n^{2v}}.
\]
Based on the first inequality, we prove the second one as follows.
\begin{align*}
\EE[{h_{v,0}^2(Y)}]
&
\leq \frac{\EE[Y^{2v}]}{n^{2v}}\\
&
\leq\frac{1}{n^{2v}}\sum_{t=1}^{2v} t^{2v-t}\binom{2v}{t}(np)^t\\
&
\leq\frac{1}{n^{2v}}\sum_{t=1}^{2v} {(2v)}^{2v-t}\binom{2v}{t}(c'\log n)^t\frac{np}{c'\log n}\\
&
\leq \frac{1}{n^{2v}}(2v+c'\log n)^{2v}\frac{np}{c'\log n}\\
&
\leq 2p\Paren{\frac{2c'\log n}{n}}^{2v-1}.\qedhere
\end{align*}
\end{proof}
\begin{lemma}\cite{jiaol1}\label{boundg2}
For a Poisson random variable $Y\sim \Poi(np)$ and a parameter 
\[
M\geq\max\Brace{\frac{n(p-x)^2}{p}, v},
\]
we have
\[
\EE[{h_{v,0}^2(Y)}]\leq \Paren{\frac{2Mp}{n}}^v.
\]
\end{lemma}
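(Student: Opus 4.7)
The plan is to compute $\EE[h_{v,x}^2(Y)]$ in closed form using Poisson factorial moments, and then bound the result using the two hypotheses on $M$ term by term. (Reading the statement, I am assuming the ``$h_{v,0}$'' in the conclusion is a typographical slip for ``$h_{v,x}$'', since otherwise the hypothesis on $M$ would not involve the estimator at all.) The key observation is that $h_{v,x}$ is built out of falling-factorial powers $Y^{\underline{l}}:=Y(Y-1)\cdots(Y-l+1)$: one sees directly from the definition that
\[
h_{v,x}(Y)=\sum_{l=0}^v\binom{v}{l}(-x)^{v-l}\,\frac{Y^{\underline{l}}}{n^l},
\]
and these building blocks satisfy $\EE[Y^{\underline{l}}]=(np)^l$ when $Y\sim\Poi(np)$, which is the identity underlying Lemma~\ref{boundg0}.

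The first step is to square the expansion and evaluate the joint factorial moments $\EE[Y^{\underline{l}}Y^{\underline{m}}]$. I would use the classical combinatorial identity
\[
Y^{\underline{l}}\cdot Y^{\underline{m}}=\sum_{k=0}^{\min(l,m)}\binom{l}{k}\binom{m}{k}\,k!\,Y^{\underline{l+m-k}},
\]
which has a short combinatorial proof by grouping ordered pairs of an $l$-tuple and an $m$-tuple drawn from $[Y]$ according to the size $k$ of their intersection. Taking expectations replaces $Y^{\underline{l+m-k}}$ with $(np)^{l+m-k}$; then shifting indices $l\to a+k,\ m\to b+k$ and applying the absorption identity $\binom{v}{a+k}\binom{a+k}{k}=\binom{v}{k}\binom{v-k}{a}$ makes the double sum over $a,b$ factor as the square of a binomial expansion of $(p-x)^{v-k}$. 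I expect the resulting closed form to be
\[
\EE[h_{v,x}^2(Y)]=\sum_{k=0}^v\binom{v}{k}^{\!2}\,k!\,\frac{p^k}{n^k}\,(p-x)^{2(v-k)},
\]
which I would sanity-check at $v=1$: it reduces to $(p-x)^2+p/n$, matching $\Var(Y/n)+(\EE[Y/n]-x)^2$ exactly.

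The final step is a term-by-term estimate. From $n(p-x)^2/p\le M$ I get $(p-x)^{2(v-k)}\le (Mp/n)^{v-k}$, and from $v\le M$ I get $\binom{v}{k}^2 k!=\binom{v}{k}\,v^{\underline{k}}\le\binom{v}{k}\,M^k$. Multiplying these bounds, the $k$-th summand is at most $\binom{v}{k}(Mp/n)^v$, and one application of the Binomial Theorem $\sum_k\binom{v}{k}=2^v$ delivers the claimed bound $(2Mp/n)^v$.

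The only non-routine obstacle is the Vandermonde-style bookkeeping needed to collapse the triple sum into the clean closed form above; aligning every binomial coefficient, factorial, and sign is where arithmetic errors would likely creep in. Once that identity is in hand, both hypotheses on $M$ are used sharply — the first to kill the $(p-x)^{2(v-k)}$ factor and the second to kill the $\binom{v}{k}^2 k!$ factor — and the final bound is immediate.
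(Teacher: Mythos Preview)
Your proof is correct, and your reading of the typographical slip is right: the lemma is invoked in Section~\ref{sec:variance} with $x=x_j\neq 0$, so the conclusion must concern $h_{v,x}$ rather than $h_{v,0}$.

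As for comparison: the paper does not supply its own proof of this lemma---it is quoted from~\cite{jiaol1}---so there is nothing in the present paper to compare your argument against. Your route is the standard one: expand $h_{v,x}(Y)$ in falling factorials, use the product identity $Y^{\underline{l}}Y^{\underline{m}}=\sum_{k}\binom{l}{k}\binom{m}{k}k!\,Y^{\underline{l+m-k}}$ together with $\EE[Y^{\underline{j}}]=(np)^{j}$ to obtain the closed form
\[
\EE[h_{v,x}^2(Y)]=\sum_{k=0}^{v}\binom{v}{k}^{2}k!\,\frac{p^{k}}{n^{k}}(p-x)^{2(v-k)},
\]
and then bound the $k$-th term by $\binom{v}{k}(Mp/n)^{v}$ via $(p-x)^{2}\le Mp/n$ and $v^{\underline{k}}\le M^{k}$. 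Each of the two hypotheses on $M$ is used exactly once, and the final $2^{v}$ comes from $\sum_{k}\binom{v}{k}$. The only place to be careful is the triple-sum collapse, which you have flagged; the absorption identity $\binom{v}{l}\binom{l}{k}=\binom{v}{k}\binom{v-k}{l-k}$ does the job cleanly.
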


\begin{lemma}\cite{JiaoLLM}\label{boundg3}
For $x\in[0,1]$, $v\in \mathbb{N}$, $m\in \mathbb{N}$, and a parameter 
\[
 \delta\ge \max\Brace{\Abs{x-\frac mn},\frac{\sqrt{4mv}}{n}},
\]
we have
\[
\Abs{h_{v,x}({m})}\leq (2\delta)^v.
\]
\end{lemma}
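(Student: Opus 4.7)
The plan is to work through the generating function of the polynomials $\{h_{v,x}(m)\}_{v\ge0}$ in the index $v$. First I would verify the identity
\[
G(z)\;:=\;e^{-xz}(1+z/n)^{m}\;=\;\sum_{v=0}^{\infty}\frac{h_{v,x}(m)}{v!}\,z^{v}
\]
by multiplying the Taylor series of $e^{-xz}$ with the binomial expansion of $(1+z/n)^{m}$ and extracting the coefficient of $z^{v}$; this reproduces the defining sum of $h_{v,x}(m)$ term by term. Cauchy's estimate on the circle $|z|=r$ then reduces the task to choosing a good radius and controlling $|G(z)|$ there:
\[
|h_{v,x}(m)|\;\le\;\frac{v!}{r^{v}}\,\max_{|z|=r}|G(z)|.
\]

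Parametrising $z=re^{i\theta}$ and expanding $|1+z/n|^{2}=1+2r\cos\theta/n+r^{2}/n^{2}$, I would combine the elementary inequality $(1+u)^{m/2}\le e^{mu/2}$ (valid for $u>-1$) with $|e^{-xz}|=e^{-xr\cos\theta}$ to obtain
\[
|G(z)|\;\le\;\exp\!\Bigl((\hat p-x)\,r\cos\theta+\frac{mr^{2}}{2n^{2}}\Bigr),\qquad\hat p:=m/n.
\]
Maximising over $\theta$ and invoking the two hypotheses $|\hat p-x|\le\delta$ and $mv/n^{2}\le\delta^{2}/4$ (the latter equivalent to $\sqrt{4mv}/n\le\delta$) gives $\max_{|z|=r}|G(z)|\le\exp\bigl(\delta r+r^{2}\delta^{2}/(8v)\bigr)$. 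Balancing the two terms by setting $r=v/\delta$ sends the exponent to $9v/8$, yielding $|h_{v,x}(m)|\le v!\,(\delta/v)^{v}\,e^{9v/8}$; Stirling's bound $v!\le\sqrt{2\pi v}\,(v/e)^{v}\,e^{1/(12v)}$ then reduces the claim to the numerical inequality $\sqrt{2\pi v}\,e^{1/(12v)}\,e^{v/8}\le 2^{v}$, which an easy check verifies for every $v\ge3$.

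The main obstacle, and the subtle point, is that this contour estimate only beats $2^{v}$ asymptotically, so the three small cases $v\in\{0,1,2\}$ must be handled by hand. The cases $v=0$ and $v=1$ are immediate since $h_{0,x}\equiv1$ and $h_{1,x}(m)=\hat p-x$ has absolute value at most $\delta\le2\delta$. For $v=2$ the explicit identity $h_{2,x}(m)=(\hat p-x)^{2}-\hat p/n$, combined with the hypothesis specialised to $v=2$ (which gives $m/n^{2}\le\delta^{2}/8$), yields $|h_{2,x}(m)|\le\delta^{2}+\delta^{2}/8\le 4\delta^{2}=(2\delta)^{2}$. Together with the contour estimate for $v\ge3$ this covers every $v\in\mathbb{N}$ and completes the proof.
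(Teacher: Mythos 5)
Your proposal is correct, but note that for this lemma the paper itself offers no proof at all: it is imported verbatim from~\cite{JiaoLLM}, so you are supplying a self-contained derivation rather than paralleling an argument in the text. Your route is legitimate: the generating-function identity $e^{-xz}(1+z/n)^{m}=\sum_{v\ge0}h_{v,x}(m)z^{v}/v!$ checks out term by term (the coefficient of $z^v$, times $v!$, is exactly $\sum_{l}\binom{v}{l}(-x)^{v-l}(m)_l/n^l$ with $(m)_l$ the falling factorial, matching the definition of $h_{v,x}$), $G$ is entire since $m\in\mathbb{N}$, and the chain $|G(re^{i\theta})|\le\exp\bigl((\hat p-x)r\cos\theta+mr^{2}/(2n^{2})\bigr)$, the substitution $m/n^{2}\le\delta^{2}/(4v)$, the choice $r=v/\delta$, and Stirling all combine as you claim; I verified the numerical inequality $\sqrt{2\pi v}\,e^{1/(12v)}e^{v/8}\le 2^{v}$ at $v=3$ (about $6.5$ versus $8$) and it is monotone thereafter, and your hand computations $h_{1,x}(m)=\hat p-x$ and $h_{2,x}(m)=(\hat p-x)^{2}-\hat p/n$ with $m/n^{2}\le\delta^{2}/8$ are right, so the cases $v\in\{0,1,2\}$ that the contour bound misses are genuinely covered (the only unremarked degenerate case, $\delta=0$ with $v\ge1$, forces $m=0$, $x=0$ and $h_{v,0}(0)=0$, so nothing is lost). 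Compared with the real-variable argument in the local-moment-matching literature, which bounds the falling-factorial expansion directly and uniformly in $v$, your complex-analytic estimate is shorter and conceptually cleaner, at the mild cost of the asymptotic-only comparison with $2^{v}$ and the resulting small-$v$ case analysis, and it requires $m\in\mathbb{N}$ so that $(1+z/n)^m$ is entire — which the lemma assumes anyway.
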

\subsection{Function estimator construction}\label{sec:estconst}
Let $g$ be a continuous real function over the unit interval. Given \iid\ samples $X^n$ from a Bernoulli distribution
 with unknown success probability $p$, our objective is to estimate the function
value $g(p)$. 
 
\paragraph{Poisson sampling and sample splitting}
Generating exactly $n$ samples creates dependencies between the counts of symbols. To simplify the derivations, we use the well-known \emph{Poisson sampling}  technique and make the sample size an independent Poisson variable $N$ with mean $n$. 
In addition, we apply the standard \emph{sample splitting} method and
divide the sample sequence $X^N$ into two sub-sample sequences by independently putting each sample into one of the two with equal probability. 
Equivalently, we can simply generate two independent sample sequences from Bern$(p)$, each of an independent Poi$(n/2)$ size. 
For notational convenience, we replace $n$ by $2n$ and denote by $N_1$ and $N_1'$ the number of times symbol $1$ appearing in the first and second sample sequences, respectively.

\paragraph{Covering the unit interval}
Let $c$ be a sufficiently large constant and define $c_n := c\frac{\log n}n$. 
Cover the unit interval $[0,1]$ by three sets of nested intervals 
\begin{align*}
I_j&:=c_n \left[(j-1)^2,j^2\right],\\
 I^{*}_j&:=c_n \left[(j-2)^2\indic_{j\geq 2},(j+1)^2\right] = \union\limits_{j'=j-1}^{j+1} I_{j'},\\
I^{**}_j&:=c_n \left[(j-3)^2\indic_{j\geq 3},(j+2)^2\right] =
\union\limits_{j'=j-1}^{j+1} I_{j'}^*,
\end{align*}
where $j=1,\ldots\ $ and in the union, $I_{-2}$ and $I_{-1}$ are taken to be empty. 

Let $M_{n}:=1/\sqrt{c_n }$ be the number of intervals so that $I_1,\ldots, I_{M_{n}}$ form a partition of $[0,1]$. 

Parameter $c$ and these intervals are chosen so that for all $j\in[M_{n}]$, if ${N_1}/n\in I_j$
we can assume that $p\in I_j^*$ and ${N'_1}/n\in I^{**}_j$,
and regardless of the value of
$p$, with high probability we will be right. 

\paragraph{Min-max polynomial approximation}
For each $j\in[M_{n}]$, let 
$x_j:=c_n (j-3)^2\indic_{j\geq 3}$
be the left end point of $I^{**}_j$, and
$
|I^{**}_j|:=c_n (j+2)^2-c_n (j-3)^2\indic_{j\geq 3}
$
be the length of the interval $I^{**}_j$.

Then for any $x\in  I^{**}_j$, there exists $y_x\in[0,1]$ such that $x=x_j+|I^{**}_j|\cdot y_x$.
Let $\lambda$ be a small absolute constant in $(0,0.1)$, and define the \emph{degree parameter} as
\[
\degree:=\max\Brace{d\in{\mathbb{N}}:\ d\cdot2^{4.5d+2}\leq {n}^{\lambda}}.
\]
Denoting
\[
r_j(y):=g\Paren{x_j+|I^{**}_j|y},
\]
we can find the degree-$\degree$ min-max polynomial of $r_j(y)$ over $y\in[0,1]$, say 
\[
\tilde{r}_j(y):=\sum_{v=0}^{\degree}a_{jv} y^v.
\]
By Lemma~\ref{coeffbound}, for all $v\geq{1}$, the following upper bound on $|a_{jv}|$ holds.
\[
|a_{jv}|\leq 2^{3.5\degree} \sup_{z_1,z_2\in  I^{**}_j}|g(z_1)-g(z_2)|.
\]
Noting that $y_x=|I^{**}_j|^{-1}(x-x_j)$, we can re-write $\tilde{r}_j(y_x)$ as
\[
\tilde{g}_j(x):=\sum_{v=0}^{\degree}a_{jv}  |I^{**}_j|^{-v} (x-x_j)^v .
\]
In addition, we vertically shift the polynomial slightly so that $\tilde g_j(x)$
coincides with $g(x)$ at $x_j$. 
Note that this is equivalent to setting $a_{jv}=g(x_j)$.

\paragraph{Piecewise-polynomial estimator $\boldsymbol{\hat{g}^*}$}
By Lemma~\ref{boundg0}, for $j\in [M_n]$, an unbiased estimator of~$\tilde{g}_j(p)$~is
\[
E_{\tilde{g}_j}(N_1)
:=\sum_{v=0}^{\degree}a_{jv} |I^{**}_j|^{-v}h_{v,x_j}\Paren{N_1}
=\sum_{v=0}^{\degree}a_{jv} |I^{**}_j|^{-v} \sum_{l=0}^{v}\binom{v}{l}(-x_j)^{v-l}\prod_{l'=0}^{l-1}\Paren{\frac{N_1}{n}-\frac{l'}{n}}.
\]
For $j>M_{n}$, we denote
\[
E_{\tilde{g}_j}(N_1):=E_{\tilde{g}_{M_{n}}}(\min\Brace{N_1, c_n (M_{n}+2)^2}).
\]

Let $T$ be a sufficiently large constant satisfying $T\gg \max_{x\in [0,1]}|g(x)|$, and write $[A]_a^b$ instead of $\min\{\max\{A, a\},b\}$. Utilizing sample splitting, we estimate $g(p)$ by the following estimator,
\[
\hat{g}^*(N_1, N_1'):=\Br{\sum_{j=1}^{\infty}\Paren{E_{\tilde{g}_j}(N_1)\indic_{\frac{N_1}{n}\in  I^{**}_{j}}+\sum_{j'\not\in[j-2:j+2]} E_{\tilde{g}_{j'}}(N_1)\indic_{\frac{N_1}{n}\in I_{j'}}}\indic_{\frac{N_1'}{n}\in I_j}}^{T}_{-T}.
\]

\subsection{Bounding the bias of $\boldsymbol{\hat{g}^*}$}\label{sec:bias}
Recall that $I_1,\ldots, I_{M_{n}}$ form a partition of $[0,1]$. For any $x\in[0,1]$, let $j_x$ denote the index $j$ such that $x\in I_j$.
By the triangle inequality, the absolute bias of $\hat{g}^*(N_1,N_1')$ admits 
\begin{align*}
|\EE[\hat{g}^*(N_1,N_1')]-g(p)|
&
\leq \left|\tilde{g}_{j_{p}}(p)-g(p)\right|+\left|\EE\left[\hat{g}^*(N_1,N_1')-\tilde{g}_{j_{p}}(p)\right]\right|\\
&\leq 
\left|\tilde{g}_{j_{p}}(p)-g(p)\right|
+\EE\left[2T\Paren{\indic_{\frac{N_1}{n}\not\in  I^{*}_{j_{p}}}\indic_{\frac{N_1'}{n}\in  I^{*}_{j_{p}}}+\indic_{\frac{N_1'}{n}\not\in  I^{*}_{j_{p}}}}\right]\\
&
+\left|\EE\left[\Paren{\hat{g}^*(N_1,N_1')-\tilde{g}_{j_{p}}(p)}\indic_{\frac{N_1}{n}\in  I^{*}_{j_{p}}}\indic_{\frac{N_1'}{n}\in  I^{*}_{j_{p}}}\right]\right|.
\end{align*}
The last summation has three terms. By definition, the first term is no larger than ${\Devstar_{g}(n,{p})}/{n}$.
By the Chernoff bound (Lemma~\ref{tailprob}) and the fact that $p\in I_{j_p}$, for sufficiently large constant $c$, the second term is at most $Tp/n^5$. 
Therefore, it remains to consider the third term. By the triangle inequality and definition of $\hat{g}^*$, the third term is at most
\begin{align*}
B_n(g, p):=&\left|\EE\left[ \Paren{\tilde{g}_{j_{p}-1}(p)-\tilde{g}_{j_{p}}(p)}\indic_{\frac{N_1}{n}\in  I^{*}_{j_{p}}}\right]\right|
+\left|\EE\left[ \Paren{\tilde{g}_{j_{p}+1}(p)-\tilde{g}_{j_{p}}(p)}\indic_{\frac{N_1}{n}\in  I^{*}_{j_{p}}}\right]\right|\\
&+\left|\EE\left[ \Paren{E_{\tilde{g}_{j_{p}}}(N_1)-\tilde{g}_{j_{p}}(p)}\indic_{\frac{N_1}{n}\in  I^{*}_{j_{p}}}\right]\right|+\left|\EE\left[ \Paren{E_{\tilde{g}_{j_{p}-1}}(N_1)-\tilde{g}_{j_{p}-1}(p)}\indic_{\frac{N_1}{n}\in  I^{*}_{j_{p}}}\right]\right|\\
&+\left|\EE\left[ \Paren{E_{\tilde{g}_{j_{p}+1}}(N_1)-\tilde{g}_{j_{p}+1}(p)}\indic_{\frac{N_1}{n}\in  I^{*}_{j_{p}}}\right]\right|.
\end{align*}

We bound the first term of $B_n(g, p)$ as
\begin{align*}
\left|\EE\left[ \Paren{\tilde{g}_{j_{p}-1}(p)-\tilde{g}_{j_{p}}(p)}\indic_{\frac{N_1}{n}\in  I^{*}_{j_{p}}}\right]\right|
&\leq \left|\EE\left[\tilde{g}_{j_{p}-1}(p)-\tilde{g}_{j_{p}}(p)\right]\right|\\
&\leq \left|\tilde{g}_{j_{p}-1}(p)-g(p)\right|+\left|g(p)-\tilde{g}_{j_{p}}(p)\right|\\
&\leq \frac{2\Devstar_{g}(2n,{p})}{n},
\end{align*}
where the last step follows from the definition of $\Devstar_{g}(2n,{p})$. The second term of $B_n(g, p)$ satisfies the same inequality, and is at most ${2\Devstar_{g}(2n,{p})}/{n}$. 
Note that the last three terms of $B_n(g, p)$ are clearly of the same type. Hence for simplicity, 
below we analyze only the first one.

For any $j\in[M_{n}]$, we can express $E_{\tilde{g}_j}(N_1)$ in terms of $h_{v,x_j}({N_1})$, i.e.,
\[
E_{\tilde{g}_j}(N_1)=\sum_{v=0}^{\degree}a_{jv} |I^{**}_j|^{-v} h_{v,x_j}\Paren{{N_1}}.
\]
In addition, recall that by definition, 
\[
\tilde{g}_j(p)=\sum_{v=0}^{\degree}a_{jv} |I^{**}_j|^{-v} (p-x_j)^v.
\]
The linearity of expectation combines the above two equalities and yields
\[
\EE\left[ \Paren{E_{\tilde{g}_j}(N_1)-\tilde{g}_j(p)}\indic_{\frac{N_1}{n}\in  I^{*}_j}\right]
=\sum_{v=1}^{\degree}a_{jv} |I^{**}_j|^{-v} \EE \Br{\Paren{h_{v,x_j}(N_1)- (p-x_j)^v}\indic_{\frac{N_1}{n}\in  I^{*}_j}},
\]
where the constant term cancels out. 
Therefore, given integers $a$ and $b$ satisfying $b>a>\degree$, our \emph{new objective} is to bound
\[
\textit{IN}_{v,n}(a, b, p, j):=\EE \Br{\Paren{h_{v,x_j}(N_1)- (p-x_j)^v}\indic_{{N_1}\in [a,b]}}.
\]

\paragraph{Bounding the magnitude of $\boldsymbol{\textit{IN}_{v,n}}$}
For all integer $s\geq 1$, let us denote
\[
H_{v,n}(s,p,j):=\sum_{l=0}^{v}\binom{v}{l}(-x_j)^{v-l}p^l\sum_{t\in[s-l,s-1]} e^{-np}\frac{(np)^t}{t!}.
\]
We first relate $\textit{IN}_{v,n}$ to $H_{v,n}$ through the following lemma.

\begin{lemma}\label{INtoH}
For any two integers $a$ and $b$ satisfying $a>b>v$,
\[
\textit{IN}_{v,n}(a, b, p, j)=H_{v,n}(a,p,j)-H_{v,n}(b+1,p,j).
\]
\end{lemma}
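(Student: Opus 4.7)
The plan is to unfold the definition of $\textit{IN}_{v,n}$, use the Poisson PMF of $N_1\sim\Poi(np)$ to compute the relevant truncated expectations in closed form, and then match the resulting sums term-by-term to the definition of $H_{v,n}$. Since the statement is purely algebraic once the Poisson expectation is computed, I expect the proof to reduce to a careful reindexing.

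First I would expand $h_{v,x_j}(N_1)$ using the given formula and linearity of expectation to write
\[
\EE\Br{h_{v,x_j}(N_1)\indic_{N_1\in[a,b]}} = \sum_{l=0}^{v}\binom{v}{l}(-x_j)^{v-l}\frac{1}{n^l}\EE\Br{(N_1)_l\,\indic_{N_1\in[a,b]}},
\]
where $(N_1)_l:=\prod_{l'=0}^{l-1}(N_1-l')$ is the falling factorial. The key computation is to evaluate, for each $l\in\{0,\ldots,v\}$,
\[
\EE\Br{(N_1)_l\,\indic_{N_1\in[a,b]}} = \sum_{m=a}^{b}\frac{m!}{(m-l)!}\,e^{-np}\frac{(np)^m}{m!},
\]
which is permissible because the hypothesis $a>v\ge l$ makes the factor $\indic_{m\ge l}$ superfluous on the sum. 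Substituting $t=m-l$ and pulling out $(np)^l$ gives
\[
\frac{1}{n^l}\EE\Br{(N_1)_l\,\indic_{N_1\in[a,b]}} = p^{l}\sum_{t=a-l}^{b-l} e^{-np}\frac{(np)^t}{t!}.
\]

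Next I would handle the subtracted term using the binomial theorem, $(p-x_j)^v=\sum_{l=0}^{v}\binom{v}{l}(-x_j)^{v-l}p^l$, which yields
\[
(p-x_j)^v\Pr(N_1\in[a,b]) = \sum_{l=0}^{v}\binom{v}{l}(-x_j)^{v-l}p^{l}\sum_{t=a}^{b}e^{-np}\frac{(np)^t}{t!}.
\]
Subtracting the two gives
\[
\textit{IN}_{v,n}(a,b,p,j) = \sum_{l=0}^{v}\binom{v}{l}(-x_j)^{v-l}p^{l}\Br{\sum_{t=a-l}^{b-l}-\sum_{t=a}^{b}}e^{-np}\frac{(np)^t}{t!}.
\]

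The final step, which is the only place requiring a touch of care, is to recognize the bracketed difference of sums as
\[
\sum_{t=a-l}^{a-1} e^{-np}\frac{(np)^t}{t!} - \sum_{t=b+1-l}^{b}e^{-np}\frac{(np)^t}{t!},
\]
using $b-l\ge a-1$ (which follows from $b>a>v\ge l$) so that the two ranges overlap on $\{a,\ldots,b-l\}$ and cancel. Comparing the surviving sums with the definition
\[
H_{v,n}(s,p,j) = \sum_{l=0}^{v}\binom{v}{l}(-x_j)^{v-l}p^{l}\sum_{t\in[s-l,s-1]}e^{-np}\frac{(np)^t}{t!}
\]
at $s=a$ and $s=b+1$ identifies the two groups of terms with $H_{v,n}(a,p,j)$ and $H_{v,n}(b+1,p,j)$, respectively, completing the proof. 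The only obstacle worth flagging is the index-arithmetic on the truncated sums; the rest is a direct binomial-theorem manipulation.
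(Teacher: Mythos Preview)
Your proof is correct and follows essentially the same route as the paper's: expand $h_{v,x_j}$, compute the truncated Poisson moments of the falling factorial by the shift $t=m-l$, subtract the binomial expansion of $(p-x_j)^v\Pr(N_1\in[a,b])$, and identify the remaining boundary sums with $H_{v,n}(a,p,j)$ and $H_{v,n}(b+1,p,j)$. One small slip: the inequality $b-l\ge a-1$ does \emph{not} follow from $b>a>v\ge l$ (take $b=a+1$ and $l=v>2$), but you do not actually need it---the telescoping identity
\[
\sum_{t=a-l}^{b-l}-\sum_{t=a}^{b}=\sum_{t=a-l}^{a-1}-\sum_{t=b-l+1}^{b}
\]
holds regardless of overlap, since both sides equal $\sum_{t}\bigl(\indic_{a-l\le t\le b-l}-\indic_{a\le t\le b}\bigr)$, and the paper's proof likewise just asserts this step.
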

\begin{proof}
By the linearity of expectation and binomial theorem, we can rewrite the left-hand side as
\[
\textit{IN}_{v,n}(a, b, p, j)= \sum_{l=0}^{v}\binom{v}{l}(-x_j)^{v-l}\EE\Br{\Paren{\prod_{l'=0}^{l-1}\Paren{\frac{N_1}{n}-\frac{l'}{n}}-p^l}\indic_{{N_1}\in [a,b]}}.
\]
For each $l\leq v$, we evaluate the inner expectation as follows:
\begin{align*}
\EE\Br{\Paren{\prod_{l'=0}^{l-1}\Paren{\frac{N_1}{n}-\frac{l'}{n}}-p^l}\indic_{{N_1}\in [a,b]}}
&=\sum_{t\in[a,b]} \prod_{l'=0}^{l-1}\Paren{\frac{t}{n}-\frac{l'}{n}} e^{-np}\frac{(np)^t}{t!}-p^l\sum_{t\in[a,b]} e^{-np}\frac{(np)^t}{t!}\\
&=\sum_{t\in[a,b]} \frac{1}{n^l}\frac{t!}{(t-l)!} e^{-np}\frac{(np)^t}{t!}-p_i^l\sum_{t\in[a,b]} e^{-np}\frac{(np)^t}{t!}\\
&=p^l\sum_{t\in[a-l,b-l]} e^{-np}\frac{(np)^t}{t!}-p^l\sum_{t\in[a,b]} e^{-np}\frac{(np)^t}{t!}\\
&=p^l\sum_{t\in[a-l,a-1]} e^{-np}\frac{(np)^t}{t!}-p^l\sum_{t\in[b-l+1,b]} e^{-np}\frac{(np)^t}{t!}.\qedhere
\end{align*}
\end{proof}
Therefore, to bound $|\textit{IN}_{v,n}(a, b, p, j)|$, 
we need to bound only $|H_{v,n}(a,p,j)|$ and $|H_{v,n}(b+1,p,j)|$.
We shall proceed by relating these quantities to $h_{l,x_j}({a-1})$ for $l=0,\ldots,v-1$.

\begin{lemma}\label{expH}
For any integer $s$ satisfying $s>v$,
\[
H_{v,n}(s,p,j) = pe^{-np}\frac{(np)^{s-1}}{(s-1)!}\sum_{l=0}^{v-1} (p-x_j)^{v-l-1} h_{l,x_j}(s-1).
\]
\end{lemma}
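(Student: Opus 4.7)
The plan is to prove the identity by interchanging the order of summation on both sides, isolating the coefficient of each Poisson PMF value $\pi(s-1-k') := e^{-np}(np)^{s-1-k'}/(s-1-k')!$, and reducing the remaining claim to a polynomial identity in $p$ and $x_j$ that follows from a Vandermonde-type combinatorial identity.

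First, I would rewrite $H_{v,n}(s,p,j)$. The $l=0$ term has an empty inner sum and vanishes. For $l \geq 1$, substituting $t = s-1-k'$ with $k' \in [0,l-1]$ and swapping the order of summation yields
\[
H_{v,n}(s,p,j) = \sum_{k'=0}^{v-1} \pi(s-1-k') \sum_{l=k'+1}^{v} \binom{v}{l}(-x_j)^{v-l} p^l.
\]
On the RHS of the lemma, expanding $h_{l,x_j}(s-1) = \sum_{l'=0}^{l}\binom{l}{l'}(-x_j)^{l-l'}(s-1)!/(n^{l'}(s-1-l')!)$ and using the key Poisson identity $\pi(s-1)\cdot(s-1)!/(n^{l'}(s-1-l')!) = p^{l'}\pi(s-1-l')$, the RHS becomes
\[
p\sum_{l=0}^{v-1}(p-x_j)^{v-l-1}\sum_{l'=0}^{l}\binom{l}{l'}(-x_j)^{l-l'}p^{l'}\pi(s-1-l').
\]
Swapping the $(l,l')$ sums (with $l' \in [0,v-1]$, $l \in [l',v-1]$) and matching the coefficient of $\pi(s-1-k')$ on each side reduces the claim to showing, for every $k' \in [0,v-1]$,
\[
\sum_{l=k'+1}^{v}\binom{v}{l}(-x_j)^{v-l}p^{l-k'-1} = \sum_{l=k'}^{v-1}(p-x_j)^{v-l-1}\binom{l}{k'}(-x_j)^{l-k'}.
\]

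To verify this polynomial identity, I would set $a = p$, $b = -x_j$, $c = a+b = p-x_j$, and $r = v-k'-1$; expand $c^{r-m} = \sum_{j}\binom{r-m}{j}a^j b^{r-m-j}$ on the right-hand side; then compare coefficients of $a^j b^{r-j}$ on both sides. The LHS coefficient is $\binom{r+k'+1}{j+k'+1}$, and the RHS coefficient reduces, after setting $n = r-j$, to $\sum_{m=0}^{n}\binom{m+k'}{m}\binom{n-m+j}{n-m}$, which equals $\binom{n+k'+j+1}{n} = \binom{r+k'+1}{r-j}$ by the standard Vandermonde convolution. Since $\binom{r+k'+1}{r-j} = \binom{r+k'+1}{j+k'+1}$, the two coefficients agree.

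The main obstacle is simply keeping the nested indices straight; the combinatorial content is hidden under several layers of reindexing, and the factor of $p$ out front together with the shift $v-1$ in the outer sum only emerges after the Poisson identity $\pi(s-1)(s-1)!/(n^{l'}(s-1-l')!) = p^{l'}\pi(s-1-l')$ is applied and the $l=0$ contribution in $H_{v,n}$ is recognized as vanishing. Once the problem is reduced to comparing coefficients of $\pi(s-1-k')$, the remaining identity is a clean application of Vandermonde.
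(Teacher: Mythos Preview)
Your argument is correct, but it takes a genuinely different route from the paper. The paper proves the identity by induction on $v$: using Pascal's rule $\binom{v}{l}=\binom{v-1}{l}+\binom{v-1}{l-1}$ it splits $H_{v,n}(s,p,j)$ into two pieces and obtains the recursion
\[
H_{v,n}(s,p,j)=(p-x_j)\,H_{v-1,n}(s,p,j)+p\,e^{-np}\frac{(np)^{s-1}}{(s-1)!}\,h_{v-1,x_j}(s-1),
\]
with base case $H_{0,n}=0$; unrolling this recursion yields the stated closed form directly. Your approach instead expands both sides against the Poisson weights $\pi(s-1-k')$, reduces to a polynomial identity in $p$ and $-x_j$, and finishes with a Vandermonde-type convolution $\sum_{m}\binom{m+k'}{m}\binom{n-m+j}{n-m}=\binom{n+k'+j+1}{n}$. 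The recursion proof is shorter and keeps the structure of $h_{\cdot,x_j}$ intact throughout, which is convenient for the downstream bounds in Section~\ref{sec:bias}; your coefficient-matching argument is more hands-on but has the virtue of making explicit exactly which combinatorial identity is doing the work. One small cosmetic point: in your final step you reuse the letter $n$ for the summation bound $r-j$, which collides with the sample-size parameter $n$ appearing in $\pi(\cdot)$; renaming it would avoid confusion.
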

\begin{proof}
The following recursive formula of binomial coefficients is well-known:
\[
\binom{v}{l}=\binom{v-1}{l}+\binom{v-1}{l-1},
\]
Utilizing this recursive formula, we can re-write the quantity of interest as
\begin{align*}
H_{v,n}(s,p,j)
&=\sum_{l=0}^{v-1}{\binom{v-1}{l}}(-x_j)^{v-{(l+1)}}p^{l+1}\sum_{t\in[s-l,s-1]} e^{-np}\frac{(np)^t}{t!}\\
&+\sum_{l=0}^{v-1}{\binom{v-1}{l}}(-x_j)^{v-l}p^l\sum_{t\in[s-l,s-1]} e^{-np}\frac{(np)^t}{t!}\\
&+\sum_{l=0}^{v-1}{\binom{v-1}{l}}(-x_j)^{v-{(l+1)}}p^{l+1}e^{-np}\frac{(np)^t}{t!}\Bigg|_{t=s-(l+1)}\\
&=(p-x_j)\sum_{l=0}^{v-1}{\binom{v-1}{l}}(-x_j)^{(v-1)-l}p^{l}\sum_{t\in[s-l,s-1]} e^{-np}\frac{(np)^t}{t!}\\
&+\sum_{l=0}^{v-1}{\binom{v-1}{l}}(-x_j)^{v-{(l+1)}}p^{l+1}e^{-np}\frac{(np)^{s-(l+1)}}{(s-(l+1))!}\\
&=(p-x_j)H_{v-1,n}(s,p,j)+\sum_{l=0}^{v-1}{\binom{v-1}{l}}(-x_j)^{v-{(l+1)}}p^{l+1}e^{-np}\frac{(np)^{s-(l+1)}}{(s-(l+1))!}.
\end{align*}
This equation establishes a standard recursive relation between $H_{v,n}(s,p,j)$ and $H_{v-1,n}(s,p,j)$. To~prove our desired result, we relate the second quantity on the right-hand side to $h_{v-1,x_j}(s-1)$,
\begin{align*}
&\sum_{l=0}^{v-1}{\binom{v-1}{l}}(-x_j)^{v-{(l+1)}}p^{l+1}e^{-np}\frac{(np)^{s-(l+1)}}{(s-(l+1))!}\\
&=e^{-np}p\sum_{l=0}^{v-1}{\binom{v-1}{l}}(-x_j)^{(v-1)-l}p_x^{l}\frac{(np)^{(s-1)-l}}{((s-1)-l)!}\\
&=e^{-np}p\sum_{l=0}^{v-1}{\binom{v-1}{l}}(-x_j)^{(v-1)-l}\frac{1}{n^l}\frac{(np)^{s-1}}{((s-1)-l)!}\\
&=e^{-np}p\sum_{l=0}^{v-1}{\binom{v-1}{l}}(-x_j)^{(v-1)-l}\frac{\prod_{l'=0}^{l-1} ((s-1)-l')}{n^l}\frac{(np)^{s-1}}{(s-1)!}\\
&=e^{-np}p\frac{(np)^{s-1}}{(s-1)!}\sum_{l=0}^{v-1}{\binom{v-1}{l}}(-x_j)^{(v-1)-l}\prod_{l'=0}^{l-1} \Paren{\frac{s-1}{n}-\frac{l'}{n}}\\
&=pe^{-np}\frac{(np)^{s-1}}{(s-1)!}h_{v-1,x_j}(s-1).
\end{align*}
Substituting the last quantity into the previous recursive relation yields
\[
H_{v,n}(s,p,j)=(p-x_j)H_{v-1,n}(s,p,j)+pe^{-np}\frac{(np)^{s-1}}{(s-1)!}h_{v-1,x_j}(s-1),
\]
with a base case $H_{0,n}(s,p,j)=0$. Therefore, the principle of mathematical induction implies
\[
H_{v,n}(s,p,j) = pe^{-np}\frac{(np)^{s-1}}{(s-1)!}\sum_{l=0}^{v-1} (p-x_j)^{v-l-1} h_{l,x_j}(s-1).\qedhere
\]
\end{proof}
Without loss of generality, we assume that $c\log n$ is a positive integer so that $nx_j\in \mathbb{Z}^+$ for~all~$j$, since otherwise we can modify the value of $c$ by at most $1$ to fulfill this assumption.

As an implication of Lemma~\ref{expH}, for integer $s$ such that $s/n$ or $(s-1)/n$ is the end point of  $ I^{*}_{j_p}$ (right end point if ${j_p}\leq 2$), and sufficiently large constant $c$ satisfying $c\log n>d$,  
\begin{align*}
\Abs{H_{v,n}(s,p,{j_p})}
&=\Abs{pe^{-np}\frac{(np)^{s-1}}{(s-1)!}\sum_{l=0}^{v-1} (p-x_{j_p})^{v-l-1} h_{l,x_{j_p}}(s-1)}\\
&=\Pr(N_1=s-1)\cdot p\Abs{\sum_{l=0}^{v-1} (p-x_{j_p})^{v-l-1} h_{l,x_{j_p}}(s-1)}\\
&\leq \frac{p}{n^5}\Abs{\sum_{l=0}^{v-1} (p-x_{j_p})^{v-l-1} h_{l,x_{j_p}}(s-1)}\\
&\leq \frac{p}{n^5}v \Paren{2 |I^{**}_{j_p}|}^{v-1},
\end{align*}
where the second last step follows from the Chernoff bound and the last step follows from Lemma~\ref{boundg3} by setting $\delta= |I^{**}_{j_p}|$.
Under the same set of conditions, we can show that 
\[
\Abs{H_{v,n}(s,p,j_p-1)}\leq \frac{p}{n^5}v \Paren{2 |I^{**}_{j_p}|}^{v-1}
\]
and 
\[
\Abs{H_{v,n}(s,p,j_p+1)}\leq \frac{p}{n^5}v \Paren{2 |I^{**}_{j_p}|}^{v-1}.
\]
\paragraph{Bounding the bias of $\boldsymbol{\hat{g}^*}$} Now we are ready to analyze the quantity of interest:
\begin{align*}
\Abs{\EE\left[ \Paren{E_{\tilde{g}_{j_p}}(N_1)-\tilde{g}_{j_p}(p)}\indic_{\frac{N_1}{n}\in  I^{*}_{j_p}}\right]}
&=\Abs{\sum_{v=1}^{\degree}a_{j_pv} |I^{**}_{j_p}|^{-v} \EE \Br{\Paren{h_{v,x_{j_p}}(N_1)- (p-x_{j_p})^v}\indic_{\frac{N_1}{n}\in  I^{*}_{j_p}}}}\\
&=\Abs{\sum_{v=1}^{\degree}a_{j_pv} |I^{**}_{j_p}|^{-v} \textit{IN}_{v,n}(nx_{j_p+1}, nx_{j_p+4}, p, j_p)}\\
&=\Abs{\sum_{v=1}^{\degree}a_{j_pv} |I^{**}_{j_p}|^{-v} (H_{v,n}(nx_{j_p+1},p,j)\indic_{j_p>2}-H_{v,n}(nx_{j_p+4}+1,p,j))}\\
&\leq \sum_{v=1}^{\degree}a_{j_pv} |I^{**}_{j_p}|^{-v}  \frac{2p}{n^5}v \Paren{2 |I^{**}_{j_p}|}^{v-1}\\
&\leq \sum_{v=1}^{\degree}2T\cdot2^{3.5\degree}\Paren{\frac{1}{4c_n }}  \frac{2p}{n^5}v \Paren{2}^{v-1}\\
&\leq   \frac{T\degree\cdot 2^{4.5\degree}}{c_n n^5}\cdot p.
\end{align*}

The same reasoning also shows that
\[
\left|\EE\left[ \Paren{E_{\tilde{g}_{j_p-1}}(N_1)-\tilde{g}_{j_p-1}(p)}\indic_{\frac{N_1}{n}\in  I^{*}_{j_p}}\right]\right|\leq \frac{T\degree\cdot 2^{4.5\degree}}{c_n n^5}\cdot p
\]
and 
\[
\left|\EE\left[ \Paren{E_{\tilde{g}_{j_p+1}}(N_1)-\tilde{g}_{j_p+1}(p)}\indic_{\frac{N_1}{n}\in  I^{*}_{j_p}}\right]\right|\leq \frac{T\degree\cdot 2^{4.5\degree}}{c_n n^5}\cdot p.
\]
Consolidating all the previous results yields the desired bias bound:
\begin{align*}
|\EE[\hat{g}^*(N_1,N_1')]-g(p)|
&\leq \frac{T}{n^5}\cdot p+\frac{3T\degree\cdot 2^{4.5\degree}}{c_n n^5}\cdot p+\frac{5}{n}\Devstar_{g}(2n,{p})\\
&\leq \frac{p}{n^{5-\lambda}}+\frac{5}{n}\Devstar_{g}(2n,{p}).
\end{align*}

\subsection{Bounding the variance of $\boldsymbol{\hat{g}^*}$}\label{sec:variance}\vspace{-0.5em}
In this section, we establish the following bound on the variance of our estimator.
\begin{lemma}\label{lem:variance}
For sufficiently large $c$,
\[
\Var(\hat{g}^*(N_1, N_1'))\leq  \frac{2c(\log n)}{n^{1-3\lambda}}\Paren{36 \Lipstar_{g}(2n, {p})}^2\cdot p+\frac{6T^2}{n^5}\cdot p.
\]\par\vspace{-1em}
\end{lemma}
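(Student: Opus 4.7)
The plan is to prove Lemma \ref{lem:variance} by decomposing the variance by whether $N_1/n$ and $N_1'/n$ lie near the true probability $p$, then controlling the polynomial estimator on the resulting ``typical'' event. First I would observe that the clipping $[\cdot]^T_{-T}$ is $1$-Lipschitz and hence a contraction in $L^2$: for an independent copy $X'$ of $X$, $\Var([X]^T_{-T})=\tfrac12\EE[([X]^T_{-T}-[X']^T_{-T})^2]\le\tfrac12\EE[(X-X')^2]=\Var(X)$. It therefore suffices to bound the variance of the untruncated estimator $\tilde Z$.

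Since $N_1$ and $N_1'$ are independent Poisson random variables (by Poissonization and sample splitting), the law of total variance gives $\Var(\tilde Z)=\EE[\Var(\tilde Z\mid N_1')]+\Var(\EE[\tilde Z\mid N_1'])$. Let $\mathcal{T}:=\{N_1'/n\in I^*_{j_p}\}\cap\{N_1/n\in I^{**}_{j_p}\}$ be the typical event; by Lemma \ref{cortail} applied to $\Poi(np)$ and for $c$ large enough, $\Pr(\mathcal{T}^c)\lesssim p/n^5$. On $\mathcal{T}^c$, the uniform bound $|\tilde Z|\le T$ (which follows from Lemma \ref{coeffbound} applied to each $E_{\tilde g_j}$) contributes $O(T^2 p/n^5)$ to the variance, accounting for the second term of the lemma.

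On $\mathcal{T}$ only the indices $j\in\{j_p-1,j_p,j_p+1\}$ are active and the estimator reduces to $E_{\tilde g_j}(N_1)=\sum_{v=0}^{d_n}a_{jv}|I^{**}_j|^{-v}h_{v,x_j}(N_1)$. For such $j$, a direct computation shows $|p-x_j|\lesssim |I^{**}_j|\asymp\sqrt{c_n p}$ and $n(p-x_j)^2/p\lesssim\log n$, so Lemma \ref{boundg2} applies with $M=O(\log n)$ and yields $\EE[h^2_{v,x_j}(N_1)]\le(O(c_n p))^v$. Combining this with the coefficient bound $|a_{jv}|\le 2^{3.5d_n}|I^{**}_j|\Lipstar_g(2n,p)$ (from Lemma \ref{coeffbound} together with $\sup_{I^{**}_j}|g(z)-g(z')|\le|I^{**}_j|\Lipstar_g(2n,p)$, valid because $|I^{**}_j|\gtrsim 1/n$) and applying Minkowski's inequality term-by-term gives
\[
\sqrt{\Var(E_{\tilde g_j}(N_1))}\lesssim d_n\cdot 2^{3.5d_n}\cdot|I^{**}_j|\cdot\Lipstar_g(2n,p)\cdot O(1)^{d_n}.
\]

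Squaring and using the design inequality $d_n\cdot 2^{4.5d_n+2}\le n^\lambda$ bounds the prefactor by $n^{3\lambda}$ with room to spare (even $2^{11d_n}\le n^{11\lambda/4.5}$), while $|I^{**}_j|^2\lesssim c_n p$ furnishes the explicit $c(\log n)p$ factor. Collecting the three contributions together with the cross-terms from $\Var(\EE[\tilde Z\mid N_1'])$ (handled by Cauchy--Schwarz, using the bias estimate of Section \ref{sec:bias} to bound $|\EE[Z_j(N_1)]-g(p)|$ on $\mathcal T$) and absorbing universal constants into the $36$, yields the first term of the target bound. The main obstacle is the failure of the Minkowski ratio $\sqrt{\EE[h^2_{v,x_j}]}/|I^{**}_j|$ to decay in $v$: a naive geometric-series argument fails, and the resulting $O(1)^{d_n}$ growth must be absorbed into the $n^{3\lambda}$ slack that the choice of $d_n$ in Section \ref{sec:estconst} is precisely designed to provide.
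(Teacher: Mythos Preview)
Your overall plan—good/bad event split, Minkowski on the polynomial expansion, coefficient bounds via Lemma~\ref{coeffbound}, and Lemma~\ref{boundg2} for the moments of $h_{v,x_j}$—is the same skeleton the paper uses. However, two of your steps are actually false as stated.

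\textbf{The untruncated estimator is not bounded by $T$.} You first pass from $\hat g^*$ to the untruncated $\tilde Z$ by the $1$-Lipschitz contraction, and then assert $|\tilde Z|\le T$ on $\mathcal T^c$ ``by Lemma~\ref{coeffbound}''. Lemma~\ref{coeffbound} only bounds coefficients; plugging it into $E_{\tilde g_j}(N_1)=\sum_v a_{jv}|I^{**}_j|^{-v}h_{v,x_j}(N_1)$ together with Lemma~\ref{boundg3} gives at best $|\tilde Z|\lesssim d_n 2^{4.5d_n}T\le n^{\lambda}T$, not $T$. If the untruncated estimator were bounded by $T$ there would be no reason to truncate. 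The paper avoids this by \emph{not} removing the clipping: it keeps $\hat g^*$, uses $|\hat g^*|\le T$ trivially on the bad event, and on the good event uses that clipping toward the fixed point $g(x_{j_p-1})\in[-T,T]$ is a contraction, so $(\hat g^*-g(x_{j_p-1}))^2\le (E_{\tilde g_j}(N_1)-g(x_{j_p-1}))^2$.

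\textbf{The inequality $|I^{**}_j|^2\lesssim c_n p$ fails for small $p$.} When $j_p\le 2$ one has $|I^{**}_j|\asymp c_n$ while $p$ can be arbitrarily close to $0$, so $|I^{**}_j|^2/(c_n p)\to\infty$. Thus your displayed bound $\sqrt{\Var}\lesssim d_n 2^{3.5d_n}|I^{**}_j|\Lipstar_g\,O(1)^{d_n}$ (which is valid but loose for small $p$) cannot be converted into the target via $|I^{**}_j|^2\lesssim c_n p$. The paper fixes this with a case split: for $j_p\le 2$ it invokes Lemma~\ref{boundg1}, whose bound $\EE[h_{v,0}^2]\le 2p\,(2c'\log n/n)^{2v-1}$ supplies the factor $p$ directly and also makes the constant term $g(x_j)-g(x_{j_{p}-1})$ vanish; for $j_p>2$ it uses Lemma~\ref{boundg2} together with $|g(x_j)-g(x_{j_p-1})|\le \Lipstar_g\sqrt{36c(\log n)p/n}$. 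Your single-case route can be repaired without the case split by pulling one factor of $r:=O(\sqrt{c_n p})/|I^{**}_j|\le O(1)$ out of $\sum_{v\ge 1}r^v$ \emph{before} bounding the remaining geometric sum by $d_n\,O(1)^{d_n}$; that yields $\sqrt{\Var}\lesssim 2^{3.5d_n}\Lipstar_g\sqrt{c_n p}\cdot d_n\,O(1)^{d_n}$ uniformly in $j_p$, which is what you need.

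A minor point: your typical event should use $N_1/n\in I^*_{j_p}$ rather than $I^{**}_{j_p}$, since $I^*_{j_p}\subset I^{**}_j$ for every $j\in\{j_p-1,j_p,j_p+1\}$ but $I^{**}_{j_p}\not\subset I^{**}_{j_p\pm 1}$; otherwise on $\mathcal T$ the estimator need not reduce to $E_{\tilde g_j}(N_1)$.
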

\begin{proof}
\newcommand{\overbar}[1]{\mkern 1.5mu\overline{\mkern-1.5mu#1\mkern-1.5mu}\mkern 1.5mu}
Since $\Var{(X)}\leq \EE[X^2]$ and $\indic_{X}\cdot \indic_{\overbar{X}}=0$ for any random variable $X$, we have
\begin{align*}
\Var(\hat{g}^*(N_1,N_1'))
&
\leq 2\Var\Paren{(\hat{g}^*(N_1,N_1')-g(x_{j_p-1}))\indic_{\frac{N_1}{n}\in  I^{*}_{j_{p}}}\indic_{\frac{N_1'}{n}\in  I^{*}_{j_{p}}}+g(x_{j_p-1})\indic_{\frac{N_1}{n}\in  I^{*}_{j_{p}}}\indic_{\frac{N_1'}{n}\in  I^{*}_{j_{p}}}}\\
&
+2\Var\Paren{\hat{g}^*(N_1,N_1')\Paren{1-\indic_{\frac{N_1}{n}\in  I^{*}_{j_{p}}}\indic_{\frac{N_1'}{n}\in  I^{*}_{j_{p}}}}}\\
&
\leq 
12\sum_{j\in[j_{p}-1,j_{p}+1]}\Var\Paren{(\hat{g}^*(N_1,N_1')-g(x_{j_p-1}))\indic_{\frac{N_1}{n}\in  I^{*}_{j_{p}}}\indic_{\frac{N_1'}{n}\in  I_{j}}}\\
&
+6T^2\cdot\Pr{\Paren{\frac{N_1}{n}\not\in  I^{*}_{j_{p}}\text{ or } \frac{N_1'}{n}\not\in  I^{*}_{j_{p}}}}.\\
&
\leq 
12\sum_{j\in[j_{p}-1,j_{p}+1]} \EE[(E_{\tilde{g}_{j}}(N_1)-g(x_{j_p-1}))^2]+6T^2\cdot\Pr{\Paren{\frac{N_1}{n}\not\in  I^{*}_{j_{p}}}}.
\end{align*}\par\vspace{-0.5em}

For sufficiently large $c$, the second term is at most $8T^2p/n^5$ by the Chernoff bound. It remains to analyze $\EE[(E_{\tilde{g}_{j}}(N_1)-g(x_{j_p-1}))^2]$ for $j\in[j_{p}-1,j_{p}+1]$. By the Cauchy-Schwarz inequality, 
\begin{align*}
\EE[(E_{\tilde{g}_{j}}(N_1)-g(x_{j_p-1}))^2]
&=\EE\Paren{\sum_{v=1}^{\degree}a_{jv} |I^{**}_j|^{-v} h_{v,x_j}(N_1)+(g(x_j)-g(x_{j_p-1}))}^2\\
&\leq\Paren{\sum_{v=1}^{\degree}|a_{jv}| |I^{**}_j|^{-v} \Paren{\EE[h_{v,x_j}^2(N_1)]}^{\frac12}
+|g(x_j)-g(x_{j_p-1})|}^2.
\end{align*}
Consider the inner expectation.
If $j_{p_i}\leq 2$ and $j\in[j_{p_i}-1,j_{p_i}+1]$, then $x_j=x_{j_p-1}=0$. 
Hence the constant term $|g(x_j)-g(x_{j_p-1})|$ equals to zero. 
By Lemma~\ref{boundg1}, for $v\ge 1$,
\begin{align*}
\EE [h_{v,x_j}^2(N_1)]
&\leq \frac{2(32c\log n)^{2v-1}p}{n^{2v-1}}.
\end{align*}
This together with Lemma~\ref{coeffbound} and the definition of $\Lipstar_{g}(n,{p})$ implies that
\begin{align*}
\EE[(E_{\tilde{g}_{j}}(N_1)-g(x_{j_p-1}))^2]
&\leq\Paren{\sum_{v=1}^{\degree}|a_{jv}| |I^{**}_j|^{-v} \Paren{\EE[h_{v,x_j}^2(N_1)]}^{\frac12}}^2\\
&\leq\Paren{\sum_{v=1}^{\degree}\Paren{2^{3.5\degree+1} \Lipstar_{g}(2n,{p})|I^{**}_j|} |I^{**}_j|^{-v} \Paren{ \frac{32c\log n}{n}}^{v-1}\sqrt{\frac{64c(\log n)p}{n}}}^2\\
&\leq \Paren{d_n2^{5.5\degree+1} \Lipstar_{g}(2n,{p})}^2 \frac{64c(\log n)p}{n}.
\end{align*}
If $j_{p}>2$ and $j\in[j_{p}-1,j_{p}+1]$, then by Lemma~\ref{boundg2},
\begin{align*}
\EE [h_{v,x_j}^2(N_1)]
\leq \Paren{\frac{72c(\log n) p}{n}}^v\leq \Paren{\frac{72c^2(\log n)^2j_{p}^2}{n^2}}^{v-1}\Paren{\frac{72c(\log n) p}{n}}.
\end{align*}
As for the constant term, direct computation shows that
\[
|g(x_j)-g(x_{j_p-1})|\le \Lipstar_{g}(2n,{p})\sqrt{\frac{36c(\log n)p}{n}}.
\]
Analogously,\vspace{-0.5em}
\begin{align*}
\EE[(E_{\tilde{g}_{j}}(N_1)-g(x_{j_p-1}))^2]
&\leq\Paren{\sum_{v=1}^{\degree}|a_{jv}| |I^{**}_j|^{-v} \Paren{\EE[h_{v,x_j}^2(N_1)]}^{\frac12}
+|g(x_j)-g(x_{j_p-1})|
}^2\\
&\leq\left(\sum_{v=1}^{\degree}\Paren{2^{3.5\degree+1}  \Lipstar_{g}(2n,{p}) |I^{**}_j|} |I^{**}_j|^{-v} 
\Paren{ \frac{\sqrt{72}c(\log n)j_{p}}{n}}^{v-1}
\right.\\
&\left.
\ \ \times \sqrt{\frac{72c(\log n)p}{n}}+ \Lipstar_{g}(2n,{p})\sqrt{\frac{36c(\log n)p}{n}} \right)^2\\
&\leq \Paren{d_n2^{4.5\degree+2}  \Lipstar_{g}(2n,{p})}^2 \frac{72c(\log n)p}{n}.
\end{align*}
Consolidating the above results yields the desired bound.
\end{proof}

\subsection{Sensitivity bound}\label{sec:conc}
Incorporate our sampling scheme, we define the \emph{sensitivity} of an estimator $\hat{g}$ as the maximum possible change in its value when an input sequence is replaced by another that differs in exactly one location,
\[
S(\hat{g})
:=
\max\left\{\Abs{\hat{g}(x^m)-\hat{g}(y^{m'})}: m, m'\in \mathbb{Z},\  x^m\text{ and
  }y^{m'} \text{ differ in one location}\right\}.
\]
By construction, sensitivity upperly bounds $n$-sensitivity, i.e., $S(\hat{g})\geq S(\hat{g},n)$ for all $n$. Due to sample splitting, 
replacing the given sample sequence $X^{N}$ by a sequence that differs in at most one location could change $N_1$, $N_1'$, or both, by at most one.
In other words, to bound the sensitivity of~$\hat{g}^*$, we need to bound the change in the estimator's value when we modify $N_1$ or $N_1'$ by one. We proceed as follows. 
If the value of $N_1$ increases or decreases by one, we need to consider the following two types of differences:
\[
\mathbb{D}_{g}^{(1)}(n, j, s) := E_{\tilde{g}_j}(s)-E_{\tilde{g}_j}(s-1),
\]
for $s$ satisfying $s-1, s, \text{ or } s+1\in   nI^{**}_{j}$, and 
\[
\mathbb{D}_{g}^{(3)}(n, j, s) := E_{\tilde{g}_j}(s)-E_{\tilde{g}_{j-1}}(s-1),
\]
for $s$ satisfying $s\in nI^{**}_{j-1}\cap nI^{**}_{j}$.
If the value of $N_1'$ increases or decreases by one, we need to consider the difference:
\[
\mathbb{D}_{g}^{(2)}(n, j, s) := E_{\tilde{g}_j}(s)-E_{\tilde{g}_{j-1}}(s),
\]
for $s$ satisfying $s\in nI^{**}_{j-1}\cap nI^{**}_{j}$.
The triangle inequality relates these three quantities and yields
\begin{align*}
\Abs{\mathbb{D}_{g}^{(3)}(n, j, s)}
&=\Abs{E_{\tilde{g}_j}(s)-E_{\tilde{g}_{j-1}}(s-1)}\\
&\leq \Abs{E_{\tilde{g}_j}(s)-E_{\tilde{g}_j}(s-1)}+\Abs{E_{\tilde{g}_j}(s-1)-E_{\tilde{g}_{j-1}}(s-1)}\\
&= \Abs{\mathbb{D}_{g}^{(1)}(n, j, s)}+\Abs{\mathbb{D}_{g}^{(2)}(n, j, s-1)}.
\end{align*}
Hence to bound $S(\hat{g})$, we need to derive upper bounds for only 
$|\mathbb{D}_{g}^{(1)}|$ and $|\mathbb{D}_{g}^{(2)}|$, which we refer to as the \emph{type-1} and \emph{type-2 differences}, respectively. In Section~\ref{sec:type1} and~\ref{sec:type2}, we show that both quantities are at most ${\Sensitivity_g(2n)}/{n^{1-\lambda}}$. Given this, and a Poisson-sampling McDiarmid's inequality derived in the next section, we establish the third inequality in Theorem 4. 

\subsubsection{From bounded difference to concentration}\label{conc}
In this section, we establish a McDiarmid's inequality for Poisson sampling, showing that small sensitivity still implies strong concentration under formulation. We believe that this result is of independent interest.
Specifically, we show that for any $p\in \Delta_k$, $N\sim \Poi(n)$, $X^N\sim p$,  
\begin{lemma}\label{trueMcD}
For any error parameter $\varepsilon\in(0,1)$ and estimator $\hat{f}$ satisfying $S(\hat{f})\ge 1/n$,
\begin{align*}
\Pr\Paren{\Abs{\hat{f}(X^{N})-\EE\Br{\hat{f}(X^{N})}}>\varepsilon}
\leq4\exp\Paren{-\frac{\varepsilon^2}{2n(4S(\hat{f}))^2}}.
\end{align*}
\end{lemma}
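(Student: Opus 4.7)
The plan is to decompose the deviation into a conditional fluctuation given $N$ and a marginal fluctuation of the conditional expectation. Set $g(m) := \EE[\hat{f}(X^m)]$; then $\EE[\hat{f}(X^N)] = \EE_N[g(N)]$, and
\[
\hat{f}(X^N) - \EE[\hat{f}(X^N)] = \bigl(\hat{f}(X^N) - g(N)\bigr) + \bigl(g(N) - \EE g(N)\bigr).
\]
A union bound at level $\varepsilon/2$ on each summand will yield the overall tail.

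First I would show that $g$ is Lipschitz on the integers with constant $S(\hat{f})$. Coupling $X^{m+1}$ to $X^m$ by appending one fresh i.i.d.\ draw, the two sequences differ in exactly one location, so $|\hat{f}(X^{m+1}) - \hat{f}(X^m)| \leq S(\hat{f})$ pointwise and hence $|g(m+1) - g(m)| \leq S(\hat{f})$. The conditional summand is then controlled given $N = m$: classical McDiarmid (Lemma~\ref{McD}) gives
\[
\Pr\Paren{|\hat{f}(X^m) - g(m)| > \varepsilon/2 \,\big|\, N = m} \leq 2\exp\Paren{-\frac{\varepsilon^2}{2 m\, S(\hat{f})^2}}.
\]
Since this bound degrades for large $m$, I truncate to $\{N \leq 2n\}$, on which the conditional tail is uniformly at most $2\exp(-\varepsilon^2/(4n S(\hat{f})^2))$. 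The excluded event is dispatched by the Chernoff estimate $\Pr(N > 2n) \leq e^{-0.38n}$ from Lemma~\ref{cortail}; using $\varepsilon \leq 1$ and $S(\hat{f}) \geq 1/n$, this term is dominated by the target exponential.

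For the marginal summand I exploit Lipschitzness of $g$ together with Poisson concentration. Write $|g(N) - \EE g(N)| \leq |g(N) - g(n)| + |g(n) - \EE g(N)|$. Jensen's inequality bounds the deterministic shift by $|g(n) - \EE g(N)| \leq S(\hat{f})\,\EE|N-n| \leq S(\hat{f})\sqrt{n}$. In the nontrivial regime $\varepsilon \gtrsim S(\hat{f})\sqrt{n}$ this shift fits inside $\varepsilon/4$, and Poisson Chernoff (Lemma~\ref{tailprob}) gives
\[
\Pr\bigl(|g(N) - g(n)| > \varepsilon/4\bigr) \leq \Pr\bigl(|N - n| > \varepsilon/(4 S(\hat{f}))\bigr) \leq 2\exp\Paren{-\frac{\varepsilon^2}{32 n\, S(\hat{f})^2}},
\]
which is the dominant contribution.

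Summing the two tails yields $4\exp(-\varepsilon^2/(32 n S(\hat{f})^2)) = 4\exp(-\varepsilon^2/(2n(4S(\hat{f}))^2))$, matching the claim. In the complementary regime $\varepsilon \lesssim S(\hat{f})\sqrt{n}$ the stated right-hand side already exceeds $1$ (using $\varepsilon < 1$ and $S(\hat{f}) \geq 1/n$), so the inequality holds trivially. The main obstacle is decoupling the randomness of $N$ from that of $X^N$ given $N$ without paying an extra factor in the exponent; the truncation to $\{N \leq 2n\}$ and the equal split $\varepsilon/2 + \varepsilon/2$ are chosen precisely so that both the conditional McDiarmid bound and the Poisson concentration of $N$ produce the same rate $\exp(-\Theta(\varepsilon^2/(nS(\hat{f})^2)))$, ensuring neither step bottlenecks the final constant.
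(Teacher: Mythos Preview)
Your proposal is correct and uses essentially the same ingredients as the paper's proof: the Lipschitz bound $|g(m+1)-g(m)|\le S(\hat f)$, conditional McDiarmid given $N=m$, Poisson concentration for $N$, and the trivial regime $\varepsilon\lesssim S(\hat f)\sqrt n$. The only difference is organizational: the paper conditions on $N=m$, restricts to the $\varepsilon$-dependent window $m\in n[1-\varepsilon',1+\varepsilon']$ with $\varepsilon'=\varepsilon/(2nS(\hat f))$, and applies McDiarmid at the shifted level $\varepsilon-|g(m)-\EE g(N)|$, whereas you separate the two sources of randomness up front via a union bound and handle the Poisson fluctuation of $g(N)$ in its own step with a fixed truncation $\{N\le 2n\}$; both routes land on the same exponent $\Theta(\varepsilon^2/(nS(\hat f)^2))$, and your constants can be made to match with a touch more bookkeeping.
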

\begin{proof}
By the linearity of expectation and triangle inequality, 
\[
|\EE[\hat{f}(X^{m})]-\EE[\hat{f}(X^{m+1})]|\leq S(\hat{f}), \forall m.
\]
Therefore for any $m$, 
\begin{align*}
|\EE[\hat{f}(X^{m})]-\EE[\hat{f}(X^{N})]|
&
=\Abs{\sum_{t=0}^{\infty}\EE[\hat{f}(X^{t})]\cdot\Pr(N=t)-\EE[\hat{f}(X^{m})]}\\
&
=\Abs{\sum_{t=0}^{\infty}(\EE[\hat{f}(X^{t})]-\EE[\hat{f}(X^{m})])\cdot\Pr(N=t)}\\
&
\leq S(\hat{f}) \sum_{t=0}^{\infty}|t-m|\cdot\Pr(N=t).
\end{align*}
We consider the last summation and simplify it as follows:
\begin{align*}
&
\sum_{t=0}^{\infty}|t-m|\cdot \Pr(N=t)\\
&
=\sum_{t=0}^{m}(m-t)\Pr(N=t)+\sum_{t=m}^{\infty}(t-m)\Pr(N=t)\\
&
=m\Pr(N\leq m)-\sum_{t=0}^{m} t\exp(-n)\frac{n^t}{t!}+\sum_{t=m}^{\infty} t\exp(-n)\frac{n^t}{t!}-m\Pr(N\geq m)\\
&
=m\Pr(N\leq m)-n\Pr(N\leq m-1)+n\Pr(N\geq m-1)-m\Pr(N\geq m)\\
&
=(m-n)(\Pr(N\leq m)-\Pr(N\geq m))+n(\Pr(N= m)+\Pr(N= m-1)).
\end{align*}
Note that the second quantity on the right-hand side satisfies
\begin{align*}
\Pr(N= m)+\Pr(N= m-1)
&\leq \Pr(N= n)+\Pr(N= n-1)\\
&\leq 2\exp(-n)\frac{n^{n}}{n!}\\
&\leq \frac{1}{\sqrt{n}}.
\end{align*}
Consequently we have
\begin{align*}
|\EE[\hat{f}(X^{m})]-\EE[\hat{f}(X^{N})]|
&
\leq S(\hat{f})\sum_{t=0}^{\infty}|t-m|\cdot \Pr(N=t)\\
&
\leq S(\hat{f})\Paren{(m-n)(\Pr(N\leq m)-\Pr(N\geq m))+\sqrt{n}}\\
&
\leq S(\hat{f})\cdot (|m-n|+\sqrt{n}).
\end{align*}
Next, let $\varepsilon'\in(0,1)$ be a constant to be determined later. The probability of interest satisfies
\begin{align*}
&\Pr\Paren{\Abs{\hat{f}(X^{N})-\EE\Br{\hat{f}(X^{N})}}>\varepsilon}\\
&
=\sum_{m=0}^{\infty}\Pr\Paren{\Abs{{\hat{f}(X^{m})-\EE[\hat{f}(X^{N})]}}>\varepsilon}\Pr(N=m)\\
&
\leq \Pr(N\not\in n[1-\varepsilon',1+\varepsilon'])+\sum_{m\in n[1-\varepsilon',1+\varepsilon']}\Pr\Paren{\Abs{{\hat{f}(X^{m})-\EE[\hat{f}(X^{N})]}}>\varepsilon}\Pr(N=m).
\end{align*}
We can easily bound the first term through the Chernoff bound. For the second term,
\begin{align*}
&
\sum_{m\in n[1-\varepsilon',1+\varepsilon']}\Pr\Paren{\Abs{{\hat{f}(X^{m})-\EE[\hat{f}(X^{m})]}}>\varepsilon-\Abs{\EE[\hat{f}(X^{m})]-\EE[\hat{f}(X^{N})]}}\Pr(N=m)\\
&
\leq\sum_{m\in n[1-\varepsilon',1+\varepsilon']}\Pr\Paren{\Abs{{\hat{f}^*(X^{m})-\EE[\hat{f}^*(X^{m})]}}>\varepsilon-S(\hat{f})(n\varepsilon'+\sqrt{n})}\Pr(N=m)\\
&
\leq 2\exp\Paren{-\frac{(\varepsilon-S(\hat{f})(n\varepsilon'+\sqrt{n}))^2}{n(1+\varepsilon')(S(\hat{f}))^2}},
\end{align*}
where the last step follows from McDiarmid's inequality. Next, setting 
\[
\varepsilon'=\frac{\varepsilon}{2nS(\hat{f})}\in\Paren{0, \frac12},
\]
we can rewrite last term, with the multiplicative factor of 2 removed, as
\[
\exp\Paren{-\frac{(\varepsilon-S(\hat{f})(n\varepsilon'+\sqrt{n}))^2}{n(1+\varepsilon')(S(\hat{f}))^2}}=\exp\Paren{-\frac{(\frac{\varepsilon}{2}-\sqrt{n}S(\hat{f}))^2}{n(1+\varepsilon')(S(\hat{f}))^2}}.
\]
Hence, it suffices to obtain tight upper bounds on the right-hand side quantity, for which we consider the following two cases.
If the parameter $\varepsilon$ is relatively large such that 
\[
\varepsilon\ge 4\sqrt{n}S(\hat{f}),
\]
 the quantity of interest is at most 
\[
\exp\Paren{-\frac{(\frac{\varepsilon}{2}-\sqrt{n}S(\hat{f}))^2}{n(1+\varepsilon')(S(\hat{f}))^2}}\leq \exp\Paren{-\frac{\varepsilon^2}{32n(S(\hat{f}))^2}}.
\]
Otherwise, we have
$
{\varepsilon^2}/{(32(S(\hat{f}))^2)}\leq {1}/{2}, 
$
implying 
\[
2\exp\Paren{-\frac{\varepsilon^2}{32n(S(\hat{f}))^2}}\ge 2\exp\Paren{-\frac12}>1.
\]
Consolidating previous results, we get
\begin{align*}
&
\Pr\Paren{\Abs{\hat{f}^*(X^{N''})-\EE\Br{\hat{f}^*(X^{N''})}}>\varepsilon}\\
&
\leq2\exp\Paren{-\frac{\varepsilon^2}{32n(S(\hat{f}))^2}}+\Pr(N''\not\in n[1-\varepsilon',1+\varepsilon'])\\
&
\leq2\exp\Paren{-\frac{\varepsilon^2}{32n(S(\hat{f}))^2}}+2\exp\Paren{-\frac{1}{3}n\varepsilon'^2}\\
&
\leq2\exp\Paren{-\frac{\varepsilon^2}{32n(S(\hat{f}))^2}}+2\exp\Paren{-\frac{\varepsilon^2}{12n(S(\hat{f}))^2}}\\
&
\leq4\exp\Paren{-\frac{\varepsilon^2}{32n(S(\hat{f}))^2}}.\qedhere
\end{align*}
\end{proof}

\subsubsection{Bounding the type-1 difference}\label{sec:type1}
The following lemma provides tight upper bound on the type-1 difference.
\begin{lemma}
For $s$ satisfying $s-1, s, \text{ or } s+1 \in   nI^{**}_{j}$,
\[
\Abs{E_{\tilde{g}_j}(s)-E_{\tilde{g}_j}(s-1)}\leq \frac{\degree\cdot 2^{4.5\degree+1}}{n} \Lip_{g}^*(2n).
\]
\end{lemma}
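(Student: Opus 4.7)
The plan is to compute the discrete difference of $E_{\tilde{g}_j}$ term by term using its expansion $E_{\tilde{g}_j}(y) = \sum_{v=0}^{d_n} a_{jv} |I^{**}_j|^{-v} h_{v,x_j}(y)$, estimate each resulting summand, and combine the bounds via the triangle inequality. The constant term ($v=0$) cancels immediately, so only the $v \ge 1$ terms contribute.

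The first key step is an exact identity for the discrete difference of $h_{v,x}$. Writing $P_l(y) := \prod_{l'=0}^{l-1}(y-l') = y(y-1)\cdots(y-l+1)$, a direct telescoping gives $P_l(s) - P_l(s-1) = l\cdot P_{l-1}(s-1)$. Substituting this into the defining sum for $h_{v,x_j}$ and re-indexing, one obtains
\[
h_{v,x_j}(s) - h_{v,x_j}(s-1) = \frac{v}{n}\, h_{v-1,x_j}(s-1).
\]
This reduces the problem to bounding $|h_{v-1,x_j}(s-1)|$ pointwise on $nI^{**}_j$ (or one lattice step outside it).

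Next, I apply Lemma~\ref{boundg3} with $m=s-1$, $x=x_j$, and $\delta$ a small absolute multiple of $|I^{**}_j|$. Since $s$ lies within one of $nI^{**}_j$, we have $|(s-1)/n - x_j| \lesssim |I^{**}_j|$. For the second condition, $\sqrt{4(s-1)v}/n \lesssim (j+2)\sqrt{v c_n/n}$, which is dominated by $|I^{**}_j|$ provided the constant $c$ is taken sufficiently large relative to $\lambda$ (this uses $v \le d_n$ and the definition $d_n\cdot 2^{4.5 d_n+2} \le n^{\lambda}$). Hence $|h_{v-1,x_j}(s-1)| \le (2|I^{**}_j|)^{v-1}$.

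Finally, I bound $|a_{jv}|$ using Lemma~\ref{coeffbound}: $|a_{jv}| \le 2^{3.5 d_n} \sup_{z_1,z_2 \in I^{**}_j}|g(z_1)-g(z_2)|$, and then use the definition of the effective derivative to replace the supremum by $L_g^*(2n)\cdot |I^{**}_j|$ (for which one only needs $|I^{**}_j| \ge 1/n$, valid when $c$ is large enough). Assembling these ingredients,
\[
|E_{\tilde{g}_j}(s)-E_{\tilde{g}_j}(s-1)| \le \sum_{v=1}^{d_n} 2^{3.5 d_n} L_g^*(2n)\, |I^{**}_j|\cdot |I^{**}_j|^{-v} \cdot \frac{v}{n}\, (2|I^{**}_j|)^{v-1},
\]
and the $|I^{**}_j|$ factors cancel, leaving
\[
\frac{L_g^*(2n)}{n} \sum_{v=1}^{d_n} 2^{3.5 d_n}\, v\, 2^{v-1} \le \frac{d_n\cdot 2^{4.5 d_n + 1}}{n}\, L_g^*(2n),
\]
using $\sum_{v=1}^{d_n} v\, 2^{v-1} \le d_n\, 2^{d_n}$. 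The main obstacle is justifying the choice of $\delta$ in Lemma~\ref{boundg3}: one must treat the boundary cases $j\le 2$ (where $I^{**}_j$ abuts zero) and $s$ lying just outside $nI^{**}_j$ separately to ensure $|(s-1)/n - x_j|$ is still comparable to $|I^{**}_j|$, and must verify that the growth of $d_n$ is slow enough that the $\sqrt{4mv}/n$ condition remains dominated.
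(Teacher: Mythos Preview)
Your proposal is correct and follows essentially the same route as the paper: derive the identity $h_{v,x_j}(s)-h_{v,x_j}(s-1)=\frac{v}{n}\,h_{v-1,x_j}(s-1)$, bound $|h_{v-1,x_j}(s-1)|$ via Lemma~\ref{boundg3} with $\delta=|I^{**}_j|$, bound $|a_{jv}|$ via Lemma~\ref{coeffbound} together with $\sup_{z_1,z_2\in I^{**}_j}|g(z_1)-g(z_2)|\le \Lip_g^*(2n)\,|I^{**}_j|$, and finish with $\sum_{v=1}^{d_n} v\,2^{v-1}\le d_n\,2^{d_n}$. If anything, you are slightly more careful than the paper in flagging the hypotheses of Lemma~\ref{boundg3} (the $\sqrt{4mv}/n$ condition and the $j\le 2$ boundary), which the paper applies without comment.
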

\begin{proof}
Recall that
\[
h_{v,x_j}(s)=\sum_{l=0}^{v}\binom{v}{l}(-x_j)^{v-l}\prod_{l'=0}^{l-1}\Paren{\frac{s}{n}-\frac{l'}{n}}.
\]
The difference between $h_{v,x_j}(s)$ and  $h_{v,x_j}(s-1)$ is
\begin{align*}
h_{v,x_j}(s)-h_{v,x_j}(s-1)
&= {\sum_{l=0}^{v}\binom{v}{l}(-x_j)^{v-l}\Paren{\prod_{l'=0}^{l-1}\Paren{\frac{s}{n}-\frac{l'}{n}}-\prod_{l'=0}^{l-1}\Paren{\frac{s-1}{n}-\frac{l'}{n}}}}\\
&= {\sum_{l=0}^{v}\binom{v}{l}(-x_j)^{v-l}\Paren{\prod_{l'=0}^{l-1}\Paren{\frac{s}{n}-\frac{l'}{n}}-\prod_{l'=1}^{l}\Paren{\frac{s}{n}-\frac{l'}{n}}}}\\
&= {\sum_{l=0}^{v}\binom{v}{l}(-x_j)^{v-l}\Paren{\frac{s}{n}\prod_{l'=1}^{l-1}\Paren{\frac{s}{n}-\frac{l'}{n}}-\frac{s-l}{n}\prod_{l'=1}^{l-1}\Paren{\frac{s}{n}-\frac{l'}{n}}}}\\
&= {\sum_{l=0}^{v}\frac{l}{n}\binom{v}{l}(-x_j)^{v-l}{\prod_{l'=1}^{l-1}\Paren{\frac{s}{n}-\frac{l'}{n}}}}\\
&= \frac{v}{n}{\sum_{l=0}^{v-1}\binom{v-1}{l}(-x_j)^{(v-1)-l}{\prod_{l'=0}^{l-1}\Paren{\frac{s-1}{n}-\frac{l'}{n}}}}\\
&= \frac{v}{n}h_{v-1,x_j}(s-1).
\end{align*}
By Lemma~\ref{coeffbound} and the definition of $\Lip_{g}^*(2n)$,
\[
|a_{jv}|\leq 2^{3.5\degree} \cdot 2\cdot \sup_{z_1,z_2\in  I^{**}_j}|g(z_1)-g(z_2)|\leq  2^{3.5\degree+1} \Lip_{g}^*(2n) |I^{**}_j|. 
\]
Therefore, the quantity of interest satisfies
\begin{align*}
\Abs{E_{\tilde{g}_j}(s)-E_{\tilde{g}_j}(s-1)}
&=\Abs{\sum_{v=0}^{\degree}a_{jv}|I^{**}_j|^{-v} \Paren{h_{v,x_j}(s)-h_{v,x_j}(s-1)}}\\
&=\Abs{\sum_{v=0}^{\degree}a_{jv}|I^{**}_j|^{-v}  \frac{v}{n}h_{v-1,x_j}(s-1)}\\
&\leq \frac{2^{3.5\degree+1} }{n}\Lip_{g}^*(2n) |I^{**}_j|\sum_{v=1}^{\degree} v |I^{**}_j|^{-v}\Paren{2 |I^{**}_j|}^{v-1}\\
&\leq \frac{2^{3.5\degree+1} }{n}\Lip_{g}^*(2n)\sum_{v=1}^{\degree} v{2}^{v-1}\\
&\leq \frac{\degree\cdot 2^{4.5\degree+1} }{n}\Lip_{g}^*(2n),
\end{align*}
where the third last inequality follows from Lemma~\ref{boundg3} by setting $\delta=|I^{**}_j|$, and the last inequality follows from $\sum_{v=1}^{\degree} v{2}^{v-1}\leq \degree \cdot 2^\degree$.
\end{proof}

\subsubsection{Bounding the type-2 difference}\label{sec:type2}
In this section, we show the following upper bound on the type-2 difference.
\begin{lemma}
For $s$ satisfying $s\in   nI^{**}_{j-1}\cap nI^{**}_{j}$,
\[
\Abs{E_{\tilde{g}_{j-1}}(s)-E_{\tilde{g}_j}(s)}\leq \frac{4\cdot 2^{4.5\degree}}{n}  \Devstar_{g}(2n).
\]
\end{lemma}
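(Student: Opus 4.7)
The plan is to exploit the fact that, for any polynomial $q$ of degree at most $\degree$, the Poisson-unbiased estimator value $\sum_v b_v h_{v,x}(s)$ obtained by expanding $q(y)=\sum_v b_v(y-x)^v$ around any reference point $x$ is in fact independent of the choice of $x$: each $h_{v,x}(s)$ expands into a linear combination of the falling factorials $\prod_{l'=0}^{l-1}(s/n-l'/n)$, and the resulting total equals $\sum_l c_l \prod_{l'=0}^{l-1}(s/n-l'/n)$ with $c_l$ the (unique) monomial coefficients of $q$ in $p$. Denote this invariant value by $E_q(s)$. Then with $\phi:=\tilde g_j-\tilde g_{j-1}$, one has
\[
E_{\tilde g_j}(s)-E_{\tilde g_{j-1}}(s) \;=\; E_{\phi}(s),
\]
and I am free to compute $E_\phi(s)$ using \emph{any} convenient reference point, not just $x_j$ or $x_{j-1}$.

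The next step is to pick the reference point inside the intersection $J:=I^{**}_{j-1}\cap I^{**}_{j}$. Let $x_L$ be its left endpoint and $L:=|J|$. Since $J\subseteq I^{**}_{j-1}\cap I^{**}_j$ and each $\tilde g_{j'}$ is the degree-$\degree$ min-max polynomial of $g$ on $I^{**}_{j'}$, both $|\tilde g_{j-1}(y)-g(y)|$ and $|\tilde g_j(y)-g(y)|$ are at most $\Devstar_g(2n)/n$ for $y\in J$, so $|\phi(y)|\le 2\Devstar_g(2n)/n$ throughout $J$. Rescaling, $P(z):=\phi(x_L+Lz)$ is a degree-$\degree$ polynomial in $z$ bounded by $2\Devstar_g(2n)/n$ on $[0,1]$; Lemma~\ref{coeffbound} then gives $|p_v|\le 2^{3.5\degree+2}\,\Devstar_g(2n)/n$ for each non-constant coefficient in $P(z)=\sum_v p_v z^v$. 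Translating back, $\phi(y)=\sum_v (p_v/L^v)(y-x_L)^v$, and by the invariance above,
\[
E_\phi(s) \;=\; \sum_{v=1}^{\degree}\frac{p_v}{L^v}\,h_{v,x_L}(s).
\]

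For $s\in nJ$, $|s/n-x_L|\le L$, and a short check --- using that $c$ in $c_n=c\log n/n$ is chosen sufficiently large and $\degree=O(\log n)$ --- shows $\sqrt{4sv}/n\le L$ for all $v\le \degree$. Lemma~\ref{boundg3} with $\delta=L$ then yields $|h_{v,x_L}(s)|\le(2L)^v$, so summing the geometric series gives
\[
|E_\phi(s)| \;\le\; \sum_{v=1}^{\degree}|p_v|\,2^v \;\le\; \frac{4\cdot 2^{4.5\degree}}{n}\,\Devstar_g(2n)
\]
after absorbing absolute constants.

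The main conceptual obstacle is the first paragraph's observation. Without recognizing that $E_\phi(s)$ depends only on the polynomial $\phi$ and not on the expansion point, one is forced to compare two explicit sums centered at the distinct points $x_j$ and $x_{j-1}$ whose coefficients are naturally controlled only by $\Lipstar_g$; this would yield a bound of the type-1 flavor (proportional to $\Lipstar_g$) rather than the sharper $\Devstar_g$ bound required. Once the invariance is in hand, the remaining computation is a direct parallel of the bias analysis in Appendix~\ref{sec:bias}, and produces the claimed sensitivity bound.
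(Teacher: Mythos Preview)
Your proof is essentially the same as the paper's. The paper likewise sets $\tilde q_j:=\tilde g_{j-1}-\tilde g_j$, re-expands it around the left endpoint of $I^{\Lambda}_j:=I^{**}_{j-1}\cap I^{**}_j$, invokes Lemma~\ref{coeffbound} to bound the rescaled coefficients by $O(2^{3.5\degree}\Devstar_g(2n)/n)$, and then applies Lemma~\ref{boundg3} with $\delta=|I^{\Lambda}_j|$ to bound each $|h_{v,x_j'}(s)|$ by $(2|I^{\Lambda}_j|)^v$. Your explicit ``invariance'' remark (that $E_q(s)$ depends only on the polynomial $q$, not on the expansion center) is exactly the fact the paper uses when it asserts, without further comment, that $(E_{\tilde g_{j-1}}-E_{\tilde g_j})(s)=\sum_v b_{jv}|I^{\Lambda}_j|^{-v}h_{v,x_j'}(s)$; you just make the justification more visible.

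One small correction: your displayed identity for $E_\phi(s)$ should include the $v=0$ term, since $h_{0,x_L}(s)=1$ contributes $p_0=\phi(x_L)$. This term is bounded by $2\Devstar_g(2n)/n$ and is absorbed into the same final constant, so the conclusion is unaffected.
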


\begin{proof}
Note that $E_{\tilde{g}_{j-1}}(N_i)-E_{\tilde{g}_j}(N_i)$ is an unbiased estimator of $\Paren{\tilde{g}_{j-1}-\tilde{g}_j}(x)$.
For simplicity, denote $\tilde{q}_j(x):=(\tilde{g}_{j-1}-\tilde{g}_j)(x)$ and ${I}^{\Lambda}_j:=  I^{**}_{j-1}\cap I^{**}_{j}=c_n \Br{(j-3)^2\indic_{j\geq 3}, (j+1)^2}$. Then we have $|\tilde{q}_j(x)|\leq 2{\Devstar_{g}(2n)}/{n}$ for $x\in {I}^{\Lambda}_j$.
Let
\[
x_j':=c_n (j-3)^2\indic_{j\geq 3}
\]
be the left end point of ${I}^{\Lambda}_j$, and 
\[
|{I}^{\Lambda}_j|:=c_n (j+1)^2-c_n (j-3)^2\indic_{j\geq 3}
\]
be the length of ${I}^{\Lambda}_j$.
For any $x\in{I}^{\Lambda}_j$, there exists $y_x\in[0,1]$ such that
\[
x=x_j'+|{I}^{\Lambda}_j| y_x.
\]
Since $x\rightarrow y_x$ is a linear transformation, there exist coefficients $b_{jv}, v=0,\ldots, \degree$, independent of $x$, such that
\[
\tilde{q}_j(x)=\sum_{v=0}^{\degree} b_{jv}y_x^{v}.
\]
By the definition of $\tilde{q}_j(x)$ and the triangle inequality,  we can deduce that $|\tilde{q}_j(x)|\leq 2{\Devstar_{g}(2n)}/{n}$ for all $x\in {I}^{\Lambda}_j$. Furthermore, according to Lemma~\ref{coeffbound},
for $v\ge 1$,
\[
|b_{jv}|\leq \frac{2^{4.5\degree}}{n} \Devstar_{g}(2n).
\]
Note that the bound also holds for $v=0$ since $|b_{j0}|=|\tilde{q}_j(x_j')|\leq 2{\Devstar_{g}(2n)}/{n}$.
In addition, substituting $y_x$ by ${|{I}^{\Lambda}_j|}^{-1}(x-x_j')$, we can rewrite $\tilde{q}_j(x)$ as
\[
\tilde{q}_j(x)=\sum_{v=0}^{\degree} b_{jv}{|{I}^{\Lambda}_j|}^{-v}\Paren{x-x_j'}^{v}.
\]
Consequently, we have the following equality:\vspace{-0.5em}
\[
\Paren{E_{\tilde{g}_{j-1}}-E_{\tilde{g}_j}}(s)=\sum_{v=0}^{\degree} b_{jv}{|{I}^{\Lambda}_j|}^{-v} h_{v,x_j'}(s).
\]
Therefore, for all $s\in n{I}^{\Lambda}_j$, 
\begin{align*}
\Abs{\Paren{E_{\tilde{g}_{j-1}}-E_{\tilde{g}_j}}(s)}
&=\Abs{\sum_{v=0}^{\degree} b_{jv}|{I}^{\Lambda}_j|^{-v} h_{v,x_j'}(s)}\\
&\leq {\sum_{v=0}^{\degree} \frac{2\cdot2^{3.5\degree} \Devstar_{g}(2n)}{n}|{I}^{\Lambda}_j|^{-v} \Paren{2|{I}^{\Lambda}_j|}^v}\\
&\leq \frac{2^{3.5\degree} \Devstar_{g}(2n)}{n} \sum_{v=0}^{\degree} 2^{v+1}\\
&\leq \frac{4\cdot 2^{4.5\degree}}{n} \Devstar_{g}(2n).\qedhere
\end{align*}
\end{proof}
\vspace{-1em}

\section{Proofs of other theorems}\label{corproof}
\subsection{Proof of Theorem 5}\label{cor2}

Let $\vec{p}\in \Delta_k$\ be an arbitrary distribution and $X^N$ be an \iid\ sample sequence from $\vec p$ of an independent $N\sim \Poi(2n)$ size. 
Applying sample splitting to $X^N$, we denote by $N_i$ and $N_i'$ the number of times symbol $i\in[k]$ appearing in the first and second sub-sample sequences, respectively. 
Applying the technique presented in Section~\ref{sec:estconst}, we can estimate the additive property
\[
f(\vec{p})=\sum_{i\in [k]} f_i(p_i)
\]
by the estimator
\[
\hat{f}^*(X^N):=\sum_{i\in[k]} \hat{f}_i^*(N_i,N_i').
\]
We start by bounding the bias of $\hat{f}^*$. Fix $\lambda\in (0,1/4)$ and let $T$ be a sufficiently large constant satisfying $T_1\gg \max_{i\in[k]}\max_{x\in [0,1]}|f_i(x)|$. The results in Section~\ref{sec:bias} and triangle inequality imply
\begin{align*}
\Abs{\EE[\hat{f}^*(X^N)]-f(\vec{p})}
&=\Abs{\EE\Br{\sum_{i\in[k]} \hat{f}_i^*(N_i,N_i')}-\sum_{i\in [k]} f_i(p_i)}
\leq \sum_{i\in [k]} \Abs{\EE[\hat{f}_i^*(N_i,N_i')]-f_i(p_i)}\\
&\leq \sum_{i\in [k]} \Paren{\frac{T}{n^5}\cdot p_i+\frac{3T\degree\cdot 2^{4.5\degree}}{c_n n^5}\cdot p_i+\frac{5}{n}\Devstar_{f_i}(2n,{p_i})}\\
&=\frac{T}{n^5}+\frac{3T\degree\cdot 2^{4.5\degree}}{c_n n^5}+\frac{5}{n}\sum_{i\in [k]}\Devstar_{f_i}(2n,{p_i})\\
&\leq \frac{T}{n^5}+\frac{Tn^\lambda}{c n^4\log n}+\frac{5}{n}\sum_{i\in [k]}\Devstar_{f_i}(2n,{p_i}).
\end{align*}
Next we analyze the variance of $\hat{f}^*$. Due to Poisson sampling and sample splitting, all the counts $N_i$ and $N_i'$, $i\in[k]$ are mutually independent. Therefore, by Lemma~\ref{lem:variance} in Section~\ref{sec:variance},
\begin{align*}
\Var(\hat{f}^*(X^N))
&=\Var\Paren{\sum_{i\in[k]} \hat{f}_i^*(N_i,N_i')}
=\sum_{i\in[k]} \Var\Paren{\hat{f}_i^*(N_i,N_i')}\\
&\leq \sum_{i\in [k]}\Paren{\frac{72c(\log n)}{n^{1-3\lambda}}\Paren{ \Lipstar_{f_i}(2n, {p_i})}^2\cdot p_i+\frac{8T^2}{n^5}\cdot p_i}\\
&=\frac{6T^2}{n^5}+\frac{2c(\log n)}{n^{1-3\lambda}}\sum_{i\in [k]}\Paren{36 \Lipstar_{f_i}(2n, {p_i})}^2\cdot p_i.
\end{align*}
To characterize higher-order central moments of $\hat{f}^*$, note that changing one sample point in $X^N$ would change the counts $N_i$, $N_i'$, or both for at most two symbols. Hence, according to Section~\ref{sec:conc}, for a given $n$ the sensitivity of $\hat{f}^*$, also defined in the same section, satisfies
\[
S(\hat{f}^*)\leq \frac{4\max_{i\in[k]}S_{f_i}^*(2n)}{n^{1-\lambda}}.
\]
This bound together with Lemma~\ref{trueMcD} yields
\[
\Pr\Paren{\Abs{\hat{f}^*(X^{N})-\EE\Br{\hat{f}^*(X^{N})}}>\varepsilon}
\leq4\exp\Paren{-\frac{n^{1-2\lambda}\varepsilon^2}{(32\max_{i\in[k]}S_{f_i}^*(2n))^2}}.
\]

\subsection{Proof of Theorem 1}\label{cor5}
Recall that an additive property $f$ is a Lipschitz property if all the $f_i$'s have uniformly bounded Lipschitz constants. 
Our proof of Theorem 1 relies on the following lemma, which corresponds to Theorem 7.2 in~\cite{lipbound} whose proof is completely constructive. In other words, there is an explicit procedure to compute the polynomial described in the following lemma. 
\begin{lemma}\label{lipb}
There exists a universal constant $C$ such that for any degree parameter $d\in\mathbb{Z}$ and $1$-Lipschitz function $g$ over an arbitrary bounded interval $I:=[x_1,x_2]$, one can find a polynomial $\tilde{g}$ of degree at most $d$ satisfying
\[
 |\tilde{g}(x)-g(x)|\leq \frac{C\sqrt{|I|(x-x_1)}}{d}, \forall x\in I.
\]
\end{lemma}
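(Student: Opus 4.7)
The plan is to reduce to a canonical interval and apply a classical pointwise polynomial approximation theorem of Dzyadyk--Timan--Telyakovskii type for Lipschitz functions on $[-1,1]$. By the affine change of variable $x \mapsto y = 2(x-x_1)/|I|-1$, the $1$-Lipschitz function $g$ becomes a $(|I|/2)$-Lipschitz function $h$ on $[-1,1]$, and the target bound translates to $|\tilde h(y)-h(y)| \lesssim (|I|/2)\sqrt{(1+y)/2}/d$. Since $\sqrt{1-y^2}=\sqrt{(1-y)(1+y)} \le \sqrt{2(1+y)}$ on $[-1,1]$, this is implied by the normalized statement: every $1$-Lipschitz $h$ on $[-1,1]$ admits a degree-$d$ algebraic polynomial $p_d$ with $|p_d(y)-h(y)| \lesssim \sqrt{1-y^2}/d$, and once that polynomial is obtained one rescales back to the interval $I$ to conclude.

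Next, construct such a $p_d$ via a Jackson-kernel convolution on the trigonometric side. Under the substitution $y=\cos\theta$, $h$ lifts to the $2\pi$-periodic even function $\phi(\theta):=h(\cos\theta)$, whose increments pick up an extra factor of $|\sin\theta|$ from the chain rule: $|\phi(\theta_1)-\phi(\theta_2)| \lesssim \max\{|\sin\theta_1|,|\sin\theta_2|\}\cdot|\theta_1-\theta_2| + O(|\theta_1-\theta_2|^2)$. Convolve $\phi$ with a Jackson kernel $J_d$, a non-negative trigonometric polynomial of degree $d$ satisfying $\int J_d = 1$, $\int |\theta| J_d(\theta)\,d\theta \lesssim 1/d$, and $\int \theta^2 J_d(\theta)\,d\theta \lesssim 1/d^2$. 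The resulting degree-$d$ trigonometric polynomial $T_d$ obeys, after splitting the weighted-linear and the quadratic parts of the modulus bound, $|T_d(\theta)-\phi(\theta)| \lesssim |\sin\theta|/d + 1/d^2$. The Chebyshev identity $T_d(\theta)=p_d(\cos\theta)$ then converts $T_d$ into a degree-$d$ algebraic polynomial $p_d$ with $|p_d(y)-h(y)| \lesssim \sqrt{1-y^2}/d + 1/d^2$.

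Finally, upgrade the estimate to the purely weighted bound that vanishes at $y=-1$. For $1+y \ge 1/d^2$ one already has $\sqrt{1+y}/d \ge 1/d^2$, so the residual $1/d^2$ is automatically dominated by the main term. For the thin boundary strip $1+y < 1/d^2$, replace $p_d$ by $\tilde h := p_d - p_d(-1) + h(-1)$, which exactly matches $h$ at $y=-1$; then $|\tilde h(y)-h(y)| \le \int_{-1}^y |(p_d-h)'(t)|\,dt$, and controlling this via the local Lipschitzness of $h$ together with a weighted Bernstein estimate for $p_d'$ shows it stays within $O(\sqrt{1+y}/d)$. Transferring back to the $x$-coordinate, multiplying by the Lipschitz scale $|I|/2$, and using $\sqrt{1+y}\asymp \sqrt{(x-x_1)/|I|}$ yields the claimed $|\tilde g(x)-g(x)| \le C\sqrt{|I|(x-x_1)}/d$.

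The main obstacle is exactly the removal of the $1/d^2$ residue: the standard Timan/Jackson convolution bound has the form $\sqrt{1-y^2}/d + 1/d^2$, whereas the lemma requires a bound that actually vanishes at the endpoint. Navigating this either via the Dzyadyk--Telyakovskii refinement of the kernel analysis or via the endpoint-interpolation plus Bernstein-derivative trick sketched above is the one genuinely delicate step; the reduction to $[-1,1]$ and the basic Jackson convolution argument are routine.
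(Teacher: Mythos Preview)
The paper does not supply its own proof of this lemma: it simply quotes it as Theorem~7.2 of an external approximation-theory reference and notes that the cited proof is constructive. So there is no in-paper argument to compare against; your proposal is effectively offering a proof where the paper gives none.

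Your outline is the classical route and is sound up to the last step. The affine reduction to $[-1,1]$, the observation that only the one-sided weight $\sqrt{1+y}$ is needed, and the Jackson-kernel convolution yielding the Timan bound $\sqrt{1-y^2}/d + 1/d^2$ are all correct and standard. You also correctly isolate the real issue: killing the $1/d^2$ residue so that the error vanishes at the left endpoint.

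The gap is in your ``endpoint-interpolation plus Bernstein-derivative'' patch. After shifting so that $\tilde h(-1)=h(-1)$, you bound $|\tilde h(y)-h(y)|\le\int_{-1}^{y}|p_d'(t)-h'(t)|\,dt$ on the strip $1+y<1/d^2$. The $h'$ contribution is fine, but for $p_d'$ the weighted Bernstein inequality only gives $|p_d'(t)|\lesssim d\,\|p_d\|_\infty/\sqrt{1-t^2}\asymp d/\sqrt{1+t}$ near $t=-1$, whence $\int_{-1}^{y}|p_d'(t)|\,dt\lesssim d\sqrt{1+y}$, which is a factor $d^2$ too large. To make this trick go through one needs a derivative bound on $p_d'-h'$, not on $p_d'$ alone, and that essentially amounts to a simultaneous-approximation statement (Gopengauz/Kopotun type) that is as deep as the result you are trying to prove. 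The alternative you mention, the Dzyadyk--Telyakovskii kernel refinement (equivalently the Gopengauz endpoint-interpolation theorem), is the right way to finish; citing that in place of the Bernstein sketch would close the argument.
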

We restate Theorem 1 below under Poisson sampling. By the results in~\cite{yihongentro}, this suffices to imply the corresponding result under fixed sampling, where the sample size is fixed to be $n$. 
\setcounter{theorem}{0}
\begin{theorem}\label{cor:Lipschitz}
If $f$ is an $L$-Lipschitz property, then for any $\vec p\in \Delta_k$, $N\sim \Poi(n)$, and $X^N\sim \vec p$,
\[
\Abs{\EE\Br{\hat{f}^*(X^{N})}-f(\vec{p})}\lesssim\sum_{i\in[k]}L\sqrt{\frac{p_i}{n\log n}}\leq L\sqrt{\frac{k}{n\log n}},
\]
and
\[
\Var(\hat{f}^*(X^{N}))\leq \frac{L^2}{n^{1-4\lambda}}.
\]
\end{theorem}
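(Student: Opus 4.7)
\textbf{Proof proposal for Theorem~\ref{cor:Lipschitz}.}
The plan is to derive both bounds as a direct consequence of Theorem~\ref{cor:sample_guarantees} by controlling the two smoothness functionals $\Devstar_{f_i}(n,p_i)$ and $\Lipstar_{f_i}(n,p_i)$ uniformly in terms of the Lipschitz constant $L$. Since Theorem~\ref{cor:sample_guarantees} already says
\[
\bigl|\EE[\hat f^*(X^N)] - f(\vec p)\bigr| \lesssim \tfrac{1}{n^3} + \tfrac{1}{n}\sum_i \Devstar_{f_i}(n,p_i),
\qquad
\Var(\hat f^*(X^N)) \lesssim \tfrac{1}{n^5} + \tfrac{1}{n^{1-4\lambda}}\sum_i (\Lipstar_{f_i}(n,p_i))^2 p_i,
\]
the task reduces to two ingredients: (a) an $O(L)$ bound on the local effective derivative, and (b) a sharp, $p_i$-adaptive bound on the local min-max deviation.

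For (a), by definition any $L$-Lipschitz $f_i$ satisfies $\Lip_{f_i}(h,I)\le L$ for every interval $I$ and every $h>0$, so $\Lipstar_{f_i}(n,p_i)\le L$. Plugging this into the variance expression of Theorem~\ref{cor:sample_guarantees} and using $\sum_i p_i=1$ gives $\Var(\hat f^*(X^N)) \le L^2/n^{1-4\lambda}$, which is the desired variance bound (the $1/n^5$ term is negligible).

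For (b), the key input is Lemma~\ref{lipb}, applied on each interval $I^{**}_j$. After rescaling the lemma by $L$, we obtain a degree-$\degree$ polynomial approximating $f_i$ on $I^{**}_j$ within $\Dev_{f_i}(\degree,I^{**}_j)\lesssim L\,|I^{**}_j|/\degree$. Now recall $|I^{**}_j|=c_n((j+2)^2-(j-3)^2\indic_{j\ge 3})\asymp c_n j$ and $c_n\asymp \log n/n$, while $\degree\asymp \log n$. Since $p_i\in I_{j_{p_i}}=c_n[(j_{p_i}-1)^2,j_{p_i}^2]$, we have $j_{p_i}\asymp\sqrt{p_i/c_n}$, whence
\[
|I^{**}_{j_{p_i}}|\asymp c_n\,j_{p_i}\asymp \sqrt{c_n p_i}\asymp\sqrt{\tfrac{p_i\log n}{n}}.
\]
Therefore $\Devstar_{f_i}(n,p_i) = n\max_{j'\in[j_{p_i}-1,j_{p_i}+1]}\Dev_{f_i}(\degree,I^{**}_{j'})\lesssim n\cdot L\sqrt{p_i\log n/n}/\log n = L\sqrt{np_i/\log n}$. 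Dividing by $n$ and summing over $i$ yields $\tfrac{1}{n}\sum_i \Devstar_{f_i}(n,p_i) \lesssim \sum_i L\sqrt{p_i/(n\log n)}$, and Cauchy--Schwarz with $\sum_i p_i=1$ further bounds this by $L\sqrt{k/(n\log n)}$, completing the bias half.

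The main obstacle I anticipate is establishing the correct $p_i$-adaptive scaling of $|I^{**}_{j_{p_i}}|$ (and hence of the local min-max deviation) rather than the naive worst-case bound $\max_j|I^{**}_j|\asymp\sqrt{\log n/n}$, which would only give the loose bound $Lk\sqrt{1/(n\log n)}$. The careful accounting above, exploiting the quadratic spacing of the partition, is what gives the sharp per-symbol factor $\sqrt{p_i}$ and in turn the desired $\sqrt{k/(n\log n)}$ after summation; verifying that Lemma~\ref{lipb} applies on each $I^{**}_j$ with the correct dependence on $|I^{**}_j|$ (and not merely on the unit interval) is the one place where one must be careful rather than purely mechanical.
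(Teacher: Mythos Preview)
Your variance argument is correct and matches the paper: the Lipschitz assumption gives $\Lipstar_{f_i}(n,p_i)\le L$ uniformly, and $\sum_i p_i=1$ finishes it.

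The bias argument, however, has a real gap at small $p_i$. Your asymptotic $j_{p_i}\asymp\sqrt{p_i/c_n}$ is only an upper bound in general: since $j_{p_i}\ge 1$ always, for $p_i\ll c_n=c\log n/n$ the left side stays bounded away from zero while the right side tends to zero. In that regime $|I^{**}_{j'}|\asymp c_n$ for the relevant $j'$, so your uniform approximation bound $\Dev_{f_i}(\degree,I^{**}_{j'})\lesssim L|I^{**}_{j'}|/\degree$ only produces a per-symbol contribution of order $L/n$, not $L\sqrt{p_i/(n\log n)}$. Summed over $[k]$ this is $Lk/n$, which dominates $L\sqrt{k/(n\log n)}$ whenever $k\gtrsim n/\log n$ --- exactly the large-alphabet regime the theorem targets. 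So neither the per-symbol inequality nor the final $\sqrt{k/(n\log n)}$ bound follows from your argument in general.

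The paper closes this gap not by bounding $\Devstar$ more cleverly but by exploiting the \emph{pointwise} form of Lemma~\ref{lipb}: for $j'\le 3$ the left endpoint of $I^{**}_{j'}$ is $0$, so the lemma gives the sharper estimate $|\tilde f_{i,j'}(p_i)-f_i(p_i)|\lesssim L\sqrt{|I^{**}_{j'}|\cdot p_i}/\degree\asymp L\sqrt{p_i/(n\log n)}$, recovering the $\sqrt{p_i}$ factor you need. This requires actually using the Lemma~\ref{lipb} polynomial (rather than the min-max polynomial) on those first few intervals and checking that the rest of the bias/variance analysis is unaffected; it is not something you can read off from the uniform quantity $\Devstar_{f_i}(n,p_i)$ appearing in Theorem~\ref{cor:sample_guarantees}. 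For $j_{p_i}>3$ your argument and the paper's coincide.
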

\begin{proof}
Without loss of generality, we assume that all the $f_i$'s have Lipschitz constants uniformly bounded by $1$. 
The derivations in Section~\ref{sec:bias} and~\ref{cor2} imply
\begin{align*}
\Abs{\EE[\hat{f}^*(X^N)]-f(\vec{p})}
&\lesssim\frac{1}{n^3}+\sum_{i\in[k]}\max_{j'\in[j_{p_i}-1,j_{p_i}+1]}|\tilde{f}_{i,j'}(p_i)-f(p_i)|,
\end{align*}
Here, for $j'>3$, we choose $\tilde{f}_{i,j'}(x)$ to be the min-max polynomial defined in Section~\ref{sec:estconst}; for $j'\le 3$, we employ the polynomials used in Lemma~\ref{lipb} instead.
Note that the latter polynomials may not be
the min-max polynomials. However, this would not affect our analysis as our proof in Section~\ref{sec:prop} also holds for these polynomials (simply change the definition of $\Devstar_{g}(2n,{p})$). 

For any symbol $i$ satisfying $j_{p_i}\le 3$, 
\[
\max_{j'\in[j_{p_i}-1,j_{p_i}+1]}|\tilde{f}_{i,j'}(p_i)-f_i(p_i)|
\lesssim \max_{j'\in[j_{p_i}-1,j_{p_i}+1]} \frac{\sqrt{|I_{j'}|(p_i-0)}}{d_n}
\asymp  \frac{\sqrt{\frac{log n}{n}p_i}}{\log n}
= \sqrt{\frac{p_i}{n\log n}}.
\]
On the other hand, applying Lemma~\ref{lipb} and the definition of min-max polynomials to our case implies that for any symbol $i$ satisfying $j_{p_i}> 3$, 
\[
\max_{j'\in[j_{p_i}-1,j_{p_i}+1]}|\tilde{f}_{i,j'}(p_i)-f_i(p_i)|
\lesssim \max_{j'\in[j_{p_i}-1,j_{p_i}+1]} \frac{|I_{j'}|}{d_n}
\asymp  \frac{ j_{p_i}}{n}
\]
and
\[
p_i\in I^{**}_{p_i}=c\frac{\log n}{n}[(j_{p_i}-3)^2, (j_{p_i}+2)^2],
\]
or equivalently,
\[
j_{p_i}
\in \Br{\sqrt{\frac{np_i}{c\log n}}-2, \sqrt{\frac{np_i}{c\log n}}+3}
\subseteq \Br{\sqrt{\frac{np_i}{c\log n}}-2, 4\sqrt{\frac{np_i}{c\log n}}}.
\]
Therefore, 
\[
\max_{j'\in[j_{p_i}-1,j_{p_i}+1]}|\tilde{f}_{i,j'}(p_i)-f_i(p_i)|
\lesssim \frac{ j_{p_i}}{n}
\le \frac{1}{n}\cdot 4\sqrt{\frac{np_i}{c\log n}}
\lesssim \sqrt{\frac{p_i}{n\log n}}.
\]
The above result together with the Cauchy-Schwarz inequality implies
\[
\Abs{\EE[\hat{f}^*(X^N)]-f(\vec{p})}
\lesssim\frac{1}{n^3}+\sum_{i\in[k]}\sqrt{\frac{p_i}{n\log n}}\leq 2\sum_{i\in[k]}\sqrt{\frac{p_i}{n\log n}}\leq 2\sqrt{\frac{k}{n\log n}},
\]
where the second inequality follows by observing $\sqrt{a+b}\leq \sqrt{a}+\sqrt{b}$. Analogously, by previous results, we can bound the variance of $\hat f^*$ as follows,
\[
\Var(\hat{f}^*(X^N))
\lesssim\frac{1}{n^5}+\frac{\log n}{n^{1-3\lambda}}\sum_{i\in [k]}\Paren{ \Lipstar_{f_i}(2n, {p_i})}^2\cdot p_i.
\]
By the definition of $\Lipstar_{f_i}$ and the assumption that $f_i$ is $1$-Lispchitz, 
\[
\Lipstar_{f_i}(2n, {p_i})=\max_{j'\in[j_{p_i}-1,j_{p_i}+1]}\Brace{\sup_{x,y\in I_{j'}^{**}, |y-x|\ge 1/n} \frac{\Abs{f_i\Paren{y}-f_i\Paren{x}}}{|y-x|}}\leq 1.
\]
Hence, 
\[
\Var(\hat{f}^*(X^N))
\lesssim \frac{1}{n^5}+\frac{\log n}{n^{1-3\lambda}}\sum_{i\in [k]} p_i
\le \frac{1}{n^5}+\frac{\log n}{n^{1-3\lambda}}
\leq \frac{1}{n^{1-4\lambda}}.\qedhere
\]
\end{proof}

\subsection{Private property estimation}\label{sec:dp}
According to~\citep{priv_prop}, we can construct a differentially private property estimator $\hat{f}^*_{\textit{DP}}$ by first applying $\hat{f}^*$ to the sample sequence, and then privatizing its estimate through adding Laplace noise. The following lemma characterizes the sample complexity of $\hat{f}^*_{\textit{DP}}$, and enables us to upper bound the private sample complexity of estimating general additive properties.
\begin{lemma}\label{lem:dp}
There exists a universal constant $c^*$ such that
\[
C_f(\hat{f}^*_{\text{DP}},2\varepsilon,1/3,2\alpha) \leq \frac{c^*}{4}\left\{C_f(\hat{f}^*,\varepsilon,{1}/{3})+\min\Brace{n:\ S(\hat{f}^*, n)\leq \varepsilon\alpha}\right\}.
\]
The right-hand side also upperly bounds $C_f(2\varepsilon,1/3, 2\alpha)$.
\end{lemma}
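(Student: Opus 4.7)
The plan is to define $\hat{f}^*_{\textit{DP}}(X^n):=\hat{f}^*(X^n)+Z$ with $Z\sim\mathrm{Lap}(S(\hat{f}^*,n)/\alpha)$ an independent Laplace random variable, and then control its privacy and accuracy separately. Because the sensitivity of $\hat{f}^*$ on $n$-length inputs is at most $S(\hat{f}^*,n)$, the standard Laplace-mechanism argument (bounding the ratio of output densities between neighboring inputs by $e^\alpha$) shows $\hat{f}^*_{\textit{DP}}$ is $\alpha$-differentially private, hence also $2\alpha$-private.

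\textbf{Accuracy via union bound.} By the triangle inequality,
\[
\Abs{\hat{f}^*_{\textit{DP}}(X^n)-f(\vec p)}\leq\Abs{\hat{f}^*(X^n)-f(\vec p)}+|Z|.
\]
I would choose $n$ large enough that each summand is at most $\varepsilon$ with probability at least $5/6$; a union bound then yields total error at most $2\varepsilon$ with probability at least $2/3$. For the first summand, boosting the confidence of $\hat{f}^*$ from $2/3$ to $5/6$ via the median trick costs only a constant factor in sample size, so it is achievable whenever $n$ exceeds a constant multiple of $C_f(\hat{f}^*,\varepsilon,1/3)$. For the second summand, $\Pr(|Z|>\varepsilon)=\exp(-\varepsilon\alpha/S(\hat{f}^*,n))$, which is at most $1/6$ as soon as $S(\hat{f}^*,n)\leq\varepsilon\alpha/\log 6$; by the (approximate) monotonicity of $n\mapsto S(\hat{f}^*,n)$, which follows from the explicit sensitivity bound $\lesssim\max_{i}S^*_{f_i}(n)/n^{1-\lambda}$ derived in Section~\ref{sec:conc}, this is in turn achieved when $n$ exceeds a constant multiple of $\min\{n':S(\hat{f}^*,n')\leq\varepsilon\alpha\}$.

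\textbf{Finishing up.} Absorbing the constants from the median trick and from the Laplace tail bound into a single universal constant $c^*$ yields
\[
n\geq\frac{c^*}{4}\Brace{C_f(\hat{f}^*,\varepsilon,1/3)+\min\Brace{n':S(\hat{f}^*,n')\leq\varepsilon\alpha}}
\]
as a sufficient sample size for $\hat{f}^*_{\textit{DP}}$ to satisfy the $(2\varepsilon,1/3,2\alpha)$-guarantee, proving the first inequality. The second claim is immediate: since $\hat{f}^*_{\textit{DP}}$ is itself a legitimate $(2\varepsilon,1/3,2\alpha)$-private estimator at this sample size, the minimum private sample complexity $C_f(2\varepsilon,1/3,2\alpha)$ satisfies the same upper bound.

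\textbf{Anticipated obstacle.} The non-routine step is checking that the sensitivity function $n\mapsto S(\hat{f}^*,n)$ behaves well enough that tightening its threshold by the logarithmic factor $\log 6$ changes the required sample size only by a constant — otherwise the threshold gap could inflate the second term of the bound. This is not deep but requires invoking the explicit per-symbol sensitivity bound from Section~\ref{sec:conc}, together with the observation that the bound depends on $n$ polynomially, so shrinking the allowed sensitivity by a constant factor costs only a constant-factor increase in $n$.
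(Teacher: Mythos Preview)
Your construction and overall strategy coincide with the paper's: the paper does not give a detailed proof of this lemma but simply defines $\hat f^*_{\textit{DP}}:=\hat f^*+\mathrm{Lap}(S(\hat f^*,n)/\alpha)$, appeals to the standard Laplace-mechanism privacy guarantee, and cites prior work for the accuracy analysis. Your triangle-inequality/union-bound decomposition and the monotonicity remark about $n\mapsto S(\hat f^*,n)$ are exactly the right ingredients.

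There is one soft spot. You justify the step ``boost the confidence of $\hat f^*$ from $2/3$ to $5/6$'' by the median trick, but the median trick replaces $\hat f^*$ by a \emph{different} estimator (the median of several independent copies on disjoint subsamples). That would change the object $\hat f^*_{\textit{DP}}$ whose sample complexity the lemma bounds, and in any case the median trick shows $C_f(\text{median-of-}\hat f^*,\varepsilon,1/6)\lesssim C_f(\hat f^*,\varepsilon,1/3)$, not $C_f(\hat f^*,\varepsilon,1/6)\lesssim C_f(\hat f^*,\varepsilon,1/3)$. The clean fix is to drop the median trick and instead invoke the exponential tail bound for $\hat f^*$ from Theorem~\ref{cor:sample_guarantees}: since the failure probability decays like $\exp(-\Theta(\varepsilon^2 n^{1-2\lambda}/\max_i S^*_{f_i}(n)^2))$, inflating $n$ by a constant factor already drives the failure probability from $1/3$ down to $1/6$. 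This is in fact how the paper proceeds immediately after stating the lemma, listing the Theorem~\ref{cor:sample_guarantees} conditions on $n$ that simultaneously control the accuracy of $\hat f^*$ and its sensitivity. With that substitution your argument goes through unchanged.
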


By Theorem~\ref{cor:sample_guarantees}, for $\hat{f}^*(X^n)$ to achieve an accuracy of $\varepsilon$ with probability at least $2/3$, for all $\vec{p}\in \Delta_k$, it suffices for the sampling parameter $n$ to satisfy the following three conditions:
\[
n\ge \Paren{\frac{4T}{\varepsilon}}^{\frac13}\!\!\!\!,\ \frac{n}{\Devstar_{f_i}(n)}\geq \frac{20k}{\varepsilon}, \text{ and }\ \frac{n^{\frac{1}{2}-\lambda}}{\Sensitivity_{f_i}(n)}\geq \frac{16\sqrt{\log12}}{\varepsilon}\ ,\forall i.
\]
where $T$ is a uniform upper bound on $|f_i(x)|,\forall i\in [k], x\in[0,1]$. To further make the estimator's $n$-sensitivity smaller than $\alpha\varepsilon$, the sampling parameter $n$ should also satisfy the condition:
\[
\frac{n^{1-\lambda}}{\Sensitivity_{f_i}(n)}\geq \frac{1}{\alpha\varepsilon}\ ,\forall i.
\]
Define $n_f(2\varepsilon, 2\alpha)$ as the smallest $n$ satisfying all the four inequalities above. Then,
\setcounter{theorem}{5}
\begin{theorem}
The $(\varepsilon, 1/3, \alpha)$-private sample complexity for 
any additive property $f$ satisfies
\[
C_f(\varepsilon,1/3,\alpha) \le n_f(\varepsilon, \alpha).
\]
\end{theorem}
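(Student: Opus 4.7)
The plan is to combine Theorem~\ref{cor:sample_guarantees}, which controls the bias, variance, and exponential tail of the non-private estimator $\hat{f}^*$, with Lemma~\ref{lem:dp}, which reduces private sample complexity to two ingredients: the non-private sample complexity of $\hat{f}^*$ and the smallest sample size making its sensitivity $\le \varepsilon\alpha$. The four inequalities defining $n_f$ are arranged so that each one individually dominates one of the required quantities.

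First, I would verify that for every $n\ge n_f(\varepsilon,\alpha)$, the non-private estimator $\hat{f}^*$ satisfies an $(\varepsilon,1/3)$-guarantee. By Theorem~\ref{cor:sample_guarantees}, the absolute bias is at most
\[
\frac{1}{n^3}+\frac{1}{n}\sum_{i\in[k]}\Devstar_{f_i}(n,p_i)\le \frac{1}{n^3}+\frac{k}{n}\max_{i\in[k]}\Devstar_{f_i}(n),
\]
and the first two inequalities in the definition of $n_f$ force both summands to be at most $\varepsilon/4$ (the constants $4T$ and $20k$ absorb the uniform bound $T$ on $|f_i|$ and the factor-of-$k$ loss). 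The third inequality applies the tail bound of Theorem~\ref{cor:sample_guarantees} with the explicit constant $16\sqrt{\log 12}$ chosen so that
\[
\Pr\Paren{\Abs{\hat{f}^*(X^n)-\EE[\hat{f}^*(X^n)]}>\varepsilon/2}\le 4\exp\Paren{-\Theta(\log 12)}\le \tfrac{1}{6}.
\]
Combining bias and deviation via the triangle inequality yields $|\hat{f}^*(X^n)-f(\vec p)|\le \varepsilon$ with probability $\ge 2/3$, hence $C_f(\hat{f}^*,\varepsilon,1/3)\le n_f(\varepsilon,\alpha)$.

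Next, I would match the fourth inequality to the sensitivity of $\hat{f}^*$. From Appendices~\ref{sec:conc} and~\ref{cor2}, one has $S(\hat{f}^*,n)\le (4\max_i \Sensitivity_{f_i}(n))/n^{1-\lambda}$; the fourth condition $n^{1-\lambda}/\Sensitivity_{f_i}(n)\ge 1/(\alpha\varepsilon)$ (with constants chosen so the factor $4$ is absorbed) therefore guarantees $S(\hat{f}^*,n)\le \varepsilon\alpha$, so that $\min\{n: S(\hat{f}^*,n)\le \varepsilon\alpha\}\le n_f(\varepsilon,\alpha)$ as well.

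Finally, invoking Lemma~\ref{lem:dp} with the above two facts: since $n_f(\varepsilon,\alpha)$ simultaneously dominates $C_f(\hat{f}^*,\varepsilon,1/3)$ and the sensitivity threshold, the private estimator $\hat{f}^*_{\mathit{DP}}$ obtained by Laplace-noise privatization achieves an $(\varepsilon,1/3,\alpha)$-guarantee, and so $C_f(\varepsilon,1/3,\alpha)\le C_f(\hat{f}^*_{\mathit{DP}},\varepsilon,1/3,\alpha)\le n_f(\varepsilon,\alpha)$. The only real hurdle is bookkeeping: Lemma~\ref{lem:dp} is stated with factors $2\varepsilon$ and $2\alpha$ on the left-hand side and a multiplier $c^*/4$ on the right, so the numerical constants $4T$, $20k$, $16\sqrt{\log 12}$, and $1$ appearing inside the four defining inequalities of $n_f$ must be selected (with $\varepsilon$ and $\alpha$ possibly rescaled by absolute constants) so that the resulting bound on $C_f(\hat{f}^*_{\mathit{DP}},2\varepsilon,1/3,2\alpha)$ collapses into the clean form $n_f(\varepsilon,\alpha)$. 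Once those constants are fixed, the proof is a direct assembly with no substantive new estimates required.
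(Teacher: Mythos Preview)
Your proposal is correct and follows essentially the same approach as the paper: the paper's ``proof'' is in fact the paragraph preceding the theorem, which already explains that the first three conditions (via Theorem~\ref{cor:sample_guarantees}) guarantee the $(\varepsilon,1/3)$ accuracy of $\hat f^*$, the fourth condition guarantees the sensitivity bound, and then Lemma~\ref{lem:dp} is invoked. Your treatment is more explicit about how each inequality feeds into the bias, tail, and sensitivity bounds, and you correctly flag the $2\varepsilon$/$2\alpha$ and $c^*/4$ bookkeeping from Lemma~\ref{lem:dp}; the paper handles this by defining the quantity as $n_f(2\varepsilon,2\alpha)$ and then rescaling in the theorem statement.
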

\subsection{High-probability property estimation}\label{sec:highprob}
In this section, we present tight upper and lower bounds on the $(\varepsilon,\delta)$-sample complexity of estimating various properties. The error parameter $\varepsilon$ can take any value in $(0,1)$. All the upper bounds follow from Theorem~\ref{cor:sample_guarantees}. Below we focus on deriving the lower bounds. 

\subsubsection*{Shannon entropy}
For any absolute constant $\beta\in(0,1)$,
\[
\frac{k}{\varepsilon \log k}+\log \frac{1}{\delta}\cdot\frac{\log^2 k}{\varepsilon^2}\lesssim C_{f}(\varepsilon,\delta).
\]
The first part of the lower bound follows directly from~\citep{yihongentro}. To show the second part of the lower bound, let $\varepsilon'\in(0,1)$ be a parameter to be determined later, and consider the following~\citep{yihongentro}  two distributions in $\Delta_k$,
\[
\vec p_1:=\Paren{\frac{1-\varepsilon'}{3(k-1)},\ldots,\frac{1-\varepsilon'}{3(k-1)},\frac{2+\varepsilon'}{3}}
\]
and 
\[
\vec p_2:=\Paren{\frac{1}{3(k-1)},\ldots,\frac{1}{3(k-1)},\frac{2}{3}}.
\]
The entropy difference between these two distributions is
\begin{align*}
H(\vec p_2)-H(\vec p_1)
&=\frac{1-\varepsilon'}{3}\log \frac{1-\varepsilon'}{3(k-1)}+{\frac{2+\varepsilon'}{3}}\log {\frac{2+\varepsilon'}{3}}-\frac{1}{3}\log \frac{1}{3(k-1)}-\frac{2}{3}\log {\frac{2}{3}}\\
&=\frac{\varepsilon'}{3}\log (2(k-1)) + \frac{1-\varepsilon'}{3}\log \Paren{1-\varepsilon'}+{\frac{2+\varepsilon'}{3}}\log {\frac{2+\varepsilon'}{2}}\\
&\geq \frac{\varepsilon'}{3}\log (2e^{-1}(k-1)).
\end{align*}
For sufficiently large $k$, choose $\varepsilon'={9\varepsilon}/{\log (2e^{-1}(k-1))}$. The difference between $H(\vec p_1)$ and $H(\vec p_2)$ is at least $3\varepsilon$. 

On one hand, since $\vec p_1$ and $\vec p_2$ differ by $2\varepsilon'/3$ in $\ell_1$ distance, any algorithm that distinguishes the two distributions with confidence $1-\delta$ requires at least $\Omega(\frac{1}{\varepsilon'^2}\log\frac{1}{\delta})$ samples.\vspace{-0.2em} On the other hand, any entropy estimator $\hat f$ that utilizes $C_f(\hat{f},\varepsilon, \delta)$ samples can be used to distinguish $\vec p_1$ and $\vec p_2$ with confidence $1-\delta$. This yields the desired lower bound.

\subsubsection*{Normalized support size} 
The lower bound follows from~\citep{yihongsupp}. 

\subsubsection*{Power sum}
For any absolute constants $\beta\in(0,1)$ and $a\in(1/2,1)$,
\[
\frac{k^{\frac{1}{a}}}{\varepsilon^{\frac{1}{a}} \log k}+\log \frac{1}{\delta}\cdot\frac{k^{2-2a}}{\varepsilon^2}
\lesssim 
C_{f}(\varepsilon,\delta)
\]
and
\[
C_{f}(\varepsilon,\delta)
\lesssim
\frac{k^{\frac{1}{a}}}{\varepsilon^{\frac{1}{a}} \log k}+\Br{\Paren{\log \frac{1}{\delta}\cdot\frac{1}{\varepsilon^2}}^{\frac{1}{2a-1}}}^{1+\beta}.
\]
The first part of the lower bound follows from~\cite{jiaoentro}. Analogously, to show the second part of the lower bound, let $\varepsilon''\in(0,1)$ be a parameter to be determined later, and consider the following two distributions in $\Delta_k$,
\[
\vec p_3:=\Paren{\frac{1-\varepsilon''}{3(k-1)},\ldots,\frac{1-\varepsilon''}{3(k-1)},\frac{2+\varepsilon''}{3}}
\]
and 
\[
\vec p_2:=\Paren{\frac{1}{3(k-1)},\ldots,\frac{1}{3(k-1)},\frac{2}{3}}.
\]
The difference between the power sums of these two distributions satisfies
\begin{align*}
P_a(\vec p_2)-P_a(\vec p_3)
&=(k-1)\Paren{\frac{1}{3(k-1)}}^a+\Paren{\frac{2}{3}}^a-(k-1)\Paren{\frac{1-\varepsilon''}{3(k-1)}}^a-\Paren{\frac{2+\varepsilon''}{3}}^a\\
&=\frac{(k-1)^{1-a}}{3^a}\Paren{1-(1-\varepsilon'')^a}+\Paren{\frac{2}{3}}^a-\Paren{\frac{2+\varepsilon''}{3}}^a\\
&\geq \frac{(k-1)^{1-a}}{3^a}a\Paren{\varepsilon''-\frac{\varepsilon''^2}{2}}+\Paren{\frac{2}{3}}^a\Paren{1-a\Paren{1+\frac{\varepsilon''}{2}}}\\
&\geq \frac{a\varepsilon''}{2\cdot 3^a}\Paren{(k-1)^{1-a}-2^a}.
\end{align*}
For $k$ that is sufficiently large, choose parameter $\varepsilon''={6\varepsilon \cdot 3^a}/\Paren{a(k-1)^{1-a}-a\cdot2^a}$. The difference between $P_a(\vec p_2)$ and $P_a(\vec p_3)$ is at least $3\varepsilon$. 

The desired lower bound follows from the same reasoning as in the Shannon-entropy case.

\bibliography{est_ref}

\end{document}